\documentclass[letterpaper,11pt]{article}

\usepackage[margin=1in]{geometry}
\usepackage{amsfonts,amsmath,amssymb,amsthm,graphicx,bm}
\usepackage{mathtools}
\usepackage{mathrsfs}
\usepackage{ifthen}
\usepackage{changepage}
\usepackage{enumerate}
\usepackage{scalerel}
\usepackage{accents}
\usepackage{enumitem}
\usepackage{float}      
\usepackage[utf8]{inputenc} 
\usepackage[T1]{fontenc} 
\usepackage{csquotes}
\usepackage{comment}
\usepackage{makecell}
\usepackage{booktabs}
\usepackage{xfrac}
\usepackage{thm-restate}
\usepackage{tikz}
\usepackage{pgfplots}
\pgfplotsset{compat=1.18}
\usepgfplotslibrary{groupplots}
\usepackage{subcaption}
\usepackage{graphicx}
\usepackage{multirow}
\usetikzlibrary{patterns}
\usepgfplotslibrary{fillbetween}
\usetikzlibrary{intersections}
\usepackage{wrapfig}
\usetikzlibrary{positioning,fit,calc,arrows.meta}
\usepackage{fix-cm}
\usepackage{xcolor}

\usepackage{natbib}
\bibpunct{(}{)}{;}{a}{,}{,}

\usepackage{float} 
\usepackage{algorithm}
\usepackage{algpseudocode}
\algrenewcommand{\algorithmiccomment}[1]{{\footnotesize\ttfamily\textcolor{blue}{/* #1 */}}}

\allowdisplaybreaks
\theoremstyle{plain}
\newtheorem{theorem}{Theorem}[section]

\newtheorem{proposition}[theorem]{Proposition}

\theoremstyle{plain}
\newtheorem{definition}{Definition}[section] 
\newtheorem{example}[definition]{Example}

\theoremstyle{plain}


\usepackage[colorlinks=true, linkcolor=blue!70!black, citecolor=blue!70!black, urlcolor=blue, breaklinks=true]{hyperref}
\usepackage{cleveref}
\Crefname{algocf}{Algorithm}{Algorithms}
\crefname{claim}{claim}{claims}
\crefname{assumption}{assumption}{assumptions}
\Crefname{assumption}{Assumption}{Assumptions}

\newcommand{\xhdr}[1]{\vspace{6pt}\noindent{\bf {#1.}}}

\usepackage[suppress]{color-edits}
\addauthor{wt}{red}
\addauthor{pp}{blue}

\newcommand{\bs}[1]{\boldsymbol{#1}}

\newcommand{\condition}{\;\middle|\;}

\newcommand{\myA}{{\cal A}}
\newcommand{\myF}{{\cal F}}

\DeclarePairedDelimiter\ceil{\lceil}{\rceil}

\newcommand{\myexp}[1]{\exp\left(#1\right)}
\newcommand{\prob}[1]{\mathbb{P}\left(#1\right)}

\newcommand{\armf}[1]{\mathcal{A}^f_{#1}}
\newcommand{\armg}[1]{\mathcal{A}^g_{#1}}
\newcommand{\optf}{1^f}
\newcommand{\optg}{1^g}
\newcommand{\myDelta}[1]{\widetilde{\Delta}_{#1}}
\newcommand{\Deltaf}[1]{\Delta^f_{#1}}
\newcommand{\Deltag}[1]{\Delta^g_{#1}}
\newcommand{\empiricalmeanf}[1]{\hat{\mu}^f_{#1}}
\newcommand{\empiricalmeang}[1]{\hat{\mu}^g_{#1}}
\newcommand{\numround}{\widetilde{L}}
\newcommand{\roundlen}[2]{m^{#1}_{#2}}
\newcommand{\marginroundf}[1]{L_{#1}^f}
\newcommand{\marginroundg}[1]{L_{#1}^g}
\newcommand{\fgdiff}{\delta}
\newcommand{\epochfactor}{\alpha}
\newcommand{\epochfactorg}{\beta}

\newcounter{wtcounter}

\newcounter{jpcounter}

\newcommand{\expect}[2][]{\mathbb{E}\ifthenelse{\not\equal{}{#1}}{_{#1}}{}\!\left[{\def\givenn{\middle|}#2}\right]}

\newcommand{\cc}[1]{\ensuremath{\mathsf{#1}}}

\newcommand{\Reg}[2][]{\text{\bf REG}\ifthenelse{\not\equal{}{#1}}{_{#1}}{}\!\left[{\def\givenn{\middle|}#2}\right]}
\newcommand{\Rega}[2][]{\text{\bf REG}_1\ifthenelse{\not\equal{}{#1}}{_{#1}}{}\!\left[{\def\givenn{\middle|}#2}\right]}
\newcommand{\Regb}[2][]{\text{\bf REG}_2\ifthenelse{\not\equal{}{#1}}{_{#1}}{}\!\left[{\def\givenn{\middle|}#2}\right]}
\newcommand{\Regc}[2][]{\text{\bf REG}_3\ifthenelse{\not\equal{}{#1}}{_{#1}}{}\!\left[{\def\givenn{\middle|}#2}\right]}

\newcommand{\Rew}[2][]{\text{\bf REW}\ifthenelse{\not\equal{}{#1}}{_{#1}}{}\!\left[{\def\givenn{\middle|}#2}\right]}

\newcommand{\OPT}{\cc{OPT}}

\newcommand{\armSpace}{\myA}
\newcommand{\armNum}{K}

\newcommand{\feaVec}{\boldsymbol{x}}
\newcommand{\feaSpace}{\mathcal{X}}
\newcommand{\feaDimen}{d}
\newcommand{\lipschitzConst}{L}
\newcommand{\rewFn}{f}
\newcommand{\rew}{R}
\newcommand{\feaDist}{\PP}

\newcommand{\pickArm}{I}
\newcommand{\switchRound}{N}
\newcommand{\unknownVec}{\boldsymbol{\theta}}

\newcommand{\commitArm}{J}
\newcommand{\rewFnCommit}{g}

\newcommand{\policy}{\pi}
\newcommand{\expPolicy}{\pi_{\cc{exp}}}
\newcommand{\comPolicy}{\pi_{\cc{com}}}
\newcommand{\commitPolicy}{\pickArm_{\switchRound+1}}

\newcommand{\optArm}{I^*}

\renewcommand{\emph}[1]{\textit{#1}}

\newcommand{\history}{\mathcal{H}}
\newcommand{\timeHorizon}{T}

\newcommand{\UCB}{\text{UCB}}

\newcommand{\RAEC}{\text{RAEC}}
\newcommand{\ROSCOC}{\text{ROSCOC}}

\newcommand{\instance}{\mathcal{I}}

\newcommand{\PP}{\mathbb{P}}
\newcommand{\RR}{\mathbb{R}}

\newcommand{\myprob}[2][]{\mathbb{P}\ifthenelse{\not\equal{}{#1}}{_{#1}}{}\!\left[{\def\givenn{\middle|}#2}\right]}

\newcommand{\primed}{^{\dagger}}

\newcommand{\vx}{\boldsymbol{x}}
\newcommand{\iprod}[2]{\langle #1, #2 \rangle}
\DeclareMathOperator*{\argmin}{arg\,min}
\DeclareMathOperator*{\argmax}{arg\,max}

\begin{document}

\title{Short-Term Pain for Long-Term Gain:\\Adaptive Experiment with Post-Commitment Reward Shift
\thanks{A one-page abstract appeared in ACM EC'26.  We thank anonymous referees for valuable feedback.}}
\author{Puping Jiang\thanks{Antai College of Economics and Management, Shanghai Jiao Tong University. Email: jiangpuping@sjtu.edu.cn} \and Wei Tang\thanks{The Chinese University of Hong Kong, Email: weitang@cuhk.edu.hk}}
\date{}

\maketitle

\begin{abstract}

Decision-makers in learning environments face a dilemma when their short-term optimal actions may not favor their long-term benefits the most. To understand the fundamental tradeoff behind the dilemma, we study adaptive experimentation with post-commitment reward shifts. During an experiment phase, the decision-maker may adaptively test multiple options; during a subsequent commitment phase, the decision-maker must commit to a single option, whose reward may differ from its pre-commitment reward. 
We propose the Reserved Arm Eliminations for Commitment ({\RAEC}) algorithm, which reserves a predetermined portion of the experiment phase to identify the best post-shift option while using the remaining rounds to minimize short-run regret. We establish regret upper bounds for {\RAEC} across all parameter regimes and matching minimax lower bounds, providing a tight characterization of the cost of balancing short-term performance and long-term commitment. 

We also study two extensions. With prior structural knowledge linking pre- and post-shift rewards, we show that correctly identifying the ranking-changing component of the shift is more important than estimating its absolute magnitude. For settings with concave commitment rewards and portfolio choice, we develop the Reserved Online Stochastic Convex Optimization for Commitment (ROSCOC) algorithm, which directly converts its reserved exploration history into a commitment portfolio and achieves tight regret bound.
Finally, we also conduct numerical experiments which confirm that our proposed algorithms achieve the desired regret predicted by our theory, and also outperform other baseline algorithms.

\end{abstract}


\newpage
\section{Introduction}


Balancing short‐term benefits against long‐term goals, especially in an uncertain decision-making environment, has long challenged decision-makers.
The dilemma is especially acute when the decision that maximizes near‐term rewards conflicts with what is most desirable in the long run.
It is often the case that the decision-maker can experiment with multiple options to identify which might be most suitable for the current decision task, yet they must eventually commit to an option for long-run goals.

Practitioners and researchers alike have debated how to reconcile these competing objectives in applications ranging from online platform experimentation to technology development and product manufacturing. 
For example, 
online platforms have experienced significant  regulatory environment policy shifts with the EU's General Data Protection Regulation (GDPR) -- adopted in 2016 and applicable from May 25, 2018 \citep{JSG-23,ACS-23}.
During the two-year implementation window before GDPR took effect,
a platform needs to test out new strategies, preparing for a platform-wide transition after the environment shift.
Examples of pre-GDPR testing and staged rollouts include Criteo's pre-GDPR consent-interface tests \citep{tiku2018wired}, IAB Europe's launch of the Transparency \& Consent Framework in April 2018 \citep{iabeurope_tcf_v1_2018}, and Google's March 2018 GDPR policy updates \citep{google2018gdpr_ads_policy}.
Anticipated policy shifts -- ranging from environmental regulation (e.g., Clean Air Act; EU ETS) to trade and geopolitical tensions -- have also repeatedly altered firms' supply chain, production, and investment decisions.
For example, \cite{SW-18} show U.S.\ environmental regulation (Clean Air Act) drove large changes in manufacturing emissions -- clear evidence that actual policy shifts reshape firm manufacturing decisions; \cite{CMMW-25} provide firm-level evidence that carbon pricing under the EU ETS changes production choices and investments.
Firms have to experiment with substitute production plans in order to achieve a resilient transition \citep{T-09, AIS-17}.

\Cref{fig:exp_com} depicts the high-level operational problem the firms encounter in the environment discussed in this paper.
\begin{figure}[h]
\centering
\begin{tikzpicture}[x=1cm,y=1cm]

\def\W{4}      
\def\H{3}      
\def\Gap{1.2}  

\draw[thick] (0,0) rectangle (\W,\H);
\draw[thick] (0,\H/3) -- (\W,\H/3);

\draw[thick] (\W+\Gap,0) rectangle (2*\W+\Gap,\H);
\draw[thick] (\W+\Gap,\H/3) -- (2*\W+\Gap,\H/3);

\draw[dashed,thick] (\W+\Gap/2,-0.15) -- (\W+\Gap/2,\H+0.15);
\node[above] at (\W+\Gap/2,\H+0.25) {Environment Shift};

\node at (\W/2, 2*\H/3) {\textcolor{gray}{Old Strategy}};
\node at (\W/2, \H/6)   {Experiment Capacity};

\node at (\W+\Gap+\W/2, 2*\H/3) {New Strategy};
\node at (\W+\Gap+\W/2, \H/6)   {\textcolor{gray}{Experiment Capacity}};

\end{tikzpicture}
\caption{Illustration of Experiment and Commitment Transition}
\label{fig:exp_com}
\end{figure}
As illustrated in \Cref{fig:exp_com}, before the anticipated environment shift, firms would typically reserve a relatively small portion of their capacity (for online platforms, this represents a portion of users and computing resources; for manufacturing firms, this represents a portion of production capacity and material resources) for experiments.
The experiment capacity can be used not only to test strategies that might attain promising performance after the environment shift, but also to explore and roll out strategies that could deliver high payoffs under the current operational environment.
While the experiment capacity is flexible for strategy switches, the strategy run on the remaining capacity is generally restricted to be stable due to cost and other practical concerns.
For example, online platforms prefer a smooth and stable experience for the majority of the customers; manufacturers need a relatively long time to set up a new mass-production facility (in which case, the commitment date is the time the mass-production facility is ready to run and the reward shift is the cost structure change from experimental production to mass production).
However, a whole-capacity-wide strategy transition is inevitable after the environmental shift, since the old strategy could become highly unfavorable or even infeasible in the new environment.
The key decisions firms have to make are the experiment design within the experiment capacity and the new strategy to commit to after the shift. 
The decision maker's objective that we focus on in this paper is the aggregated payoff generated by the experiment capacity before the shift and the new committed strategy run on the remaining capacity after the shift, as highlighted in \Cref{fig:exp_com}.
The fundamental question we aim to address is:
\begin{center}
   \textit{What is the rule of thumb for the optimal experiment design \\
   when there is an anticipated environment shift?} 
\end{center}
Such a decision dilemma is becoming more and more relevant as many industrial sectors are facing more and more anticipated economic and political environment shifts, and aim to build more resilient and sustainable supply chains \citep{DR-2019}.
Earlier academic work derives qualitative observations on firms' optimal exploration effort under changing environments (see, e.g., \citealp{PL-2012}).
Our paper advances the understanding of this problem from an adaptive learning perspective via a parsimonious model.
Before we briefly mention the high-level framework of our model, we elaborate more about the two aforementioned examples here.
\begin{example}[Platform regulation shift: GDPR example]
The EU announced GDPR on April 27th, 2016, which was scheduled to be enacted on May 25th, 2018.
GDPR led to a significant impact on leading tech companies.
Anticipating the policy environment change, an online platform could experiment with various operational strategies in preparation for the new rule, including strategies of data collection, advertisement, pricing, etc. 
Consider a platform that aims to transition globally to an interface design more compliant with the new regulations following the enactment of GDPR (empirical evidence shows significant industry-wide strategy switches after the enactment of GDPR, see \citealp{PBBK2022,ACS-23}).
Typically, before the platform-wide transition, the platform would experiment within a relatively small group of users, i.e., beta users. 
Crucially, a strategy that will perform best after the rollout of new regulations may not be optimal for current operations (e.g., failing to comply with GDPR may lead to fines up to $4\%$ of annual global revenue, see, e.g., \citealp{wsj_gdpr_ccpa_privacy_strategies}).
Routing a share of beta traffic to any candidate design yields a random outcome -- \textsc{(conversion-rate, privacy-level)},
which follows a distribution tied to the given design. 
The platform's payoff can be simplified as
$\textsc{profit} - \textsc{legal-cost}$, 
where \textsc{profit} and \textsc{legal-cost} are functions of \textsc{conversion-rate} and \textsc{privacy-level}, respectively. 
At the enforcement date, the \textsc{legal-cost} function shifts, so the same \textsc{privacy-level} may imply a different cost than before.
Finding the optimal strategies for the long run while balancing the short-term gains from the beta users poses a major challenge for the platform's experiment effort allocation.
\end{example}

\begin{example}[Manufacturing supply chain examples]
In manufacturing contexts, state-level regulations limiting gas vehicle development, scheduled enactment of carbon tariffs, and other environmental regulations put automobile manufacturers in a similar position (see \citealp{reuters_eu_automakers_breathing_space_2025}).
The profitability of different car models can be impacted quite differently depending on their carbon emission volumes.
In some industrial contexts, such as the semiconductor industry, export restrictions led by geopolitical tensions force firms in key supply chains to explore substitute solutions before they lose access to critical components or technology (see \citealp{economist_china_reducing_foreign_chip_reliance_2024}).
Different supply chains can be impacted differently depending on their resilience to those export restrictions.
As another example, consider manufacturing firms that would experience per-unit cost reduction after mass production launches.
The mass production facilities often start construction way before the experiments and trial production end.
Some production plans may enjoy more cost reduction than others during mass production. For example, products with cost structures weighted more towards manufacturing expenses can benefit more from automation, learning curve effects, and other factors.
Importantly, firms in manufacturing supply chains typically reserve a portion of their operating capacity for experiments during the grace period, after which mass production of substitutes or transitions of business structures across all capacities must roll out.
\end{example}

There are two key features of these motivating examples:
$(i).$ \textit{Two-phase horizon.}
The time horizon consists of two phases, an \textit{experiment phase} and a \textit{commitment phase}.
The commitment date is known in advance and is typically marked by an exogenous environment shift. 
Before the commitment date, experiments can be conducted due to the flexible nature of the experiment capacity.
A long-term commitment must be made after the commitment date due to the inflexible nature of the global strategy transition, the construction of mass production capacity, etc.
$(ii).$ \textit{Predictable reward shift.}
The reward shift after the policy environment shift is largely predictable.
For example, the legal costs associated with a given interface design are quantifiable in advance under GDPR, as long as the regulations are specified clearly.
Similarly, the taxation, tariffs, or even fines imposed on producing products with high carbon emissions are also quantifiable in advance once the laws are announced.
Export restrictions on key industrial components would turn production plans that heavily rely on those components infeasible, i.e., the rewards simply go to zero.
Such export restrictions are often either announced in advance to give supply chains a grace period or are predictable due to geopolitical games.

As we can see from the above examples, decision-makers (i.e., the online platforms, the manufacturing firms) often face the dilemma of short-term pain versus long-term gain, and it is challenging to determine how much resource should be allocated to exploration in preparation for a future shift.
Our work contributes to resolving the above decision-maker's problem from a quantitative perspective, with the goal of  maximizing the following objective: 
\begin{center}
    Short-term earning$~+~$Long-term earning.
\end{center}

Clearly, the relative size of the long-term earnings compared to the short-term earnings plays a key role in the decision-maker's experiment design.
One possible interpretation of the size factor could be the relative length of the commitment. 
For online platforms, new policy tests can take weeks to months, while the global deployment could last for months to years.
For manufacturing sectors, like the semiconductor and automobile industries, experiments for a new generation of products can take a much longer time, like years, while the commercial lives also typically last for years.
Following this interpretation, we consider an adaptive experiment problem with post-commitment reward shift: 
The decision-maker faces a sequential decision-making problem over a total of $\timeHorizon$ periods and must choose from  $\armNum$ candidate treatments.
The whole process consists of two phases:
In the first phase (the first $\switchRound$ rounds referred to as the {\em experiment phase}), she can experiment with any different treatments.
The decision-maker initially has no prior knowledge of the mean reward of each treatment but can adaptively choose one of the treatments to observe a corresponding random outcome generated from this treatment and collect the corresponding reward.
Importantly, this experimentation budget $\switchRound$ is exogenously determined by the environment.
In the second phase (the remaining $\timeHorizon - \switchRound$ rounds, referred to as the {\em commitment phase}), the decision-maker must commit to a single treatment and implement it throughout this phase.
However, due to anticipated environment changes after $\ppedit{\switchRound}$ periods, 
a reward shift may occur for each treatment in the commitment phase.
As a result, the treatment that appeared to be optimal during the experiment phase may become suboptimal in the commitment phase.

Notice that our objective has another interpretation that can characterize other important problems.
We can think of each treatment (or arm) as having two attributes: the mean reward and an attribute that the decision-maker cares about only after the commitment.
The decision-maker has to select one treatment after the experiment ends.
The decision-maker faces the problem of maximizing the cumulative reward while ensuring that the selected treatment performs well in the second attribute (or, conversely, maximizing the selected treatment's performance in the second attribute while guaranteeing a decent cumulative reward).
If we write the problem in Lagrangian form, the formulation returns to the aforementioned model, with the relative length in the previous interpretation becoming the Lagrangian multiplier, which should now be interpreted as the relative importance the decision-maker attaches to the second attribute.

Under this framework, our main research question can now be stated as:
\begin{quote}
\textit{
What is the optimal way to address the exploration-exploitation dilemma in the presence of such a post-commitment reward shift?
}
\end{quote}

We seek to answer the above question in two aspects: 
(1) whether an efficient online learning algorithm can achieve provable performance, and (2) what the optimal performance guarantee is for any online algorithm.

We measure the performance of online learning algorithms by the notion of \textit{Regret}, which is the opportunity cost of our algorithm compared against the optimal clairvoyant strategy -- one that knows the best treatments for both the experiment and commitment phases.
Formally, it is defined as the difference between the total reward accumulated by an optimal clairvoyant strategy and the reward achieved by our online learning algorithm.

\subsection{Main Results}

\xhdr{Our main results} 
Our first main result introduces a simple yet effective online learning algorithm, termed as \textit{Reserved Arm Eliminations for Commitment} algorithm ({\RAEC}), which achieves provable performance across all parameter regimes of experimentation budget $\switchRound$.
At a high level, {\RAEC} operates as an elimination-based algorithm with phased learning.
During the experiment phase, it adaptively eliminates suboptimal arms using increasingly stringent hypothesis tests.
In particular, in Algorithm {\RAEC}, a predetermined portion of the experiment phase is reserved for identifying the optimal arm for the commitment phase, while the remaining rounds will focus on standard regret minimization.
A key feature of Algorithm {\RAEC} is that the confidence intervals used for eliminating suboptimal arms are constructed to account for the commitment constraint after $\switchRound$ rounds.
\ppedit{We use (pseudo) \textit{regret} to measure the performance of our algorithm, which is defined as the difference between the expected total reward of a clairvoyant benchmark that knows the optimal arms in advance and our algorithm.}
Throughout this work, we use $\widetilde O$ and $\widetilde \Omega$ to denote upper and lower bounds on the growth rate (up to logarithmic factors), and $\widetilde \Theta$ to characterize matching rates (up to logarithmic factors).
{\RAEC}'s simplicity enables us to derive a full spectrum of regret upper bounds across different parameter settings ($\armNum\ll\switchRound,\timeHorizon-\switchRound$):
\begin{itemize}
    \item 
    {\em Short experiment scenario} (i.e., 
    $\switchRound\le (\timeHorizon - \switchRound)^{\sfrac{2}{3}}(\armNum\log(\timeHorizon - \switchRound))^{\sfrac{1}{3}}$): 
    The regret bound is $\ppedit{\widetilde{O}(\sqrt{\sfrac{\armNum\timeHorizon^2}{\switchRound}})}$;
    \item 
    \textit{Balanced scenario} (i.e., 
    $\switchRound\geq(\timeHorizon - \switchRound)^{\sfrac{2}{3}}(\armNum\log(\timeHorizon - \switchRound))^{\sfrac{1}{3}}$ and $(\timeHorizon - \switchRound)\geq\armNum^{\sfrac{1}{4}}\timeHorizon^{\sfrac{3}{4}}/(\log(\timeHorizon - \switchRound))^{\sfrac{1}{2}}$):
    The regret bound is $\ppedit{\widetilde{O}(\armNum^{\sfrac{1}{3}}(\timeHorizon-\switchRound)^{\sfrac{2}{3}})}$;
    \item 
    \textit{Short commitment scenario} (i.e.,  $(\timeHorizon - \switchRound)<\armNum^{\sfrac{1}{4}}\timeHorizon^{\sfrac{3}{4}}/(\log(\timeHorizon - \switchRound))^{\sfrac{1}{2}}$):
    The regret bound is $\ppedit{\widetilde{O}(\sqrt{\armNum\timeHorizon})}$.
\end{itemize}



\begin{table}[h] 
\begin{center}
    \renewcommand{\arraystretch}{2.2}
    \setlength{\tabcolsep}{3mm}{
    \begin{tabular}{c|c|c|c}
    \toprule 	 %
    Experiment budget $\switchRound$ & $\switchRound \lesssim \armNum^{\sfrac{1}{3}}\timeHorizon^{\sfrac{2}{3}}$ & $\armNum^{\sfrac{1}{3}}\timeHorizon^{\sfrac{2}{3}} \lesssim \switchRound \lesssim \timeHorizon-\armNum^{\sfrac{1}{4}}\timeHorizon^{\sfrac{3}{4}}$  & $\switchRound \gtrsim \timeHorizon-\armNum^{\sfrac{1}{4}}\timeHorizon^{\sfrac{3}{4}}$                        \\ \hline
    Optimal regret $\Reg{\switchRound, \timeHorizon}$  & $\displaystyle \widetilde \Theta\left(\sqrt{\frac{\armNum\timeHorizon^2}{\switchRound}}\right)$               & $\widetilde  \Theta(\armNum^{\sfrac{1}{3}}\left(\timeHorizon-\switchRound\right)^{\sfrac{2}{3}})$                        & $\widetilde\Theta(\sqrt{\armNum\timeHorizon})$                       \\   \bottomrule
    \end{tabular}
    }
\end{center}
\caption{The optimal regret for different parameter regimes. Here, we use $A \lesssim B$, $A\gtrsim B$ to denote $A = \widetilde O(B)$, $A = \widetilde\Omega(B)$, respectively.}
    \label{table:opt regret}
\end{table}
Beyond the algorithmic results, we establish an information-theoretic lower bound that matches the upper bound achieved by {\RAEC}, which proves that the regret bound cannot be further improved by other algorithms (in a certain sense). 
We define the instance-independent regret lower bound (i.e., minimax lower bound) across different parameter regimes as follows:
\begin{itemize}
    \item 
    {\em Short experiment scenario}:
    The experiment phase is relatively short, and the optimal strategy is to focus on pure exploration. The minimax lower bound is $\ppedit{\Omega(\sqrt{\armNum\timeHorizon^2/\switchRound})}$;
    \item 
    \textit{Balanced scenario} 
    The experiment phase is sufficiently long, but so is the commitment phase, which makes this case particularly more interesting. Here, the minimax lower bound is $\ppedit{\Omega(\armNum^{\sfrac{1}{3}}(\timeHorizon-\switchRound)^{\sfrac{2}{3}})}$;
    \item 
    \textit{Short commitment scenario}:
    The commitment phase is short, making the problem resemble a standard $\armNum$-armed bandit setting. The minimax lower bound is the well-known $\ppedit{\Omega(\sqrt{\armNum\timeHorizon})}$.
\end{itemize}
Our results highlight that while asymptotic optimality (sublinear regret) remains achievable across different parameter regimes of $\switchRound$, the problem is inherently more challenging due to the reward shift between the experiment and commitment phases. 
Interestingly, the regret lower bound grows more slowly in $\armNum$ compared to traditional MAB problems. This is because the presence of a reward shift encourages the optimal strategy to explore more arms in the experiment phase. Consequently, the regret lower bound becomes less sensitive to increases in $\armNum$.
In summary, our work provides a tight characterization of the regret landscape in adaptive experimentation with post-commitment reward shift. 
Our results provide a quantitative framework for decision-makers balancing the short-term vs.\ long-term trade-off while offering guidance on how much effort should be invested in exploring optimal solutions for long-term gains and what costs can be expected in the process.
Another interesting takeaway is that our algorithm predetermines the effort allocated to explore good arms of the commitment phase, and it is shown to be good enough.
More sophisticated adaptive methods while learning the instance structure will not fundamentally improve further.


Finally, we advance our discussions to two extensions.
The first extension is that, with a more nuanced ex-ante understanding of the reward shift structure, one can improve regret performance.
In practical applications, ex-ante knowledge of reward shifts can arise from decomposing the various factors influencing rewards.
For example, macroeconomic conditions may lead to a universal shift in rewards across all arms, while industry-specific factors, such as export restrictions on key components, may affect different arms disproportionately.
These factors may not only cause significant shifts in reward values across different arms but may also alter the relative ranking of arms' mean rewards. 
Our results clarify that, in terms of regret minimization, 
correctly identifying the ranking-changing effect in reward shifts is far more critical than specifying the absolute magnitude of these shifts.

The second extension is that, instead of committing to a single arm, the decision-maker can commit to a portfolio of arms in the commitment phase, and the total reward in the commitment phase is a concave function of the sum of the observed outcomes.
For example, the short-term payoff of a firm is the total profit, while the long-run payoff is linear in profit minus the convex increasing penalty of environmental impact (i.e., the marginal penalty increases as the environmental impact grows).
In this setting, the optimal decision in the commitment phase corresponds to selecting an optimal distribution of arms rather than a single arm.
This model strictly generalizes our baseline model.
We propose a new algorithm, \textit{Reserved Online Stochastic Convex Optimization for Commitment} (ROSCOC), which, like {\RAEC}, allocates a predetermined portion of the experiment phase to learning the optimal commitment decision.
However, instead of using an arm-elimination technique, ROSCOC uses an online stochastic convex optimization approach to optimize the arm portfolio selection.
Unlike the arm-elimination technique that will specify the optimal arm at the end, the online stochastic convex optimization only generates an execution history.
We propose the idea of using the execution history as a proxy for the portfolio that will be committed.
Our analysis shows that ROSCOC achieves a similar regret upper bound as in Table 1.
Since this model is a strict generalization of the baseline model, our derived upper bound is also tight.
Although this generalized problem appears to be much more complicated, we demonstrate that the effectiveness of the predetermined exploration effort allocation for the commitment phase still applies, and the problem is not fundamentally harder than the baseline problem.


\subsection{Related Work}

Multi-armed bandits (MAB) is a classical framework to study how to balance exploitation and exploration in sequential decision-making problems.
Two archetypal objectives -- regret minimization and best-arm identification --  have been extensively studied in the MAB literature. 
Seminal works on regret minimization include the class of Upper Confidence Bound (UCB) type algorithms \citep{ACF-02}, and the class of Thompson sampling (TS) algorithms \citep{AG-12,RV-14,AG-17}. 
While for any online algorithms, \cite{LR-85} characterize asymptotic lower bounds on the expected regret for the MAB problems.

Algorithms designed for regret minimization typically explore different options without committing to a single arm. However, in practice, many applications may prefer a commitment to action instead of continuous exploration. 
In light of this motivation, many works have been devoted to studying the best-arm identification. 
In this line of work, two settings are considered: 
(i) a fixed-budget setting -- given a time budget $\timeHorizon$, the decision-maker aims to maximize the probability of finding the optimal arm in at most $\timeHorizon$ steps \citep{AB-10,KKS-13}; 
(ii) a fixed-confidence setting -- given a confidence level $\delta > 0$, the decision-maker aims to find the optimal item with the probability of at least $1-\delta$ in the smallest number of time steps \citep{BWV-13,KK-13}.
\cite{KCG-16} examines the lower bound complexity of both settings.
Unlike the algorithms for regret minimization, the best-arm identification algorithms do not account for the regret incurred during exploration.
\ppedit{Related to the problem of best-arm identification, \citet{LFH-25} study large-scale ranking and selection and establish the sample optimality of simple greedy procedures.}
\ppedit{Recently, pure exploration problems are also studied under operational contexts, e.g., \citet{SS-26} study pure exploration for multi-product pricing with approximate demand models.}

To mitigate the drawbacks of regret minimization and best-arm identification, more recently, many works (see, e.g., \citealp{BJM-11,DNCP-19,KIZ-23,ZY-23,ZCT-23,SW-23,SZZ-23,QR-24,YTJ-24}) have started to explore designing algorithms that can achieve best-of-both-worlds guarantees in various unified frameworks. 
However, most of these works either consider that the experimenter is allowed to choose when to stop the experimentation adaptively, given the collected information so far, or assume that the post-commitment reward remains the same as the reward in the experiment phase.
Among these works, a particularly relevant work is by \cite{BJM-11}, where the authors also consider that the experimenter needs to stop the experimentation within an exogenously given budget, and the reward earned after the commitment is scaled up by a constant factor.
In other words, there exists a linear reward shift in the commitment phase.
Our work considers a more general reward shift structure, and we provide a more complete characterization of regret bounds with different parameter regimes of the experimentation budget.
It is worth mentioning that \cite{QR-24} also consider different before- and post-commitment costs, but their main focus is a setting where the experimenter can adaptively choose when to stop.
Although our model shares some conceptual similarities with \cite{QR-24}, due to the different objective structures, there is no straightforward reduction from our setting to match theirs.
Conceptually, our work also relates to recent research of cross learning (e.g., \citealp{FLS-26}), but we abstract away from the operational details and focus on investigating the fundamental tradeoffs between the short-term exploration and the long-term commitment.

Our work also relates to recent growing literature that uses regret minimization to study adaptive experimentation in online platforms (to name a few, see, e.g., \citealp{BDKCV-21,AW-21,FMCPZ-22, HZBW-24}). Our work differs from these works as we consider an adaptive experimentation problem with commitment and potential reward shift after the commitment.
Notably, some earlier work studies firms' exploration effort under changing environments also derives insights under the bandit framework, e.g., \cite{PL-2012} uses simulation methods to obtain qualitative observations on this issue.

\section{Preliminaries}
\label{sec:prelim}
\newcommand{\outcome}{o}
\newcommand{\outcomeDist}{\nu}
\newcommand{\outcomeSpace}{O}
\newcommand{\exoRandomVar}{U}

We consider a sequential experiment with a commitment problem.
There are $\timeHorizon$ different experimental units participating in the experiment sequentially, where $\timeHorizon$ is fixed and known to the experimenter. 
Upon the arrival of each $t$-th subject, the experimenter assigns a treatment (arm), among $\armNum$ independent options.
The whole process consists of two phases. 

The first phase is the \textit{experiment phase}, which includes the first 
$\switchRound$ ($\ppedit{\switchRound < \timeHorizon}$) periods, where $\switchRound$ is fixed and known to the experimenter.
In each period $t\in[\switchRound]$, the experimenter can pick one treatment $\pickArm_t\in[\armNum]$ ($\armNum\ll\switchRound, \timeHorizon-\switchRound$) to assign it to the subject. 
The environment generates a random outcome $\outcome_{t, \pickArm_t} \in \outcomeSpace$ which is independently and identically drawn from a fixed unknown outcome distribution $\outcomeDist_{\pickArm_t}$ over a common outcome space $\outcomeSpace$.
The outcome space could be a vector space, in which case a random outcome could be interpreted as the random feature vector of a treatment.
Each $\outcomeDist_{i}$ belongs to a general distribution space $\mathcal{V}$. We do not make any assumptions on the distribution of the observed outcomes. As will soon become clear, what matters is the corresponding reward distribution.
The experimenter will observe the realized outcome $\outcome_{t, \pickArm_t}$ and also collect a reward $\rewFn(\outcome_{t, \pickArm_t})$ where function $\rewFn(\cdot):\outcomeSpace \rightarrow [0,1]$ is a known reward function in the experiment phase.\footnote{\ppedit{In some applications, if the decision maker has more refined prior knowledge about certain treatments' rewards, we can capture such knowledge using smaller constant intervals $[a,b]\subseteq[0,1]$ for the rewards. This will not change the asymptotic results in our paper.}}


The second phase is the \textit{commitment phase} which includes the remaining periods, namely, from $\switchRound+1$ to $\ppedit{\timeHorizon}$. 
In each period $t > \switchRound$, the experimenter commits to assign the same treatment, denoted as $\commitArm_{\switchRound}\in[\armNum]$ and collects a realized reward $\rewFnCommit(\outcome_{t, \commitArm_{\switchRound}})$ where $\rewFnCommit(\cdot): \outcomeSpace\rightarrow[0,1]$ is the known reward function in the commitment phase.
This reward function is not necessarily the same as the reward function $\rewFn(\cdot)$ and is known to the experimenter.
Here, the outcome $\outcome_{t, \commitArm_{\switchRound}}$ is also independently and identically distributed according to the same distribution $\outcomeDist_{\commitArm_{\switchRound}}$ as in the experiment phase.  
\ppedit{We refer to this setting as the \textit{outcome-observation model}. In this model, pulling arm $i$ reveals an outcome drawn from $\outcomeDist_i$, and the learner can evaluate the realized outcome under both $\rewFn$ and $\rewFnCommit$. Its instance space is $\mathcal{E}_{\outcome}=\mathcal{V}^{\armNum}$, and an instance is $\instance^{\outcome}=(\outcomeDist_i)_{i\in[\armNum]}\in\mathcal{E}_{\outcome}$.}

The experimenter initially knows the experimentation budget $\switchRound$ and both reward functions $\rewFn$, $\rewFnCommit$. 
Her goal is to design a policy $\policy$, which executes over the two decision phases, i.e., $\policy=(\expPolicy,\comPolicy)$, a sequential experiment policy $\expPolicy$ and a commitment policy $\comPolicy$, both possibly randomized, to maximize the total expected rewards.
More formally, the sequential experiment policy $\expPolicy$ is a function that maps the total time horizon $\timeHorizon$, the experiment length $\switchRound$, the historical observations $(\pickArm_s, \outcome_{s, \pickArm_s})_{s\in[t-1]}$, and also a random variable $\exoRandomVar$, which encodes any additional sources of randomization, to an assignment $\pickArm_t = \expPolicy(\switchRound, \timeHorizon, (\pickArm_s, \outcome_{s, \pickArm_s})_{s\in[t-1]}, \exoRandomVar)\in[\armNum]$.
Similarly, the commitment policy $\comPolicy$ is a function that maps all historical observations up to the end of $\switchRound$-th period, together with the knowledge of $\timeHorizon, \switchRound$ and an additional random variable $\exoRandomVar$ to an assignment $\commitArm_\switchRound = \comPolicy(\switchRound, \timeHorizon, (\pickArm_s, \outcome_{s, \pickArm_s})_{s\in[\switchRound]}, \exoRandomVar)\in[\armNum]$.
Specifically, the experimenter aims to maximize the following cumulative rewards:
\begin{align}
    \label{eq:cumu rew}
    \Rew[\policy]{\switchRound, \timeHorizon}
    = \sum\nolimits_{t\in[\switchRound]} \expect{\rewFn(\outcome_{t, \pickArm_t})}
    + 
    \sum\nolimits_{t\in[\switchRound+1:\timeHorizon]} \expect{\rewFnCommit(\outcome_{t, \commitArm_{\switchRound}})}~,
\end{align}
where the expectations are over the randomness of the policy $\policy$, and the randomness from the outcome distributions $(\outcomeDist_i)_{i\in[\armNum]}$.
Directly optimizing \eqref{eq:cumu rew} is not tractable given that the parameters $(\outcomeDist_i)_{i\in[\armNum]}$ are not known to the experimenter a priori.
Thus, it is also more convenient to focus on the regret. The objective then is to design the policy $\policy$ that minimizes the regret defined as:
\begin{align}
    \label{eq:cumu reg}
    \Reg[\policy]{\switchRound, \timeHorizon}
    = \sum\nolimits_{t\in[\switchRound]} \left(
    \expect{\rewFn(\outcome_{t, \optArm_\rewFn})}
    - 
    \expect{\rewFn(\outcome_{t, \pickArm_t})}
    \right)
    + 
    \sum\nolimits_{t\in[\switchRound+1:\timeHorizon]} \left(
    \expect{\rewFnCommit(\outcome_{t, \optArm_{\rewFnCommit}})}-
    \expect{\rewFnCommit(\outcome_{t, \commitArm_{\switchRound}})}
    \right)~,
\end{align}
where $\optArm_\rewFn \triangleq \argmax_{i\in[\armNum]} \expect[\outcome\sim \outcomeDist_{i}]{\rewFn(\outcome)}$ denotes the optimal treatment w.r.t.\ the reward function $\rewFn$, and 
$\optArm_\rewFnCommit \triangleq \argmax_{i\in[\armNum]} \expect[\outcome\sim \outcomeDist_{i}]{\rewFnCommit(\outcome)}$ denotes the optimal treatment w.r.t.\ the reward function $\rewFnCommit$.
\ppedit{Throughout the paper, we assume that there exist fixed constants $a,c>0$ such that $\switchRound\geq c\timeHorizon^a$ for all sufficiently large $\timeHorizon$. This assumption is used to rule out the less interesting scenario when the length of the experiment phase, $\switchRound$, is extremely small compared to the whole time horizon, $\timeHorizon$.}

We now instantiate our setting using the previously mentioned GDPR and manufacturing example.
\begin{example}[The GDPR example (continued)]

One treatment $\pickArm_t\in[\armNum]$ could represent one interface design.
The length of the experiment phase $\switchRound$ is the length of the grace period before the enactment of GDPR.
The length of the commitment phase $\timeHorizon-\switchRound$ is the length of the platform-wide deployment of the committed treatment before a new design rolls out.
The commitment date is marked by the enactment of GDPR.
Picking treatment $\pickArm_t$ generates a treatment-associated random outcome $\outcome_{t,\pickArm_t}$ which consists of two attributes, $(\textsc{revenue}_{t,\pickArm_t}, \textsc{privacy\text{-}risk}_{t,\pickArm_t})$.
$\textsc{revenue}_{t,\pickArm_t}$ is a random variable, while $\textsc{privacy\text{-}risk}_{t,\pickArm_t}=\textsc{privacy\text{-}risk}_{\pickArm_t}$ is likely a constant for a given treatment $\pickArm_t$, and the larger the number is, the less privacy customers have.
Before the enactment of GDPR, the reward function has the form $\rewFn(\outcome_{t,\pickArm_t})=\textsc{revenue}_{t,\pickArm_t}$.
After the enactment of GDPR, the reward function has the form $\rewFnCommit(\outcome_{t,\pickArm_t})=\textsc{revenue}_{t,\pickArm_t}-\textsc{privacy\text{-}risk}_{\pickArm_t}$, which captures the legal penalty on privacy.
\end{example}

\begin{example}[The manufacturing supply chain examples (continued)]
  \label{exam:sc}
One treatment $\pickArm_t\in[\armNum]$ could represent one new production plan.
The length of the experiment phase $\switchRound$ is the length of the grace period before the enactment of carbon tariffs, export restrictions, or some other policy environment shifts.
The length of the commitment phase $\timeHorizon-\switchRound$ is the market life of one generation of product.
The commitment date is marked by the enactment of the carbon tariffs, export restrictions, full production capacity transition, and so on.
The random outcome $\outcome_{t,\pickArm_t}$ consists of two attributes, $(\textsc{revenue}_{t,\pickArm_t}, \textsc{carbon\text{-}emission}_{t,\pickArm_t})$ or $(\textsc{revenue}_{t,\pickArm_t}, \textsc{risky\text{-}supplier}_{t,\pickArm_t})$.
$\textsc{revenue}_{t,\pickArm_t}$ is a random variable, while $\textsc{carbon\text{-}emission}_{t,\pickArm_t}=\textsc{carbon\text{-}emission}_{\pickArm_t}$ recording carbon emission volume and $\textsc{risky\text{-}supplier}_{t,\pickArm_t}=\textsc{risky\text{-}supplier}_{\pickArm_t} \in \{0, 1\}$ taking binary values with indicating whether the production involves risky suppliers, are constants for a given treatment $\pickArm_t$.
Before the commitment, the reward function is again simply $\rewFn(\outcome_{t,\pickArm_t})=\textsc{revenue}_{t,\pickArm_t}$.
After the enactment of the carbon tariffs, the reward function can have the form $\rewFnCommit(\outcome_{t,\pickArm_t})=\textsc{revenue}_{t,\pickArm_t}-\textsc{carbon\text{-}emission}_{\pickArm_t}$.
Or, after the enactment of export restrictions, the reward function can have the form $\rewFnCommit(\outcome_{t,\pickArm_t})=\left(1-\textsc{risky\text{-}supplier}_{\pickArm_t}\right)\cdot \textsc{revenue}_{t,\pickArm_t}$.
\end{example}

\xhdr{More Notations}
We define the following notations that will be useful for our analysis. 
\ppedit{For each treatment $i\in[\armNum]$, let $V_{\rewFn,i}$ and $V_{\rewFnCommit,i}$ denote the distributions of $\rewFn(\outcome)$ and $\rewFnCommit(\outcome)$, respectively, when $\outcome\sim\outcomeDist_i$.}
We would like to note that although the outcome distributions $(\outcomeDist_i)_{i\in[\armNum]}$ may be general, the induced reward distributions have bounded support between $[0, 1]$ by the mapping of the reward functions $\rewFn, \rewFnCommit$.
And, because the distributions $(\outcomeDist_i)_{i\in[\armNum]}$ are unknown, the induced distributions $(V_{\rewFn,i})_{i\in[\armNum]}$ and $(V_{\rewFnCommit,i})_{i\in[\armNum]}$ are also unknown.
We use $\mathcal{V}_{\rewFn, i}$, $\mathcal{V}_{\rewFnCommit, i}$ to denote the space of the reward distributions for function $\rewFn, \rewFnCommit$ for the treatment $i$, respectively.
That is, $V_{\rewFn,i}\in\mathcal{V}_{\rewFn,i}$ and $V_{\rewFnCommit,i}\in\mathcal{V}_{\rewFnCommit,i}$.
\ppedit{Let $\mu(P)$ denote the mean of a distribution $P$. For each treatment $i\in[\armNum]$, we write $\mu_{\rewFn,i}=\mu(V_{\rewFn,i})$ and $\mu_{\rewFnCommit,i}=\mu(V_{\rewFnCommit,i})$.}

\subsection{Additional Discussions}\label{subsec:additional discussion}
We conclude this section with some additional discussions. 

\xhdr{The exogenously given experimentation budget $\switchRound$}
In our framework, the experimentation budget $\switchRound$ is exogenously given by the environment.
This modeling choice is primarily motivated by the practical observations in which, for example, 
a regulatory policy may allow a duration of time before it takes effect,
or the decision-maker may face an internal deadline limiting the total experimentation time.
By contrast, a soft commitment model -- where the decision maker can endogenously choose when to commit -- has been studied in earlier work (see, e.g., \citealp{BJM-11,QR-24}).
Indeed, when the reward functions satisfy $\rewFn = \rewFnCommit$, the standard explore-then-commit strategies (see, e.g., \citealp{R-52,RI-10,GL-16}) in the multi‐armed bandit literature can capture this soft‐commitment scenario as well.

\xhdr{Known reward functions $\rewFn$ and $\rewFnCommit$}
\ppedit{In the outcome-observation model, the reward mappings $\rewFn$ and $\rewFnCommit$ are known, while each arm's outcome distribution is unknown. Thus, when arm $i$ is pulled, the learner observes an outcome $\outcome_{t,i}$ that can be evaluated under both the experiment- and commitment-phase reward functions. This captures settings where the environment shift is announced in advance and its effect on evaluating outcomes is largely predictable, and it is what allows {\RAEC} to estimate both $\expect{\rewFn(\outcome_i)}$ and $\expect{\rewFnCommit(\outcome_i)}$ during the experiment phase.

There is an alternative model that does not require the knowledge of the $\rewFn$ and $\rewFnCommit$.
We call it the \textit{independent-signal model}. In this model, pulling arm $i$ reveals a pair of independent reward signals drawn from $V_{\rewFn,i}\times V_{\rewFnCommit,i}$, and these signal pairs are independently and identically distributed across pulls of arm $i$. Its instance space is $\mathcal{E}_{\mathrm{ind}}=\prod_{i\in[\armNum]}(\mathcal{V}_{\rewFn,i}\times\mathcal{V}_{\rewFnCommit,i})$, and an instance is $\instance=(V_{\rewFn,i},V_{\rewFnCommit,i})_{i\in[\armNum]}\in\mathcal{E}_{\mathrm{ind}}$. Under this interpretation, the learner need not know the explicit functional forms of $\rewFn$ and $\rewFnCommit$; what is essential is that each pull provides information about both experiment- and commitment-phase rewards. By contrast, if pulling an arm during experimentation only reveals its experiment-phase reward, with no outcome observation or structural link to its commitment-phase reward, then post-shift rewards are statistically unidentified. In that case, sublinear regret is impossible in general: two indistinguishable experiment-phase instances may have different optimal commitment arms, leading to linear commitment regret in one of them unless additional structure, such as a parametric relation between $\mu_{\rewFn,i}$ and $\mu_{\rewFnCommit,i}$, is imposed.}

\xhdr{Reducing to regret minimization and best-arm identification}
We observe that 
$\rewFnCommit(\outcome) \equiv 0$ for all $\outcome\in\outcomeSpace$, our sequential experiment with the commitment problem reduces to regret minimization in the classic multi-armed bandit (MAB) problem with time horizon $\switchRound$ \citep{ACF-02,LR-85}.
Another variant of the MAB problem is the ``best-arm identification problem'' (often referred to as the pure exploration problem), where the goal is to identify the best treatment at the end of the experiment. 
In particular, when 
$\rewFn(\outcome) \equiv 0$ for all $\outcome\in\outcomeSpace$, our sequential experiment with the commitment problem reduces to the fixed-budget pure exploration problem \citep{AB-10,BMS-11,KCG-16,CL-16}. 
Note that when $\rewFn(\outcome) \equiv 0$,
our regret definition \eqref{eq:cumu reg} is essentially minimizing the \emph{simple regret} $ (\timeHorizon-\switchRound)\cdot\expect{\rewFnCommit(\outcome_{\optArm_{\rewFnCommit}})-
\rewFnCommit(\outcome_{\commitArm_{\switchRound}})}$.




\section{Our Algorithm and Results}
\label{sec:algo}

\renewcommand{\armf}[1]{\mathcal{A}_{f,#1}}
\renewcommand{\armg}[1]{\mathcal{A}_{g,#1}}
\renewcommand{\roundlen}[2]{m_{#1,#2}}

\newcommand{\elimiCriteria}{\widetilde{\Delta}}
\renewcommand{\empiricalmeanf}[1]{\hat{\mu}_{f, #1}}
\renewcommand{\empiricalmeang}[1]{\hat{\mu}_{g, #1}}

\newcommand{\epochNum}{L}
\newcommand{\epoch}{\ell}
\newcommand{\stopCriteria}{\varepsilon}

\newcommand{\totalEpochs}{\numround}
\newcommand{\algcomment}[1]{\textcolor{blue}{\footnotesize{/* \texttt{{#1}}} */}}

\newcommand{\tildeO}{\widetilde{O}}

In this section, we introduce the Reserved Arm Eliminations for Commitment ({\RAEC}) algorithm for our sequential experiment and commitment problem and discuss its regret performance.

\subsection{Our Algorithm}
We begin with elaborating on the details of Algorithm {\RAEC}.
Our algorithm operates as an elimination-type algorithm (see, e.g., \citealp{EMMM-06,AO-10}) that acts in phases (epochs) and eliminates arms using increasingly sensitive hypothesis tests. 

At a high level, there are two stages of the algorithm in the experiment phase, followed by a straightforward committing stage.
In Stage I, the algorithm focuses on collecting arm information for the commitment-phase reward function $\rewFnCommit$ until the experimenter gathers sufficient information for the reward function $\rewFnCommit$ to make the commitment decision.
The algorithm moves into Stage II, in which the algorithm uses the remaining rounds in the experiment phase to conduct the regret minimization over arms w.r.t.\ reward function $\rewFn$.
Throughout the experiment phase, the algorithm will maintain two active arm sets $\armf{\epoch}, \armg{\epoch}$ for each epoch $\epoch$ and for each reward function $\rewFn, \rewFnCommit$, respectively.
The active arm sets are both initialized as the whole arm space $[\armNum]$. 

\xhdr{Stage I: Reserved arm elimination for reward function $\rewFnCommit$}
In Stage I, which includes the first $\epochNum$ epochs, the algorithm explores arms in the active arm set $\armg{\epoch}$ and eliminates arms that have bad performance based on their empirical average rewards. 
Designing the number $\epochNum$ is one key of our algorithm.
The choice of this value represents the amount of effort the decision-maker should exert during the experiment phase to learn the optimal arm for the commitment phase.
In particular, in each epoch $\epoch\in[\epochNum]$, if more than one arm remains in the active arm set $\armg{\epoch}$, the algorithm will select each arm in the arm set $\armg{\epoch}$ until the total number of times it has been selected achieves $\roundlen{\rewFnCommit}{\epoch}$. 
At the end of each epoch $\epoch$, the algorithm computes the empirical average rewards $\empiricalmeang{j, \epoch}$ for each arm $\ppedit{j}\in \armg{\epoch}$, then updates the arm set $\armg{\epoch+1}$ by eliminating arms that deviate too much from the arm that has the empirically best reward $\max_{j\in\armg{\epoch}}\empiricalmeang{j, \epoch}$.
Specifically, arms are eliminated from $\armg{\epoch}$ if the gap between their empirical reward and the best empirical reward exceeds a threshold $\elimiCriteria_\epoch$, which is initialized to $\sfrac{1}{2}$.
The threshold $\elimiCriteria_\epoch$ will then be halved for the next epoch.
On the other hand, when there is only one arm remaining in the arm set $\armg{\epoch}$, the algorithm would stop the arm elimination for the reward function $\rewFnCommit$.

\begin{algorithm}[ht]
    \caption{Reserved Arm Eliminations for Commitment ({\RAEC})}
    \label{main algo}
    \begin{algorithmic}[1]
        \State \textbf{Input:} A set of arms $\{1,2,\ldots,\armNum\}$, $\switchRound$, $\timeHorizon$, 
        parameter $\stopCriteria$;
        \State \textbf{Initialization:} Set $\elimiCriteria_1=1/2$, $\armf{1}=\armg{1}=[\armNum]$, 
        \ppedit{$\epochNum= \max\left\{0,\left\lceil \log_2\frac{1}{\stopCriteria} \right\rceil\right\}$.} 
        \State 
        \ppedit{{\em Whenever $\switchRound$ rounds are exhausted in Stage I or II, the algorithm enters the Commitment Stage. If the cap is reached before an epoch is completed, the algorithm does not compute empirical means or eliminate arms using that incomplete epoch; it keeps the active set at the beginning of the epoch.}}\\
        \Comment{Below $\roundlen{\rewFn}{\epoch}, \roundlen{\rewFnCommit}{\epoch}$ are defined as in \eqref{eq:rounds defn}.}
        \For{$\epoch = 1,\ldots,\epochNum$} 
        \hfill  
        \Comment{Stage I: Reserved arm elimination for reward function $\rewFnCommit$}
        \If{$\vert \armg{\epoch}\vert>1$}
        \State 
        \parbox[t]{\dimexpr\linewidth-\algorithmicindent}{%
        Sample each arm in $\armg{\epoch}$ one at a time until the total number of times it has been chosen is $\roundlen{\rewFnCommit}{\epoch}$ times or reaches the experiment cap $\switchRound$ and stops.}
        \State 
        \parbox[t]{\dimexpr\linewidth-\algorithmicindent}{%
        At the end of epoch $\epoch$, compute the empirical average reward
        $\empiricalmeang{i, \epoch}$ for reward function $\rewFnCommit$ for each $i\in\armg{\epoch}$.}
        \State Update 
        $\armg{\epoch+1}\gets\left\{i\in\armg{\epoch}:\max_{j\in\armg{\epoch}}\empiricalmeang{j, \epoch} -\empiricalmeang{i, \epoch}\leq \elimiCriteria_\epoch\right\}$.
        \Else
        \State $\armg{\epoch+1}\gets\armg{\epoch}$.
        \EndIf
        \State Set $\elimiCriteria_{\epoch+1}\gets\elimiCriteria_\epoch/2$.
        \EndFor
        \State 
        \Comment{Note that below $\epoch$ restarts from $1$.}
        \For{$\epoch = 1,2,\ldots$}
        \hfill  
        \Comment{{\color{blue} Stage II: Arm eliminations for reward function $\rewFn$}} 
        \State 
        \parbox[t]{\dimexpr\linewidth-\algorithmicindent}{%
        Sample each arm in $\armf{\epoch}$ one at a time until the total number of times it has been chosen is $\roundlen{\rewFn}{\epoch}$ times or reaches the experiment cap $\switchRound$ and stops.}
        \State 
        \parbox[t]{\dimexpr\linewidth-\algorithmicindent}{%
        At the end of epoch $\epoch$, compute the empirical average reward $\empiricalmeanf{i,\epoch}$ for reward function $\rewFn$ for each $i\in\armf{\epoch}$.}
        \State 
        Update 
        $\armf{\epoch+1}\gets\left\{i\in\armf{\epoch}:\max_{j\in\armf{\epoch}}\empiricalmeanf{j, \epoch} -\empiricalmeanf{i, \epoch}\leq \elimiCriteria_\epoch\right\}$.
        \State Set $\elimiCriteria_{\epoch+1}\gets\elimiCriteria_\epoch/2$.
        \EndFor
        \State 
        \ppedit{\Comment{Denote by $\armg{\epochNum+1}$ the current active arm set for reward function $\rewFnCommit$.}}
        \State Uniformly at random committing to an arm in $\armg{\epochNum+1}$. 
        \hfill 
        \Comment{{\color{blue} Commitment Stage}}
    \end{algorithmic}
\end{algorithm}
\ppedit{For notational simplicity, $\armg{\epochNum+1}$ in Algorithm {\RAEC} denotes the active set available when Stage I ends. 
}

\xhdr{Stage II: Continuing arm elimination for reward function $\rewFn$}
Stage II includes the remaining rounds in the experiment phase. 
When entering Stage II, no matter how many arms remain in the active arm set $\armg{\epochNum+1}$, the algorithm will not sample arms in $\armg{\epochNum+1}$. 
Instead, the algorithm will keep eliminating the arms $\armf{1}$ using a similar procedure in Stage I.
Especially, the elimination criteria also remain the same as before.

We carefully design the total number of selections $\roundlen{\rewFn}{\epoch}, \roundlen{\rewFnCommit}{\epoch}$ of the arms in each epoch as follows,
\begin{align}
    \label{eq:rounds defn}
    \roundlen{\rewFn}{\epoch}=\frac{4\log(\switchRound)}{\elimiCriteria_\epoch^2}, \quad 
    \roundlen{\rewFnCommit}{\epoch}=\frac{4\log(\timeHorizon-\switchRound)}{\elimiCriteria_\epoch^2}~.
\end{align}
Here, notice that the number of pulls for arms in $\armf{\epoch}$ and $\armg{\epoch}$ differ in the numerator, where $\roundlen{\rewFn}{\epoch}$ adjusts for the length of the experiment phase, $\switchRound$, while $\roundlen{\rewFnCommit}{\epoch}$ adjusts for the length of the commitment phase, $\timeHorizon-\switchRound$.

\xhdr{Commitment Stage}
Note that at the beginning of the $(\switchRound+1)$-th round, at least one arm remains in the active arm set $\armg{\epochNum+1}$. 
The algorithm then uniformly at random commits to an arm in this set for the commitment phase.

We name our algorithm as \textit{reserved arm eliminations for commitment} because we predetermine $\epochNum$ epochs reserved for exploring the optimal arm in the commitment phase.
Instead of choosing adaptively while learning the problem structure, a predetermined $\epochNum$ is not only easy to implement in industrial environments but also explicitly indicates how much effort the decision-maker should exert for long-term benefits.


We highlight that during the algorithm execution Stage II, we sample each arm in $\armf{\epoch}$ until it has been pulled $\roundlen{\rewFn}{\epoch}$ times, but the $\roundlen{\rewFn}{\epoch}$ number of pulls may not all occur during Stage II. 
This is because the arm is totally possible to have been pulled many times during Stage I.
Because we observe the outcome realization $\outcome_{t,\pickArm_t}$ upon pulling each arm $\pickArm_t$ and function forms $\rewFn$ and $\rewFnCommit$ are known, we can calculate both $\rewFn(\outcome_{t,\pickArm_t})$ and $\rewFnCommit(\outcome_{t,\pickArm_t})$.
When executing the algorithm, we actually record the history of outcome realization for each arm pull, and that is used to calculate the empirical means $\empiricalmeanf{i,\epoch}$ and $\empiricalmeang{i,\epoch}$.

\subsection{The Regret Upper Bound of Algorithm {\RAEC}}
In this section, we provide the regret upper bound of Algorithm {\RAEC}.
The main result of this section is summarized as follows:
\begin{restatable}[Regret upper bound]{theorem}{thmub}
\label{thm:upper bound}
With the parameter 
\begin{equation*}
  \ppedit{\stopCriteria=\max\left\{4\sqrt{\frac{\armNum\log(\timeHorizon - \switchRound)}{\switchRound}},\left(\frac{\armNum\cdot\log(\timeHorizon - \switchRound)}{\timeHorizon - \switchRound}\right)^{\sfrac{1}{3}},\sqrt{\frac{\armNum\timeHorizon\log(\timeHorizon)}{(\timeHorizon - \switchRound)^2}}\right\},}
\end{equation*}
we have the following regret upper bound for Algorithm {\RAEC}:
\begin{equation*}
  \Reg{ \switchRound, \timeHorizon}
  \leq 
  \tildeO\left(\sqrt{\frac{\armNum(\timeHorizon - \switchRound)^{\sfrac{2}{3}}\cdot\max\left\{(\timeHorizon - \switchRound)^{\sfrac{4}{3}},\armNum^{\sfrac{1}{3}}\switchRound\right\}}{\min\left\{\switchRound, \armNum^{\sfrac{1}{3}}(\timeHorizon - \switchRound)^{\sfrac{2}{3}}\right\}}}\right).
\end{equation*}
\end{restatable}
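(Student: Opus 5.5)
We bound the regret by splitting it into three parts: the experiment-phase regret incurred during Stage I, the experiment-phase regret of Stage II, and the commitment-phase regret $(\timeHorizon-\switchRound)\cdot(\mu_{\rewFnCommit,\optArm_\rewFnCommit}-\expect{\mu_{\rewFnCommit,\commitArm_\switchRound}})$.

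\textbf{Step 1: the good event.} We first introduce a clean event $\mathcal{G}$. On $\mathcal{G}$, every completed epoch's empirical mean is within $\elimiCriteria_\epoch/2$ of the true mean, for both $\rewFn$ and $\rewFnCommit$ and for every active arm. By Hoeffding, the choice $\roundlen{\rewFnCommit}{\epoch}=4\log(\timeHorizon-\switchRound)/\elimiCriteria_\epoch^2$ gives failure probability $O(1/(\timeHorizon-\switchRound)^{2})$ per arm and epoch for $\rewFnCommit$. A union bound over at most $\armNum\log(\cdot)$ events makes the $\rewFnCommit$-failure contribution at most $(\timeHorizon-\switchRound)\cdot \widetilde O(\armNum/(\timeHorizon-\switchRound)^2)=o(1)$. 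Symmetrically, $\roundlen{\rewFn}{\epoch}$ with $\log\switchRound$ makes the $\rewFn$-failure contribution $O(\switchRound\cdot \armNum/\switchRound^2)$.

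On $\mathcal{G}$ we use the standard elimination facts. The optimal arm is never eliminated, and any arm surviving epoch $\epoch$ has gap at most $2\elimiCriteria_\epoch$ (up to a constant), since it survives $\elimiCriteria_{\epoch}$ and is eliminated when its gap exceeds $2\elimiCriteria_\epoch$.

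\textbf{Step 2: the three regret terms.} We bound each term as follows.

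\emph{Stage II regret.} This is the standard successive-elimination argument: each arm with $\rewFn$-gap $\Delta$ is pulled $O(\log\switchRound/\Delta^2)$ times. Combining the $\Delta\le\sqrt{\armNum\log\switchRound/\switchRound}$ threshold in the usual way gives $\widetilde O(\sqrt{\armNum\switchRound})$. Reused samples from Stage I only help here.

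\emph{Stage I regret.} Trivially, this is at most the number of Stage I rounds, $\sum_{\epoch\le\epochNum}\armNum\,\roundlen{\rewFnCommit}{\epoch}=O(\armNum\log(\timeHorizon-\switchRound)/\stopCriteria^2)$, capped at $\switchRound$.

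\emph{Commitment regret.} There are two cases.
\begin{itemize}
\item If Stage I completes all $\epochNum$ epochs, then every arm in $\armg{\epochNum+1}$ has $\rewFnCommit$-gap $O(\stopCriteria)$, giving commitment regret $O((\timeHorizon-\switchRound)\stopCriteria)$.
\item If the cap $\switchRound$ cuts Stage I short, the last completed epoch $\epoch'$ satisfies $\armNum\cdot 4\log(\timeHorizon-\switchRound)/\elimiCriteria_{\epoch'+1}^2\gtrsim\switchRound$, so $\elimiCriteria_{\epoch'}\lesssim\sqrt{\armNum\log(\timeHorizon-\switchRound)/\switchRound}$. The commitment regret is then $(\timeHorizon-\switchRound)\sqrt{\armNum\log/\switchRound}$.
\end{itemize}
In both cases the commitment regret is $\widetilde O\big((\timeHorizon-\switchRound)\max\{\stopCriteria,\sqrt{\armNum/\switchRound}\}\big)$, and the first entry of $\stopCriteria$ already dominates the second.

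\textbf{Step 3: combining (expected main obstacle).} The total regret is
\[
\widetilde O\Big(\min\{\switchRound,\armNum/\stopCriteria^2\}+(\timeHorizon-\switchRound)\stopCriteria+\sqrt{\armNum\switchRound}\Big).
\]
The main obstacle is showing that this, with the three-way maximum defining $\stopCriteria$, matches the displayed closed form. We case on which entry of $\stopCriteria$ is largest.

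\emph{Entry 1 largest.} Then $\armNum/\stopCriteria^2\asymp\switchRound$ and the commitment term is $\sqrt{\armNum}(\timeHorizon-\switchRound)/\sqrt{\switchRound}$. This matches the formula when $\switchRound\le\armNum^{1/3}(\timeHorizon-\switchRound)^{2/3}$, where the numerator max is $(\timeHorizon-\switchRound)^{4/3}$ and the denominator is $\switchRound$. It also matches the $\sqrt{\armNum\timeHorizon^2/\switchRound}$ row up to $\timeHorizon$ versus $\timeHorizon-\switchRound$, which agree in that regime.

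\emph{Entry 2 largest.} Both the exploration and commitment terms equal $\armNum^{1/3}(\timeHorizon-\switchRound)^{2/3}$, which is the formula with denominator $\armNum^{1/3}(\timeHorizon-\switchRound)^{2/3}$ and numerator max $(\timeHorizon-\switchRound)^{4/3}$.

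\emph{Entry 3 largest.} This is the short-commitment regime. Here $(\timeHorizon-\switchRound)\stopCriteria=\sqrt{\armNum\timeHorizon\log\timeHorizon}$ and $\armNum/\stopCriteria^2\le(\timeHorizon-\switchRound)^2/\timeHorizon\le\timeHorizon$, and in fact $\le\sqrt{\armNum\timeHorizon}$ precisely when $(\timeHorizon-\switchRound)\lesssim\armNum^{1/4}\timeHorizon^{3/4}$. The total is then $\widetilde O(\sqrt{\armNum\timeHorizon})$, which equals the formula with numerator max $\armNum^{1/3}\switchRound$, since $\sqrt{\armNum^{4/3}\switchRound(\timeHorizon-\switchRound)^{2/3}/(\armNum^{1/3}(\timeHorizon-\switchRound)^{2/3})}=\sqrt{\armNum\switchRound}$.

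Finally, the $\sqrt{\armNum\switchRound}$ term is always dominated by the displayed quantity: the numerator max is at least $\armNum^{1/3}\switchRound$ and the denominator is at most $\armNum^{1/3}(\timeHorizon-\switchRound)^{2/3}$. The most care is needed in verifying each inequality at the regime boundaries and in tracking $\log$ factors, which are absorbed into $\widetilde O$.
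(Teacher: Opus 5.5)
Your proposal is correct, but it reaches the bound by a different route from the paper.

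The paper first proves the instance-dependent \Cref{prop:instance-dependent ub}. There, the reserved Stage~I pulls are charged arm by arm on the good event as $\Delta_{f,i}\max\{\log(T-N)/\max\{\Delta_{g,i},\varepsilon\}^2,\log N/\Delta_{f,i}^2\}$, which brings in an extra failure term $KN\log(1/\varepsilon)/(T-N)^2$. The paper then converts this into the minimax rate region by region, grouping arms by thresholds on $\Delta_{f,i}$.

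You bypass the instance-dependent bound entirely. You bound Stage~I regret deterministically by its length $O(K\log(T-N)/\varepsilon^2)$, so no good event is needed there. Stage~II gets the textbook $\widetilde O(\sqrt{KN})$ elimination bound, and the commitment gets $(T-N)\varepsilon$ plus a negligible failure term. This gives $\widetilde O(K/\varepsilon^2+(T-N)\varepsilon+\sqrt{KN})$ and makes the role of each entry of $\varepsilon$ transparent. The paper's route yields an instance-dependent guarantee, which is the object compared with \Cref{thm:instance dependent lower bound}; yours gives a shorter, less error-prone minimax proof.

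You can tighten Step~3. Since $\varepsilon\ge(K\log(T-N)/(T-N))^{1/3}$, the Stage~I term is always at most $(T-N)\varepsilon$. The regret is therefore $\widetilde O(\max\{(T-N)\sqrt{K/N},K^{1/3}(T-N)^{2/3},\sqrt{KT}\})$. This maximum is within a constant factor of the displayed expression using only $K\le\min\{N,T-N\}$. For example, if $(T-N)^{4/3}<K^{1/3}N$ then $T-N<K^{1/4}N^{3/4}\le N$, so $\sqrt{KT}\le\sqrt{2KN}$. This avoids the log-factor mismatch between ``which entry of $\varepsilon$ is largest'' and the regions of the closed form, which you only flag.

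Finally, the cap case in your commitment step never occurs. Since $\widetilde\Delta_L>\varepsilon/2$ and the first entry of $\varepsilon$ carries the factor $4$, we get $K m_{g,L}\le 16K\log(T-N)/\varepsilon^2\le N$, so every reserved epoch completes.
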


We offer more detailed discussions and explanations about the above regret upper bound.
Our presented regret bound shows a phase transition of the attainable regret bounds from the Algorithm {\RAEC}. 
In particular, 
\begin{enumerate}
    \item 
    Short experiment scenario (Region I) -- 
    when $\switchRound\le O\left(\timeHorizon^{\sfrac{2}{3}}(\armNum\log(\timeHorizon))^{\sfrac{1}{3}}\right)$, by choosing parameter $\stopCriteria = \sqrt{\frac{\armNum\log(\timeHorizon)}{\switchRound}}$, the regret bound $\Reg{\switchRound, \timeHorizon} \leq \tildeO\left(\sqrt{\frac{\armNum\timeHorizon^2}{\switchRound}}\right)$.
    This is the scenario when the experiment phase is rather short, such that all effort should be invested in exploring the optimal arm for the commitment phase.
    \item 
    Balanced scenario  (Region II) -- 
    when $\switchRound\geq\Omega\left(\timeHorizon^{\sfrac{2}{3}}(\armNum\log(\timeHorizon))^{\sfrac{1}{3}}\right)$ and $(\timeHorizon - \switchRound)\geq\Omega\left(\armNum^{\sfrac{1}{4}}\timeHorizon^{\sfrac{3}{4}}/(\log(\timeHorizon - \switchRound))^{\sfrac{1}{2}}\right)$,
    by choosing parameter $\stopCriteria = \left(\frac{\armNum\log(\timeHorizon-\switchRound)}{\timeHorizon-\switchRound}\right)^{\sfrac{1}{3}}$, the regret bound $\Reg{\switchRound, \timeHorizon} \leq \tildeO\left(\armNum^{\sfrac{1}{3}}(\timeHorizon-\switchRound)^{\sfrac{2}{3}}\right)$.
    This is the most interesting scenario where balancing the experiment phase and the commitment phase becomes a challenge.
    \item 
    Short commitment scenario (Region III) -- 
    when $(\timeHorizon - \switchRound)\leq O\left(\frac{\armNum^{\sfrac{1}{4}}\timeHorizon^{\sfrac{3}{4}}}{(\log(\timeHorizon - \switchRound))^{\sfrac{1}{2}}}\right)$,
    by choosing parameter $\stopCriteria = \sqrt{\frac{\armNum\timeHorizon\log(\timeHorizon)}{(\timeHorizon-\switchRound)^2}}$, the regret bound $\Reg{\switchRound, \timeHorizon} \leq \tildeO\left(\sqrt{\armNum\timeHorizon}\right)$.
    This is the scenario when the commitment phase is rather short that the algorithm starts tuning down the accuracy $\stopCriteria$ (i.e., larger $\stopCriteria$) at a faster speed than in the balanced scenario as $\switchRound$ grows.
    But notice that it is only until $(\timeHorizon-\switchRound)\leq \tildeO\left(\sqrt{\armNum\timeHorizon}\right)$ (i.e., $\stopCriteria\geq1$) does the algorithm totally give up exploring good arms for the commitment phase and minimize regret only for the experiment phase.
\end{enumerate}
As a sanity check, consider the case $\rewFnCommit\equiv0$. In this scenario, our problem reduces to the classic $\armNum$-armed bandit setting. 
Without loss of generality, we may set $\timeHorizon=\switchRound+1$ (to avoid potential indefinite calculations caused by $\timeHorizon=\switchRound$), in which case $\stopCriteria$ could be considered as $1$ -- implying there is no exploration during the commitment phase.
Consequently, our regret upper bound reduces to $\tildeO(\sqrt{\armNum\switchRound})$ which matches the standard result (see, e.g., \citealp{ACF-02}).
On the other hand, when $\rewFn=0$, the problem becomes a pure exploration setting focused on minimizing simple regret. 
In this case, the total regret is just the simple regret at the commitment period $\switchRound+1$, multiplied by the commitment phase length $\timeHorizon-\switchRound$. 
To fully recover known results of pure exploration, we let $\switchRound=o(\timeHorizon)$, i.e., the experiment phase is entirely devoted to exploration before the commitment phase.
Under this regime, $\stopCriteria=\sqrt{\armNum\log(\timeHorizon)/\switchRound}$ and our regret is bounded by $\tildeO(\sqrt{\armNum\timeHorizon^2/\switchRound})$.
Dividing the regret by the length of the commitment phase, we get simple regret of $\tildeO(\sqrt{\armNum/\switchRound})$, which again aligns with standard bounds (see, e.g., Chapter 33 in \citealp{LS-20}).
Finally, we observe that over a wide range of regimes (i.e., in Regions I and II), our regret upper bound is significantly larger than the typical $\tildeO(\sqrt{\armNum\timeHorizon})$ found in standard $\armNum$-armed bandit problems.
However, as becomes clearer in \Cref{subsec:prior knowledge}, when there is no reward shift (i.e., $\rewFn=\rewFnCommit$), the regret can be substantially improved, which mirrors the effect of reward shift ($\rewFn\neq \rewFnCommit$). 
We will discuss more on this in \Cref{sec:lower bound}.

\begin{figure}[h]
\centering
\resizebox{0.8\textwidth}{!}{\begin{tikzpicture}[>=stealth, scale=1]

  \draw[->] (0,0) -- (12,0) node[right] {$\switchRound$};

  \node[above] at (0,0) {$0$};
  \node[above] at (11,0) {$\timeHorizon$};

  \draw[dashed] (3,0) -- (3,2);
  \draw[dashed] (8,0) -- (8,2);

  \node at (1.5,1) {
    \begin{tabular}{c}
      $\tildeO\left(\sqrt{\frac{\armNum\timeHorizon^2}{\switchRound}}\right)$\\
      $\stopCriteria=\sqrt{\frac{\armNum\log(\timeHorizon)}{\switchRound}}$
    \end{tabular}
  };
  \node at (5.5,1) {
    \begin{tabular}{c}
      $\tildeO\left(\armNum^{\sfrac{1}{3}}(\timeHorizon-\switchRound)^{\sfrac{2}{3}}\right)$\\
      $\stopCriteria=\left(\frac{\armNum\log(\timeHorizon-\switchRound)}{\timeHorizon-\switchRound}\right)^{\sfrac{1}{3}}$
    \end{tabular}
  };
  \node at (9.5,1) {
    \begin{tabular}{c}
      $\tildeO\left(\sqrt{\armNum\timeHorizon}\right)$\\
      $\stopCriteria=\sqrt{\frac{\armNum\timeHorizon\log(\timeHorizon)}{(\timeHorizon-\switchRound)^2}}$
    \end{tabular}
  };
  
  \draw[
    decorate,
    decoration={brace, amplitude=6pt, mirror},
    yshift=-4pt
  ]
  (0,0) -- node[below=6pt] {I} (3,0);

  \draw[
    decorate,
    decoration={brace, amplitude=6pt, mirror},
    yshift=-4pt
  ]
  (3,0) -- node[below=6pt] {II} (8,0);

  \draw[
    decorate,
    decoration={brace, amplitude=6pt, mirror},
    yshift=-4pt
  ]
  (8,0) -- node[below=6pt] {III} (11,0);

\end{tikzpicture}}
\caption{Phase Transition of The Regret Upper Bound of Algorithm {\RAEC}}
\label{fig:upper_bnd_independent}
\end{figure}
We summarize the phase transition of the regret upper bound with respect to the different parameter regimes of $\switchRound$ in \Cref{fig:upper_bnd_independent}.
We highlight that in our algorithm {\RAEC}, designing $\stopCriteria$ is equivalent to determining the effort allocation between exploring the optimal arms in the experiment phase and the commitment phase.
Our algorithm {\RAEC} determines $\stopCriteria$ in advance instead of adaptively via learning the instance structure.
This simple yet effective design makes {\RAEC} easy to implement in practice while also providing transparency in how much effort is devoted to exploring good arms for the commitment phase.
However, a key question is whether our simple design can achieve the best possible performance for any online learning algorithm.
We answer this affirmatively by establishing a matching lower bound (see \Cref{thm:instance independent lower bound}).



\section{The Regret Upper Bounds Analysis}
\label{sec:upper bound}

\renewcommand{\Deltaf}[1]{\Delta_{\rewFn, #1}}
\renewcommand{\Deltag}[1]{\Delta_{\rewFnCommit,#1}}

\renewcommand{\optf}{\optArm_\rewFn}
\renewcommand{\optg}{\optArm_\rewFnCommit}

\newcommand{\eventa}[1]{\text{Event}_1\left(#1\right)}
\newcommand{\eventb}[1]{\text{Event}_2\left(#1\right)}
\newcommand{\eventc}[1]{\text{Event}_3\left(#1\right)}

\newcommand{\goodeventi}{\text{Good event $i$}}
\newcommand{\badeventi}{\text{Bad event $i$}}

\newcommand{\groupone}{\text{Group 1}}
\newcommand{\grouptwo}{\text{Group 2}}
\newcommand{\grouponea}{\text{Group 1a}}
\newcommand{\grouponeb}{\text{Group 1b}}

In this section, we provide the analysis for the regret upper bound of Algorithm {\RAEC}, i.e., we prove \Cref{thm:upper bound}.
Instead of directly proving the instance-independent upper bound presented in \Cref{thm:upper bound}, we first prove the following stronger result (see \Cref{prop:instance-dependent ub}) -- an instance-dependent regret bound for Algorithm {\RAEC}. 
We then discuss how to obtain the upper bound in \Cref{thm:upper bound} from \Cref{prop:instance-dependent ub}.

\ppedit{Before discussing Algorithm {\RAEC}'s instance-dependent regret upper bound, we recall the two instance spaces defined in the preliminaries. In the outcome-observation model, $\instance^{\outcome}\in\mathcal{E}_{\outcome}$ and $V_{\rewFn,i},V_{\rewFnCommit,i}$ are the reward distributions induced by $\outcomeDist_i$. In the independent-signal model, $\instance\in\mathcal{E}_{\mathrm{ind}}$ and $V_{\rewFn,i},V_{\rewFnCommit,i}$ are the two signal distributions of arm $i$. The upper-bound analysis below applies to both models because it only uses the marginal concentration of the two observed rewards and does not require independence between them.}
\ppedit{Recall that $\mu_{\rewFn,i}=\mu(V_{\rewFn,i})$ and $\mu_{\rewFnCommit,i}=\mu(V_{\rewFnCommit,i})$. Let $\mu_{\rewFn}^*=\max_{i\in[\armNum]}\mu_{\rewFn,i}$ and $\mu_{\rewFnCommit}^*=\max_{i\in[\armNum]}\mu_{\rewFnCommit,i}$ be the mean of the optimal arm w.r.t.\ reward function $\rewFn, \rewFnCommit$ respectively. Then, let $\Deltaf{i}=\mu_{\rewFn}^*-\mu_{\rewFn,i}$ and $\Deltag{i}=\mu_{\rewFnCommit}^*-\mu_{\rewFnCommit,i}$ be the suboptimality gap. We denote the experiment- and commitment-suboptimal arm sets by $\mathcal{S}_{\rewFn}:=\{i\in[\armNum]:\Deltaf{i}>0\}$ and $\mathcal{S}_{\rewFnCommit}:=\{i\in[\armNum]:\Deltag{i}>0\}$, respectively.}

With these definitions, we are ready to present the instance-dependent regret upper bound of Algorithm {\RAEC}.
\begin{restatable}{proposition}{propinsdub}
\label[proposition]{prop:instance-dependent ub}
\ppedit{Given parameter $\stopCriteria\in(0,1)$, under either the outcome-observation model or the independent-signal model, the regret of Algorithm {\RAEC} can be upper bounded as follows:}
\begin{align}
    &\ppedit{\Reg[]{\switchRound, \timeHorizon}}
    \leq \ppedit{O\left(\begin{aligned}
    &\sum\nolimits_{i\in\mathcal{S}_{\rewFn}}\Deltaf{i}\cdot\max\left\{\frac{\log(\timeHorizon - \switchRound)}{\max\left\{\Deltag{i},\stopCriteria\right\}^2},\frac{\log(\switchRound)}{\Deltaf{i}^2}\right\}\\
    &+(\timeHorizon - \switchRound)\cdot\max_{i:\Deltag{i}<2\stopCriteria}\{\Deltag{i}\}+\frac{\armNum\log\left(1/\stopCriteria\right)}{(\timeHorizon - \switchRound)}\\
    &+\frac{\armNum\log(\switchRound)}{\switchRound}+\frac{\armNum\switchRound\log(1/\stopCriteria)}{(\timeHorizon-\switchRound)^2}
    \end{aligned}\right).}
    \label{eqn:regret}
\end{align}
\end{restatable}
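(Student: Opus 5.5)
The plan is to decompose the regret into an experiment-phase term and a commitment-phase term, and to control each on a high-probability good event built from Hoeffding's inequality. For Stage I, let $\mathcal{G}_\rewFnCommit$ be the event that, for every epoch $\epoch\le\epochNum$ and every arm $i\in\armg{\epoch}$, $|\empiricalmeang{i,\epoch}-\mu_{\rewFnCommit,i}|\le\elimiCriteria_\epoch/4$. Since each such arm has $\roundlen{\rewFnCommit}{\epoch}=4\log(\timeHorizon-\switchRound)/\elimiCriteria_\epoch^2$ samples (up to constants in the definition of the sample size), Hoeffding plus a union bound over at most $\armNum$ arms and $\epochNum=O(\log(1/\stopCriteria))$ epochs gives $\PP(\mathcal{G}_\rewFnCommit^c)\lesssim \armNum\log(1/\stopCriteria)/(\timeHorizon-\switchRound)^2$. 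On this complement the commitment loss is at most $\timeHorizon-\switchRound$, which produces the term $\armNum\log(1/\stopCriteria)/(\timeHorizon-\switchRound)$. Analogously, define $\mathcal{G}_\rewFn$ for Stage II with $\roundlen{\rewFn}{\epoch}$, failing with probability $\lesssim\armNum\log(\switchRound)/\switchRound^2$. Multiplying by the worst-case experiment loss $\switchRound$ yields $\armNum\log(\switchRound)/\switchRound$. Because the analysis uses only marginal concentration of each reward stream, it applies to both the outcome-observation model and the independent-signal model.

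On $\mathcal{G}_\rewFnCommit$ I will run the standard elimination induction. First, the arm $\optg$ is never eliminated, since its empirical mean is within $\elimiCriteria_\epoch/2$ of the empirical maximum. Second, any arm with $\Deltag{i}>2\elimiCriteria_\epoch$ is eliminated by the end of epoch $\epoch$. Stage I ends either after $\epochNum$ epochs, where $\elimiCriteria_{\epochNum+1}\le\stopCriteria/2$, or earlier because the budget $\switchRound$ is exhausted. In the first case every surviving arm has $\Deltag{i}<2\stopCriteria$, so the uniform commitment costs at most $(\timeHorizon-\switchRound)\max_{i:\Deltag{i}<2\stopCriteria}\Deltag{i}$.

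The second case is the main obstacle. When the cap truncates Stage I, survivors may have gaps larger than $2\stopCriteria$. I plan to handle this by noting that truncation at epoch $\epoch$ requires $\switchRound\lesssim \armNum\log(\timeHorizon-\switchRound)/\elimiCriteria_\epoch^2$. Hence the survivors have gap $\lesssim\sqrt{\armNum\log(\timeHorizon-\switchRound)/\switchRound}$, and the resulting loss should be absorbed by the last term $\armNum\switchRound\log(1/\stopCriteria)/(\timeHorizon-\switchRound)^2$, or by the $\max$ term after interpreting $\stopCriteria$ as the effective accuracy reached. I expect matching this bookkeeping exactly to the stated form to take the most care, and I may need a case split on whether $\switchRound\ge\armNum\roundlen{\rewFnCommit}{\epochNum}$.

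For the experiment-phase regret on $\mathcal{G}_\rewFnCommit\cap\mathcal{G}_\rewFn$, I fix an arm $i\in\mathcal{S}_\rewFn$ and bound its total pulls. In Stage I it is sampled until it is eliminated for $\rewFnCommit$, i.e.\ until $\elimiCriteria_\epoch\lesssim\Deltag{i}$, and never beyond $\elimiCriteria_\epoch\approx\stopCriteria$. This gives $O(\log(\timeHorizon-\switchRound)/\max\{\Deltag{i},\stopCriteria\}^2)$ pulls, using that the geometric sum of the $\roundlen{\rewFnCommit}{\epoch}$ is dominated by its last term. In Stage II the arm is pulled only until its cumulative count reaches $\roundlen{\rewFn}{\epoch}$ for the epoch at which it is eliminated, where $\elimiCriteria_\epoch\gtrsim\Deltaf{i}/4$; this gives $O(\log(\switchRound)/\Deltaf{i}^2)$ total pulls. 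Because Stage II tops up counts rather than adding fresh ones, the total is the maximum of the two counts, not their sum. Multiplying by $\Deltaf{i}$ and summing over $i$, then adding the good-event and failure contributions, gives \eqref{eqn:regret}.
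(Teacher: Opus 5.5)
Your good-event case follows the paper, but the step you flag as the main obstacle fails as proposed. If the cap $N$ cuts Stage I short, surviving arms can have $g$-gaps of order $\sqrt{K\log(T-N)/N}$ however small $\varepsilon$ is. The resulting commitment loss, of order $(T-N)\sqrt{K\log(T-N)/N}$, is not dominated by any term of \eqref{eqn:regret}. The term $(T-N)\max_{i:\Delta_{g,i}<2\varepsilon}\Delta_{g,i}$ only involves arms with gap below $2\varepsilon$, and $KN\log(1/\varepsilon)/(T-N)^2$ can be far smaller.

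No bookkeeping fixes this, because the inequality itself fails there. Take the following instance:
\begin{itemize}
\item $K=2$ with identical $f$-distributions, so $\mathcal{S}_f=\emptyset$;
\item $\ell^*$ the first Stage I epoch that cannot complete within $N$ rounds;
\item $g$-gap $\delta=\widetilde{\Delta}_{\ell^*}=\Theta(\sqrt{\log(T-N)/N})$;
\item $\varepsilon\le\delta/2$.
\end{itemize}
In the last completed epoch the elimination threshold is $2\delta$, while the empirical gap concentrates at $\delta$ with fluctuations of order $\delta/\sqrt{\log(T-N)}$. So both arms survive with probability tending to one, and uniform commitment costs about $(T-N)\delta/2$. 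The right-hand side of \eqref{eqn:regret} is only $O(\log(1/\varepsilon)/(T-N)+\log N/N+N\log(1/\varepsilon)/(T-N)^2)$.

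The right move is to exclude truncation rather than absorb it. Impose the floor $\varepsilon\ge 4\sqrt{K\log(T-N)/N}$, which is built into the choice of $\varepsilon$ in \Cref{thm:upper bound} and is what the paper calls the feasibility condition. Then $\widetilde{\Delta}_L\ge\varepsilon/2$ gives $K m_{g,L}\le N$. Since per-epoch targets are cumulative, Stage I completes all $L$ epochs even if no arm is ever eliminated. The paper's commitment-phase argument implicitly relies on this, since it uses that every surviving arm has $m_{g,L}$ samples. State it as a hypothesis; your first case then becomes the only case.

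Two smaller corrections. First, the last term $KN\log(1/\varepsilon)/(T-N)^2$ is not spare room. It is the experiment-phase loss (at most $N$) on the failure of the Stage I concentration event, whose probability is $O(KL/(T-N)^2)$ with $L=O(\log(1/\varepsilon))$. You need it for this, because your Stage I pull count $O(\log(T-N)/\max\{\Delta_{g,i},\varepsilon\}^2)$ holds only on $\mathcal{G}_g$.

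Second, the sample sizes $m_{g,\ell}=4\log(T-N)/\widetilde{\Delta}_\ell^2$ and $m_{f,\ell}=4\log N/\widetilde{\Delta}_\ell^2$ are fixed by the algorithm, so you cannot retune their constants. At accuracy $\widetilde{\Delta}_\ell/4$, Hoeffding only gives $2(T-N)^{-1/2}$ per arm and epoch. Use accuracy $\widetilde{\Delta}_\ell/2$, as the paper does. This gives $2(T-N)^{-2}$ (and $2N^{-2}$ for $f$), and both of your elimination claims still hold.

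With these changes the rest of your plan coincides with the paper's proof. That includes taking the per-arm pull count as the maximum of the Stage I and Stage II targets. Your uniform good event for the commitment phase is a slightly simpler substitute for the paper's per-arm tail bound on the event that arm $j$ and the $g$-optimal arm both survive.
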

The first term in the regret bound \eqref{eqn:regret} is the upper bound of the instance-dependent regret incurred in the experiment phase, and the second term is the upper bound of the regret in the commitment phase.

\begin{proof}[Proof Sketch of \Cref{prop:instance-dependent ub}]
We now describe a proof sketch of \Cref{prop:instance-dependent ub}, which consists of three main steps.

\noindent 
\textbf{Step 1} -- 
We first upper bound the expected regret, denoted by $\Rega{\switchRound}$, of the arm elimination process for the reward function $\rewFn$ (namely, eliminating arms from the active arm sets $\armf{\epoch}$) from epoch $1$ to $\numround$.
See below \Cref{lem:regret1}.
\begin{restatable}{lemma}{lemregone}
\label[lemma]{lem:regret1}
$\Rega{\switchRound}\leq O\left(\sum\nolimits_{i\in\mathcal{S}_{\rewFn}}\frac{\log(\switchRound)}{\Deltaf{i}}\right)$.
\end{restatable}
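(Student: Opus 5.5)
This is the standard successive-elimination analysis applied to the $\rewFn$-elimination process, with one twist: pulls an arm received during Stage I count toward its quotas $\roundlen{\rewFn}{\epoch}$. Since we only charge regret for the pulls that Stage II itself adds to reach those quotas, the Stage I samples can only help.

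\textbf{Step 1 (good event).} Define the good event $\mathcal{G}$ under which, for every epoch $\epoch$ of the $\rewFn$-process and every arm $i\in\armf{\epoch}$, the empirical mean satisfies $|\empiricalmeanf{i,\epoch}-\mu_{\rewFn,i}|\le \elimiCriteria_\epoch/4$. Arm $i$ has at least $\roundlen{\rewFn}{\epoch}=4\log(\switchRound)/\elimiCriteria_\epoch^2$ samples at that point. Hoeffding's inequality then gives a per-(arm, epoch) failure probability of at most $2\exp(-2\roundlen{\rewFn}{\epoch}\elimiCriteria_\epoch^2/16)=2\switchRound^{-1/2}$. This is too weak, so I would enlarge the deviation bound to $\elimiCriteria_\epoch/2$, giving $2\exp(-2\log\switchRound)=2/\switchRound^2$. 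Alternatively, I would absorb constants by choosing the deviation radius as $\sqrt{\log(\switchRound)/\roundlen{\rewFn}{\epoch}}=\elimiCriteria_\epoch/2$. There are at most $\armNum$ arms and $O(\log\switchRound)$ epochs (since $\roundlen{\rewFn}{\epoch}\le\switchRound$), so a union bound gives $\PP(\mathcal{G}^c)\le O(\armNum\log\switchRound/\switchRound^2)$. On $\mathcal{G}^c$ the regret is at most $\switchRound$, contributing $O(\armNum\log\switchRound/\switchRound)$. This is a lower-order term, and it matches the term $\armNum\log(\switchRound)/\switchRound$ in \eqref{eqn:regret}, which can be charged there. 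One subtlety: samples are adaptively collected, so I would apply Hoeffding to the first $n$ samples of each arm via the standard ``reward table'' (stack) argument.

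\textbf{Step 2 (behavior on $\mathcal{G}$).} First, $\optf$ is never eliminated, since $\max_j\empiricalmeanf{j,\epoch}-\empiricalmeanf{\optf,\epoch}\le 2\cdot\elimiCriteria_\epoch/2\cdot$(appropriate constant)$\le\elimiCriteria_\epoch$. Second, any arm with $\Deltaf{i}>2\elimiCriteria_\epoch$ (constant to be tuned) is eliminated by the end of epoch $\epoch$. Hence an arm $i$ with gap $\Deltaf{i}$ survives only through the epoch $\epoch_i$ where $\elimiCriteria_{\epoch_i}\approx\Deltaf{i}/c$. Its total number of pulls in the $\rewFn$-process is therefore at most $\roundlen{\rewFn}{\epoch_i}=O(\log(\switchRound)/\Deltaf{i}^2)$.

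\textbf{Step 3 (summation).} The regret on $\mathcal{G}$ is $\sum_{i\in\mathcal{S}_\rewFn}\Deltaf{i}\cdot O(\log\switchRound/\Deltaf{i}^2)=O(\sum_{i\in\mathcal{S}_\rewFn}\log(\switchRound)/\Deltaf{i})$. Two boundary effects need care. An incomplete final epoch (cut off at the cap $\switchRound$) only reduces pulls. Arms with tiny gaps that are never eliminated before the cap are still bounded by the same expression, since their pulls are capped by the epoch reached.

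I expect the main obstacle to be the bookkeeping of the constants in the concentration radius versus the elimination threshold $\elimiCriteria_\epoch$, given the specific choice $\roundlen{\rewFn}{\epoch}=4\log(\switchRound)/\elimiCriteria_\epoch^2$. A second difficulty is handling the adaptively collected Stage I samples rigorously. I would try the reward-table argument first. If constants fail, I would accept a failure probability of $\switchRound^{-1}$ per pair, which still yields an additive $O(\armNum\log\switchRound)$ term. That term is dominated by $\sum_i\log(\switchRound)/\Deltaf{i}$ whenever gaps are at most $1$.
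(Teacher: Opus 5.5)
Your proposal is correct and follows the paper's proof essentially step for step. The shared steps are: a good event at radius $\widetilde{\Delta}_\ell/2$ with per-(arm, epoch) failure probability $2/N^2$, where the paper applies martingale Hoeffding--Azuma to the first $m_{f,\ell}$ samples of each arm in place of your reward-table argument; non-elimination of the $f$-optimal arm; elimination of arm $i$ once $\widetilde{\Delta}_\ell$ falls below a constant fraction of $\Delta_{f,i}$ (the paper uses $\Delta_{f,i}/4$, you use $\Delta_{f,i}/2$, and both work); and absorbing the $O(K\log(N)/N)$ failure term into the sum, which is at least of order $\log N$ whenever $\mathcal{S}_f\neq\emptyset$. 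The only inaccurate remark is your fallback: an additive $O(K\log N)$ term is not in general dominated by $\sum_{i\in\mathcal{S}_f}\log(N)/\Delta_{f,i}$ when only a few arms are suboptimal, but your main route never needs it.
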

\textbf{Step 2} -- We next upper bound the expected regret, $\Regb{\switchRound}$, of arm elimination process for the reward function $\rewFnCommit$ (namely, eliminating arms from the active arm sets $\armg{\epoch}$).
We note that to bound the regret incurred by pulling arms in the active arm set $\armg{\epoch}$, it essentially reduces to upper-bounding the expected total number of rounds used to pull the arms in the set $\armg{\epoch}$.
See below \Cref{lem:regret2}.
\begin{restatable}{lemma}{lemregtwo}
\label[lemma]{lem:regret2}
\ppedit{Given target commitment precision $\stopCriteria\in(0,1)$,}
$$\ppedit{\Regb{\switchRound}\leq O\left(\sum\nolimits_{i\in\mathcal{S}_{\rewFn}}\Deltaf{i}\cdot\max\left\{\frac{\log(\timeHorizon-\switchRound)}{\max\left\{\Deltag{i},\stopCriteria\right\}^2},\frac{\log(\switchRound)}{\Deltaf{i}^2}\right\}+\frac{\armNum\log(\switchRound)}{\switchRound}+\frac{\armNum\switchRound\log(1/\stopCriteria)}{(\timeHorizon-\switchRound)^2}\right).}$$
\end{restatable}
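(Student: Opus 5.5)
We bound the regret incurred during Stage I, i.e.\ the rounds spent pulling arms in the active sets $\armg{\epoch}$ for $\epoch\le\epochNum$. This regret is measured against the experiment-phase benchmark $\mu_{\rewFn}^*$, so each pull of arm $i$ costs $\Deltaf{i}$. Writing $n_i$ for the number of Stage-I pulls of arm $i$, we have $\Regb{\switchRound}=\sum_{i\in\mathcal{S}_{\rewFn}}\Deltaf{i}\,\expect{n_i}$, and it suffices to control $\expect{n_i}$ arm by arm. The plan is to split on a good event and to charge each arm's Stage-I pulls to the smaller of two budgets. The first budget is what the $\rewFnCommit$-elimination itself allows, about $\log(\timeHorizon-\switchRound)/\max\{\Deltag{i},\stopCriteria\}^2$. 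The second is the amount that Stage II would have spent on the arm anyway, about $\log(\switchRound)/\Deltaf{i}^2$. Taking the maximum of the two gives a valid upper bound.

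\textbf{Step 1: good event for Stage I.} Define $G$ as the event that for every epoch $\epoch\le\epochNum$ and every arm $i\in\armg{\epoch}$, $|\empiricalmeang{i,\epoch}-\mu_{\rewFnCommit,i}|\le\elimiCriteria_\epoch/4$. With $\roundlen{\rewFnCommit}{\epoch}=4\log(\timeHorizon-\switchRound)/\elimiCriteria_\epoch^2$ samples, Hoeffding's inequality gives a failure probability per arm and epoch of order $(\timeHorizon-\switchRound)^{-c}$; we may enlarge the constant in $\roundlen{\rewFnCommit}{\epoch}$ if needed to make $c\ge 2$. A union bound over $\armNum$ arms and $\epochNum=O(\log(1/\stopCriteria))$ epochs gives $\PP(G^c)\le O(\armNum\log(1/\stopCriteria)/(\timeHorizon-\switchRound)^2)$. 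On $G^c$ the Stage-I regret is at most $\switchRound$, which contributes the term $\armNum\switchRound\log(1/\stopCriteria)/(\timeHorizon-\switchRound)^2$.

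\textbf{Step 2: pull counts on $G$.} On $G$ the standard elimination argument applies: $\optg$ is never eliminated, and every arm with $\Deltag{i}>2\elimiCriteria_\epoch$ is removed at the end of epoch $\epoch$. So arm $i$ survives only while $\elimiCriteria_\epoch\gtrsim\Deltag{i}$. Stage I also stops once $\elimiCriteria_\epoch$ reaches about $\stopCriteria$. Together these give $n_i\le O\bigl(\log(\timeHorizon-\switchRound)/\max\{\Deltag{i},\stopCriteria\}^2\bigr)$, by the geometric sum of $\roundlen{\rewFnCommit}{\epoch}$ up to the last surviving epoch. Multiplying by $\Deltaf{i}$ yields the first entry in the max. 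The second entry, $\log(\switchRound)/\Deltaf{i}^2$, only loosens the bound, which is harmless because it matches the Stage-II pull count from \Cref{lem:regret1}.

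\textbf{Step 3: truncation at the cap $\switchRound$.} If the cap $\switchRound$ is hit mid-epoch, the bound in Step 2 still holds because pulls only stop early. The remaining bookkeeping term $\armNum\log(\switchRound)/\switchRound$ comes from a Stage-II-type failure event for the $\rewFn$-estimates. That event matters here only because samples gathered in Stage I are later reused in Stage II; we absorb it with the same Hoeffding and union-bound argument at confidence level $\switchRound^{-2}$.

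\textbf{Main obstacle.} The delicate step is Step 1's conversion from failure probability to regret. A naive bound would multiply the failure probability by $\switchRound$ and lose the squared factor, so we need a union bound that is tight enough to give the $(\timeHorizon-\switchRound)^{-2}$ rate. I would first try Hoeffding with a slightly inflated constant in $\roundlen{\rewFnCommit}{\epoch}$, accepting a constant-factor change in the bound.
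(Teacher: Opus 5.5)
Your proof is correct in substance and takes a leaner route than the paper's, but one step has to be fixed differently from the patch you suggest.

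\textbf{Comparison with the paper.} The paper uses a joint good event on both the $f$- and $g$-estimates, each accurate to within $\widetilde{\Delta}_\ell/2$. Because Stage-I samples are reused in Stage II and the targets are cumulative, it bounds the pulls of arm $i$ charged to the two processes by $\max\{m_{f,L^f_i}, m_{g,\min\{L^g_i,L\}}\}$. That is where the $\max$ in the statement comes from. The $K\log(N)/N$ term is $N$ times the $f$-part of $\mathbb{P}(\mathcal{G}^c)$. You instead charge only Stage-I pulls and use only the $g$-event, obtaining $\sum_{i\in\mathcal{S}_f}\Delta_{f,i}\,O\bigl(\log(T-N)/\max\{\Delta_{g,i},\varepsilon\}^2\bigr)+N\,\mathbb{P}(G^c)$. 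The $\log(N)/\Delta_{f,i}^2$ entry and the $K\log(N)/N$ term are then pure slack. This gives a simpler and sharper Stage-I bound. Stage II is covered by \Cref{lem:regret1} in the overall decomposition, which is legitimate because Stage II only tops arms up to the cumulative targets $m_{f,\ell}$. Your Step 3 is therefore unnecessary: no $f$-failure event enters your argument. (Minor: since targets are cumulative, arm $i$'s Stage-I pulls equal $m_{g,\ell^*}$ for its last active epoch $\ell^*$, not a geometric sum; this only affects constants.)

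\textbf{The step that needs repair.} With tolerance $\widetilde{\Delta}_\ell/4$ and the algorithm's fixed $m_{g,\ell}=4\log(T-N)/\widetilde{\Delta}_\ell^2$, Hoeffding gives only $2\exp(-m_{g,\ell}\widetilde{\Delta}_\ell^2/8)=2(T-N)^{-1/2}$. You may not enlarge the constant in $m_{g,\ell}$, because the lemma concerns RAEC as specified. Use tolerance $\widetilde{\Delta}_\ell/2$, as the paper does. Then:
\begin{itemize}
\item The failure probability per arm and epoch is $2\exp(-m_{g,\ell}\widetilde{\Delta}_\ell^2/2)=2(T-N)^{-2}$. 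Use the martingale form of Hoeffding, since each arm's samples are collected at adaptively chosen times.
\item $I^*_g$ is still never eliminated, since $\hat{\mu}_{g,j,\ell}-\hat{\mu}_{g,I^*_g,\ell}\le\widetilde{\Delta}_\ell$ for every active $j$.
\item Every arm with $\Delta_{g,i}>2\widetilde{\Delta}_\ell$ is still removed, so Step 2 goes through unchanged.
\end{itemize}
This also removes your ``main obstacle''. Multiplying $\mathbb{P}(G^c)\le 2KL/(T-N)^2$ by $N$ is exactly what produces the term $KN\log(1/\varepsilon)/(T-N)^2$; nothing is lost.
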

\textbf{Step 3} --
Finally, we upper bound the expected regret, denoted by $\Regc{\switchRound, \timeHorizon}$, incurred in the commitment phase.
See \Cref{lem:regret3}.
\begin{restatable}{lemma}{lemregthree}
\label[lemma]{lem:regret3}
\ppedit{Given target commitment precision $\stopCriteria\in(0,1)$,} $$\Regc{\switchRound, \timeHorizon}\leq O\left((\timeHorizon - \switchRound)\max_{i:\Deltag{i}<2\stopCriteria}\{\Deltag{i}\}+\frac{\armNum\log\left(1/\stopCriteria\right)}{(\timeHorizon - \switchRound)}\right).$$
\end{restatable}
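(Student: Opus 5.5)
We prove \Cref{lem:regret3} with a good-event argument on the Stage I elimination for $\rewFnCommit$, and then bound the commitment regret by $(\timeHorizon-\switchRound)\cdot\expect{\Deltag{\commitArm_\switchRound}}$. For each epoch $\epoch\le\epochNum$ and each arm $i\in\armg{\epoch}$ that completes epoch $\epoch$, the empirical mean $\empiricalmeang{i,\epoch}$ uses at least $\roundlen{\rewFnCommit}{\epoch}=4\log(\timeHorizon-\switchRound)/\elimiCriteria_\epoch^2$ samples. By Hoeffding,
\[
\prob{|\empiricalmeang{i,\epoch}-\mu_{\rewFnCommit,i}|>\elimiCriteria_\epoch/4}\le 2\exp\left(-\tfrac{1}{8}\roundlen{\rewFnCommit}{\epoch}\elimiCriteria_\epoch^2\right)=2(\timeHorizon-\switchRound)^{-1/2}.
\]
Since $(\timeHorizon-\switchRound)^{-1/2}$ is not small enough after multiplying by $\timeHorizon-\switchRound$, I would instead use the confidence radius $\elimiCriteria_\epoch/2$. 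That gives failure probability $2\exp(-2\cdot 4\log(\timeHorizon-\switchRound)/4)=2(\timeHorizon-\switchRound)^{-2}$, possibly after adjusting constants in the Hoeffding exponent. A union bound over the at most $\armNum$ arms and $\epochNum=\lceil\log_2(1/\stopCriteria)\rceil$ epochs bounds the bad-event probability by $O(\armNum\log(1/\stopCriteria)/(\timeHorizon-\switchRound)^2)$. Multiplying by the at most $\timeHorizon-\switchRound$ commitment rounds yields the second term $\armNum\log(1/\stopCriteria)/(\timeHorizon-\switchRound)$.

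On the good event, we use the standard elimination invariants.

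\textbf{Optimal arm survives.} $\optg$ is never eliminated: if $|\hat\mu-\mu|\le\elimiCriteria_\epoch/2$ for all arms, then $\max_j\empiricalmeang{j,\epoch}-\empiricalmeang{\optg,\epoch}\le \elimiCriteria_\epoch$. Getting the constant to match the elimination threshold exactly is the delicate point.

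\textbf{Bad arms are removed.} Any arm with $\Deltag{i}>2\elimiCriteria_\epoch$ is eliminated at the end of a completed epoch $\epoch$, since its gap to $\optg$ in empirical means exceeds $\elimiCriteria_\epoch$.

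Hence, if all $\epochNum$ Stage I epochs complete, every surviving arm satisfies $\Deltag{i}\le 2\elimiCriteria_{\epochNum}\le 2\stopCriteria$, because $\elimiCriteria_{\epochNum}=2^{-\epochNum}\le\stopCriteria/2$ roughly. Then the uniform random commitment incurs per-round regret at most $\max_{i:\Deltag{i}<2\stopCriteria}\Deltag{i}$, where the boundary case $\Deltag{i}=2\stopCriteria$ is handled by a constant slack in the indexing.

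The main obstacle is that Stage I may be truncated by the cap $\switchRound$ before all $\epochNum$ epochs finish. The active set is then only guaranteed to satisfy $\Deltag{i}\le 2\elimiCriteria_{\epoch'}$ for the last completed epoch $\epoch'$, which may be much larger than $\stopCriteria$. I would handle this either by showing that the choice of $\stopCriteria$ guarantees completion, since the total Stage I length $\sum_{\epoch\le\epochNum}\armNum\,\roundlen{\rewFnCommit}{\epoch}=O(\armNum\log(\timeHorizon-\switchRound)/\stopCriteria^2)\le\switchRound$ whenever $\stopCriteria\ge4\sqrt{\armNum\log(\timeHorizon-\switchRound)/\switchRound}$, or by treating the lemma as stated for $\stopCriteria$ in this feasible range. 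The parameter choice in \Cref{thm:upper bound} enforces exactly this range through its first term.

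Finally, the expected commitment regret is at most the good-event contribution plus the bad-event contribution, which gives the bound in \Cref{lem:regret3}.
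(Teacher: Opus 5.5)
Your proof is correct. It differs from the paper's only in how the failure probability is organized.

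The paper never forms a global good event. It splits the commitment regret according to whether $\optg$ survives Stage I, and handles the two cases separately:
\begin{itemize}
\item It bounds $\prob{\optg\notin\armg{\epochNum+1}}\le\armNum\epochNum/(\timeHorizon-\switchRound)^2$ by a union bound over epochs and competitors.
\item For each arm $j$ with $\Deltag{j}\geq2\stopCriteria$, it bounds the probability that $j$ and $\optg$ both survive by a single pairwise Hoeffding bound at epoch $\epochNum$, namely $\exp\bigl(-\roundlen{\rewFnCommit}{\epochNum}(\Deltag{j}-\myDelta{\epochNum})^2/2\bigr)\le(\timeHorizon-\switchRound)^{-2}$, using $\Deltag{j}-\myDelta{\epochNum}\geq\Deltag{j}/2$.
\end{itemize}
Your uniform event over all $\armNum\epochNum$ empirical means gives the same two terms with a cleaner deterministic argument on the good event.

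The pairwise route has one small advantage: the boundary case $\Deltag{j}=2\stopCriteria$ needs no special care. In your version, define the good event with strict inequalities $|\hat\mu-\mu|<\myDelta{\epoch}/2$. Hoeffding bounds $\prob{|\hat\mu-\mu|\geq t}$, so the failure probability is unchanged, and survivors then satisfy $\Deltag{i}<2\myDelta{\epochNum}\leq2\stopCriteria$.

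Also, $\myDelta{\epochNum}=2^{-\epochNum}$ lies in $(\stopCriteria/2,\stopCriteria]$, not below $\stopCriteria/2$. Your conclusion $2\myDelta{\epochNum}\leq2\stopCriteria$ is still right.

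You are right that a feasibility condition is needed. The paper's proof of this lemma tacitly assumes epoch $\epochNum$ completes, and justifies it only afterwards through the parameter choice in \Cref{thm:upper bound}. Without it, the lemma for arbitrary $\stopCriteria\in(0,1)$ need not hold.

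One correction on that check: the per-epoch targets are cumulative, so Stage I uses at most $\armNum\roundlen{\rewFnCommit}{\epochNum}<16\armNum\log(\timeHorizon-\switchRound)/\stopCriteria^2$ pulls. This is at most $\switchRound$ exactly when $\stopCriteria\geq4\sqrt{\armNum\log(\timeHorizon-\switchRound)/\switchRound}$. Summing over epochs, as you did, overcounts by up to a factor $4/3$ and would not close with that constant.
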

Summing $\Rega{\switchRound, \timeHorizon}$, $\Regb{\switchRound, \timeHorizon}$, and $\Regc{\switchRound, \timeHorizon}$ completes the proof.
\end{proof}

\ppedit{
For the choice used in \Cref{thm:upper bound}, we include a constant factor in the feasibility floor and set
$\stopCriteria=\max\left\{4\sqrt{\frac{\armNum\log(\timeHorizon - \switchRound)}{\switchRound}},\left(\frac{\armNum\cdot\log(\timeHorizon - \switchRound)}{\timeHorizon - \switchRound}\right)^{\sfrac{1}{3}},\sqrt{\frac{\armNum\timeHorizon\log(\timeHorizon)}{(\timeHorizon - \switchRound)^2}}\right\}$.
Indeed, $\myDelta{\epochNum}\geq\stopCriteria/2$ implies $\armNum\roundlen{\rewFnCommit}{\epochNum}\leq\switchRound$, so all reserved epochs can be completed even if all arms remain active. This constant factor does not change any regret rate.}
\ppedit{Under this choice, the two additional failure probability terms in \Cref{lem:regret2} are lower order and are absorbed by the displayed regret rate. In particular, when $\stopCriteria<1$, the last component in its definition and $\log(1/\stopCriteria)\leq\log(\timeHorizon)$ imply $\armNum\switchRound\log(1/\stopCriteria)/(\timeHorizon-\switchRound)^2=O(1)$; when $\stopCriteria\geq1$, Stage I is skipped.}
Then, for a given instance, $\Regc{\switchRound, \timeHorizon}$ is dominated by $\Rega{\switchRound, \timeHorizon}$ and $\Regb{\switchRound, \timeHorizon}$.
As both $\switchRound$ and $\timeHorizon$ grow, since $\log(\timeHorizon - \switchRound)/\switchRound$ diminishes (due to $\switchRound= \Omega\left(\cc{poly}(\timeHorizon)\right)$), the instance-dependent upper bound could be further written as
\begin{equation}
\Reg[]{\switchRound, \timeHorizon\mid\instance}
\leq O\left(\sum\nolimits_{i\in\mathcal{S}_{\rewFn}}\Deltaf{i}\cdot\max\left\{\frac{\log(\timeHorizon - \switchRound)}{\max\left\{\Deltag{i},\stopCriteria\right\}^2},\frac{\log(\switchRound)}{\Deltaf{i}^2}\right\}\right).
\label{eqn:regret_instance_dependent}
\end{equation}

\section{The Regret Lower Bound}
\label{sec:lower bound}

\newcommand{\textexp}{\cc{exp}}
\newcommand{\textcom}{\cc{com}}

In this section, we complement the regret upper bound of Algorithm {\RAEC} with a matching lower bound.
We present an instance-dependent lower bound, followed by an instance-independent lower bound (i.e., minimax lower bound).

We first introduce some definitions that will be useful for our analysis.
Let $\myF$ be a space of distributions with finite means.
For any $\mu \in \RR$ and a distribution $P \in \myF$ such that its mean satisfies $\mu(P) < \mu$, we define the following
\[
d_{\text{inf}}(P, \mu, \myF) = \inf_{P' \in \myF} \{D(P, P') : \mu(P') > \mu\}~,
\]
where $D(\cdot, \cdot)$ is the Kullback-Leibler divergence.
\ppedit{For the outcome-observation model, recall that $\instance^{\outcome}=(\outcomeDist_j)_{j=1}^{\armNum}\in\mathcal{E}_{\outcome}$. For $i\in\mathcal{S}_{\rewFn}$ and $j\in\mathcal{S}_{\rewFnCommit}$, respectively, define}
\[
\ppedit{d^{\outcome}_{\rewFn,i}=\inf_{\outcomeDist_i'\in\mathcal{V}}\left\{D(\outcomeDist_i,\outcomeDist_i'): \expect[\outcome\sim\outcomeDist_i']{\rewFn(\outcome)}>\mu_{\rewFn}^*\right\}~,~d^{\outcome}_{\rewFnCommit,j}=\inf_{\outcomeDist_j'\in\mathcal{V}}\left\{D(\outcomeDist_j,\outcomeDist_j'): \expect[\outcome\sim\outcomeDist_j']{\rewFnCommit(\outcome)}>\mu_{\rewFnCommit}^*\right\}.}
\]
\ppedit{
Here, $d^{\outcome}_{\rewFn,j}$ ($d^{\outcome}_{\rewFnCommit,j}$) characterizes how much arm $j$'s outcome distribution $\outcomeDist_j$ needs to be modified such that arm $j$'s expected $\rewFn$ ($\rewFnCommit$) reward exceeds the $\rewFn-$optimal ($\rewFnCommit-$optimal) arm.
For notational convenience, set $d^{\outcome}_{\rewFn,i}=\infty$ for $i\notin\mathcal{S}_{\rewFn}$ and $d^{\outcome}_{\rewFnCommit,j}=\infty$ for $j\notin\mathcal{S}_{\rewFnCommit}$. These are notational conventions for optimal arms.}

\ppedit{For the independent-signal model defined in \Cref{subsec:additional discussion}, pulling arm $i$ produces a pair of independent observations drawn from $V_{\rewFn,i}\times V_{\rewFnCommit,i}$.}
\ppedit{Given an instance $\instance\in\mathcal{E}_{\mathrm{ind}}$, for $i\in\mathcal{S}_{\rewFn}$ and $j\in\mathcal{S}_{\rewFnCommit}$, respectively, we define}
\begin{align*}
    d_{\rewFn,i}=d_{\text{inf}}(V_{\rewFn,i}, \mu_{\rewFn}^*, \mathcal{V}_{\rewFn,i})~,~d_{\rewFnCommit,j}=d_{\text{inf}}(V_{\rewFnCommit,j}, \mu_{\rewFnCommit}^*, \mathcal{V}_{\rewFnCommit,j}).
\end{align*}
For common distributions like Gaussian, Bernoulli with mean bounded away from $0$ and $1$, $d_{\rewFn,i}$ ($d_{\rewFnCommit,i}$) is of the same order as $\Deltaf{i}^2$ ($\Deltag{i}^2$).
For notational convenience, we set $d_{\rewFn,i}=\infty$ for $i\notin\mathcal{S}_{\rewFn}$ and $d_{\rewFnCommit,j}=\infty$ for $j\notin\mathcal{S}_{\rewFnCommit}$. These are notational conventions for optimal arms, rather than values obtained by applying $d_{\text{inf}}$ outside its stated domain.
\ppedit{Utilizing the \textit{Data Processing Inequality} from information theory, we know that for any two outcome distributions $\outcomeDist_i,\outcomeDist_i'\in\mathcal{V}$ and their induced reward distributions $V_{\rewFn,i},V'_{\rewFn,i}$,}
\ppedit{$D(V_{\rewFn,i},V'_{\rewFn,i})\leq D(\outcomeDist_i,\outcomeDist_i')$.}
\ppedit{Therefore, when $\mathcal{V}_{\rewFn,i}$ is the class induced by $\mathcal{V}$, $d^{\outcome}_{\rewFn,i}\geq d_{\rewFn,i}$. 
Specially, if $\rewFn$ is one-to-one and has a measurable inverse on its range, no information is lost under the mapping and equality holds. The analogous statements hold for $d^{\outcome}_{\rewFnCommit,i}$ and $d_{\rewFnCommit,i}$. Thus, the corresponding outcome-level and marginal lower-bound expressions coincide when both mappings are one-to-one and the alternative classes are compatible.}
Then, to derive a nontrivial instance-dependent regret lower bound, we need to specify the family of policies that we investigate, conventionally referred to as \textit{consistent policies}, which rules out policies like guessing policies that may perform extremely well in certain instances.
Following the spirit of \textit{consistent policies} defined in the standard $\armNum$-armed bandit setting, we extend the definition to our problem with modifications.
\begin{definition}[Consistent policy]
   A policy $\policy$ is called \textit{consistent} over a class of bandits $\mathcal{E}$ if for all $\instance\in\mathcal{E}$, arm permutation $\sigma$, and $p>0$, it holds that
   \begin{align*}
        \lim_{\timeHorizon-\switchRound\to\infty}\frac{\Reg[\textexp]{\switchRound\mid\instance}}{\switchRound^p}
        & =0~,~ 
        \lim_{\timeHorizon-\switchRound\to\infty}\frac{\Reg[\textcom]{\timeHorizon-\switchRound\mid\instance}}{(\timeHorizon-\switchRound)^p}=0~;
   \end{align*}
   and
   \begin{align*}
        \Reg[\textcom]{\timeHorizon-\switchRound\mid\instance}
        =\Reg[\textcom]{\timeHorizon-\switchRound\mid\sigma(\instance)}~,
   \end{align*}
   where $\Reg[\textexp]{\switchRound\mid \instance}$ and $\Reg[\textcom]{\timeHorizon-\switchRound\mid \instance}$ represent the expected cumulative regret of the experiment phase and the commitment phase, respectively.
\end{definition}
\ppedit{When $\mathcal{E}=\mathcal{E}_{\outcome}$, we use $\instance=\instance^{\outcome}$ in the above definition; when $\mathcal{E}=\mathcal{E}_{\mathrm{ind}}$, the same definition applies to the independent-signal model.}
Recall that we have assumed $\switchRound\geq\Omega(\cc{poly}(\timeHorizon))$, so the regime where $\timeHorizon-\switchRound$ goes to infinity implies both $\switchRound$ and $\timeHorizon$ grow to infinity. 
The last condition $\Reg[\textcom]{\timeHorizon-\switchRound\mid\instance}=\Reg[\textcom]{\timeHorizon-\switchRound\mid\sigma(\instance)}$ says that the algorithm should result in the same expected regret for the commitment phase if we simply permutate the arms' identities,
i.e., the policy is symmetric under arm permutation.

\begin{restatable}[Instance-dependent lower bound]{theorem}{thminsd}
\label[theorem]{thm:instance dependent lower bound}
\ppedit{For the outcome-observation model, given an instance $\instance^{\outcome}\in\mathcal{E}_{\outcome}$, the regret of every policy consistent over $\mathcal{E}_{\outcome}$ satisfies}
\begin{align*}
    &\ppedit{\Reg[]{\switchRound, \timeHorizon\mid\instance^{\outcome}}
    \geq \Omega\left(
    \sum\nolimits_{i\in\mathcal{S}_{\rewFn}}
    \max\left\{\frac{\log(\switchRound)}{d^{\outcome}_{\rewFn,i}},\frac{\log(\timeHorizon-\switchRound)}{d^{\outcome}_{\rewFnCommit,i}}\right\}\Deltaf{i}
    \right).}
\end{align*}
\ppedit{For the independent-signal model described in \Cref{subsec:additional discussion}, given an instance $\instance\in\mathcal{E}_{\mathrm{ind}}$, the regret of every policy consistent over $\mathcal{E}_{\mathrm{ind}}$ satisfies}
\begin{align*}
    &\ppedit{\Reg[]{\switchRound, \timeHorizon\mid\instance}
    \geq \Omega\left(
    \sum\nolimits_{i\in\mathcal{S}_{\rewFn}}
    \max\left\{\frac{\log(\switchRound)}{d_{\rewFn,i}},\frac{\log(\timeHorizon-\switchRound)}{d_{\rewFnCommit,i}}\right\}\Deltaf{i}
    \right).}
\end{align*}
\end{restatable}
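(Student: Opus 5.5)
The plan is to prove the two terms inside the maximum separately, for each $i\in\mathcal{S}_{\rewFn}$, by showing that every consistent policy must satisfy
\[
\expect{n_i(\switchRound)}\gtrsim \max\left\{\frac{\log(\switchRound)}{d_{\rewFn,i}},\frac{\log(\timeHorizon-\switchRound)}{d_{\rewFnCommit,i}}\right\}.
\]
Here $n_i(\switchRound)$ is the number of pulls of arm $i$ during the experiment phase. The regret bound then follows from the experiment-phase regret decomposition $\Reg[\textexp]{\switchRound\mid\instance}=\sum_{i\in\mathcal{S}_{\rewFn}}\Deltaf{i}\expect{n_i(\switchRound)}$, which is a lower bound on the total regret because the commitment regret is nonnegative. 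Every step uses the standard change-of-measure (divergence decomposition plus Bretagnolle--Huber) argument as in Lai--Robbins and \citet{LS-20}. The independent-signal model and the outcome-observation model are handled identically; only the perturbation differs. In the independent-signal model we perturb one marginal $V_{\rewFn,i}$ or $V_{\rewFnCommit,i}$ and leave the other unchanged, so the per-pull KL is exactly the KL of that marginal. In the outcome model we perturb $\outcomeDist_i$ itself, so the per-pull KL is $D(\outcomeDist_i,\outcomeDist_i')$.

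\textbf{The $\rewFn$-term.} Fix $i\in\mathcal{S}_{\rewFn}$ and take an alternative instance $\instance'$ that changes only arm $i$'s $\rewFn$-distribution (respectively $\outcomeDist_i$), with $D\le d_{\rewFn,i}+\eta$ and mean exceeding $\mu_{\rewFn}^*$. This makes $i$ the unique $\rewFn$-optimal arm. Consider the event $A=\{n_i(\switchRound)>\switchRound/2\}$. Under $\instance$, consistency gives $\PP_{\instance}(A)\le \Reg[\textexp]{\switchRound\mid\instance}/(\switchRound\Delta_{\min}/2)=o(\switchRound^{p-1})$. Under $\instance'$, $\PP_{\instance'}(A^c)=o(\switchRound^{p-1})$ as well. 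The divergence decomposition over the first $\switchRound$ rounds gives $\expect{n_i(\switchRound)}(d_{\rewFn,i}+\eta)\ge\log\frac{1}{2(\PP_{\instance}(A)+\PP_{\instance'}(A^c))}\ge(1-p)\log\switchRound-o(\log \switchRound)$. Letting $p,\eta\to0$ yields the first term. One point needs checking: in the outcome model, changing $\outcomeDist_i$ may also change arm $i$'s $\rewFnCommit$-mean. This is harmless, since we only use experiment-phase consistency, which holds on every instance of $\mathcal{E}_{\outcome}$.

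\textbf{The $\rewFnCommit$-term.} This is the main obstacle. Fix $i\in\mathcal{S}_{\rewFn}\cap\mathcal{S}_{\rewFnCommit}$; otherwise $d_{\rewFnCommit,i}=\infty$ and the term is vacuous. Build $\instance''$ by raising arm $i$'s $\rewFnCommit$-mean above $\mu_{\rewFnCommit}^*$ at cost $d_{\rewFnCommit,i}+\eta$. Use the event $B=\{\commitArm_\switchRound=i\}$, which is $\history_\switchRound$-measurable, so only experiment-phase observations enter the KL. Under $\instance$, commitment-phase consistency gives $\PP_{\instance}(B)\le \Reg[\textcom]{\cdot\mid\instance}/((\timeHorizon-\switchRound)\Deltag{i})=o((\timeHorizon-\switchRound)^{p-1})$. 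Under $\instance''$, arm $i$ is the unique $\rewFnCommit$-best arm, so $\PP_{\instance''}(B^c)=o((\timeHorizon-\switchRound)^{p-1})$ once we lower-bound its gap to the second-best arm. Bretagnolle--Huber then gives $\expect{n_i(\switchRound)}(d_{\rewFnCommit,i}+\eta)\ge(1-p)\log(\timeHorizon-\switchRound)-o(\cdot)$.

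The subtle points are the following. First, $\instance''$ must stay in the class; the product structure of $\mathcal{E}_{\mathrm{ind}}$ and the definition of $\mathcal{E}_{\outcome}=\mathcal{V}^{\armNum}$ ensure this. Second, ties in $\mu_{\rewFnCommit}^*$ must be handled; I would use the permutation-symmetry clause of consistency here, so that commitment regret is controlled uniformly across relabelings and a tied optimum cannot be exploited by an asymmetric guess. Third, the horizon coupling must be handled: since $\switchRound\ge c\timeHorizon^a$, letting $\timeHorizon-\switchRound\to\infty$ drives both logarithms to infinity, so the $o(\cdot)$ terms vanish relative to the leading ones.

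Combining the two bounds, taking the max for each $i$, multiplying by $\Deltaf{i}$, and summing over $i\in\mathcal{S}_{\rewFn}$ gives the stated $\Omega(\cdot)$ bound in both models.
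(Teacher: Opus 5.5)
Your proposal is correct and follows the paper's proof essentially step for step. It uses the same single-arm perturbations (of $\nu_i$ in the outcome model, of one marginal in the independent-signal model) and the same events $\{n_i(\switchRound)>\switchRound/2\}$ and $\{\commitArm_{\switchRound}=i\}$. It then applies divergence decomposition with the Bretagnolle--Huber inequality, uses consistency to make the regret terms negligible on the logarithmic scale, and concludes with the experiment-phase regret decomposition. The only superfluous ingredient is the appeal to permutation symmetry for ties: because only arm $i$ is perturbed, arm $i$ has $\Delta_{\rewFnCommit,i}>0$ under $\instance$ and beats every other arm by at least $\mu''_{\rewFnCommit,i}-\mu_{\rewFnCommit}^*>0$ under $\instance''$, so ties among the remaining arms never enter and the paper never needs that clause.
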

\ppedit{For common distributions in the independent-signal model, including the Gaussian distribution and the Bernoulli distribution with mean bounded away from $0$ and $1$, $d_{\rewFn,i}\approx\Deltaf{i}^2$ for $i\in\mathcal{S}_{\rewFn}$ and $d_{\rewFnCommit,i}\approx\Deltag{i}^2$ for $i\in\mathcal{S}_{\rewFnCommit}$. For the outcome-observation model, the data processing inequality implies that the outcome-level information numbers are weakly larger, so its lower bound can be weakly smaller. 
The two bounds match when $\rewFn$ and $\rewFnCommit$ are one-to-one mappings.}

\ppedit{Furthermore, using different instance constructions and analysis, we derive the following instance-independent regret lower bound (i.e., minimax lower bound). Let $\mathcal{E}_{\mathrm{coord}}\subseteq\mathcal{E}_{\outcome}$ be the fixed subclass in which $\outcomeSpace=[0,1]^2$, $\rewFn(x,y)=x$, $\rewFnCommit(x,y)=y$, and the two coordinates of every arm are independent Bernoulli random variables with means in a fixed interval contained in $(0,1)$.
Thus, this lower bound holds for both the basic outcome-observation model and the independent-signal model in \Cref{subsec:additional discussion}.}
\begin{restatable}[Instance-independent lower bound]{theorem}{thminsind}
\label[theorem]{thm:instance independent lower bound}
\ppedit{For $\armNum\geq4$, the instance-independent lower bound has the form,}
\begin{align*}
    \ppedit{\inf_{\policy}\sup_{\instance^{\outcome}\in\mathcal{E}_{\mathrm{coord}}}
    \Reg[\policy]{\switchRound,\timeHorizon\mid\instance^{\outcome}} \geq 
    \Omega\left(\sqrt{\frac{\armNum(\timeHorizon-\switchRound)^{\sfrac{2}{3}}\cdot\max\left\{(\timeHorizon-\switchRound)^{\sfrac{4}{3}},\armNum^{\sfrac{1}{3}}\switchRound\right\}}{\min\left\{\switchRound, \armNum^{\sfrac{1}{3}}(\timeHorizon-\switchRound)^{\sfrac{2}{3}}\right\}}}\right).}
\end{align*}
\end{restatable}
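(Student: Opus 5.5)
Our plan is to prove the bound region by region, since the right-hand side collapses to $\sqrt{\armNum\timeHorizon^2/\switchRound}$ when $\switchRound\lesssim \armNum^{\sfrac13}(\timeHorizon-\switchRound)^{\sfrac23}$, to $\armNum^{\sfrac13}(\timeHorizon-\switchRound)^{\sfrac23}$ in the balanced regime, and to $\sqrt{\armNum\switchRound}\asymp\sqrt{\armNum\timeHorizon}$ when $\armNum^{\sfrac13}\switchRound\ge(\timeHorizon-\switchRound)^{\sfrac43}$. In each region we will exhibit a finite family of instances in $\mathcal{E}_{\mathrm{coord}}$ (independent Bernoulli coordinates with means in, say, $[1/4,3/4]$, so KL is comparable to squared mean differences) and lower bound the worst-case regret by a Le Cam/Fano-type argument using the divergence decomposition $D(\PP_{\instance},\PP_{\instance'})=\sum_i\expect[\instance]{n_i(\switchRound)}\,D(\outcomeDist_i,\outcomeDist_i')$, where $n_i(\switchRound)$ is the number of pulls of arm $i$ in the experiment phase.

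\textbf{Short commitment regime.} Take the $\rewFnCommit$ coordinate identical across arms, so commitment regret vanishes and the problem is a standard $\armNum$-armed bandit of horizon $\switchRound$; the classical minimax bound gives $\Omega(\sqrt{\armNum\switchRound})$, and since here $\switchRound=\Theta(\timeHorizon)$ this is $\Omega(\sqrt{\armNum\timeHorizon})$. \textbf{Short experiment regime.} Make the $\rewFn$ coordinate identical across arms, so only commitment matters; this is fixed-budget simple regret with $\switchRound$ samples, whose minimax lower bound (e.g.\ Chapter 33 of \citealp{LS-20}) is $\Omega(\sqrt{\armNum/\switchRound})$ per round, multiplied by $\timeHorizon-\switchRound$; in this regime $\timeHorizon-\switchRound=\Theta(\timeHorizon)$ so we get $\Omega(\sqrt{\armNum\timeHorizon^2/\switchRound})$. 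This needs $\armNum\le\switchRound$, which holds by assumption.

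\textbf{Balanced regime (main step).} This is where the conflict between the phases must be encoded. Construct a base instance where arm $1$ is $\rewFn$-optimal with $\rewFn$-means $1/2+\Delta_0$ for arm $1$ and $1/2$ otherwise, with $\Delta_0$ a constant, and all $\rewFnCommit$-means equal to $1/2$; perturbed instances $\instance_j$, $j\in\{2,\dots,\armNum\}$, raise arm $j$'s $\rewFnCommit$-mean to $1/2+\varepsilon$. (The condition $\armNum\ge4$ leaves enough arms for the averaging.) Under $\instance_j$, either the policy commits to $j$ with probability below $1/2$, costing $\Omega((\timeHorizon-\switchRound)\varepsilon)$, or it must distinguish $\instance_j$ from the base, requiring $\expect{n_j}\varepsilon^2\gtrsim1$, and each pull of a non-$1$ arm costs $\Delta_0=\Omega(1)$ in experiment regret. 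Averaging over $j$ via Pinsker (as in the standard minimax proof), the total regret is at least $\min_{n}\{\sum_j n_j\Delta_0+(\timeHorizon-\switchRound)\varepsilon(1-\sqrt{\varepsilon^2\sum_j n_j/\armNum})\}$; choosing $\varepsilon\asymp(\armNum/(\timeHorizon-\switchRound))^{\sfrac13}$ gives $\Omega(\armNum^{\sfrac13}(\timeHorizon-\switchRound)^{\sfrac23})$. One must check that $\switchRound\gtrsim\armNum^{\sfrac13}(\timeHorizon-\switchRound)^{\sfrac23}$ so the budget does not bind (otherwise we are in the first regime). We expect this step to be the main obstacle: carefully handling that pulls of arm $1$ are free but uninformative about $\rewFnCommit$ for $j\neq1$, and making the Pinsker averaging robust to adaptive allocation; we would first try the standard "sum of $\expect[0]{n_j}$ is at most $\switchRound$, so some $j$ is undersampled" argument, then combine the three regimes by taking the maximum, matching the displayed expression up to constants.
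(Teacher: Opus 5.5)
Your proposal is correct, but it is organized differently from the paper's proof. The paper uses a single pair of instances. The arms are split into a group $A$ that is nearly tied under $f$ (gap $\delta_f$) and a group $B$ that costs a constant $\alpha$ per pull under $f$ but contains the commitment-optimal arm. The alternative instance makes the least-pulled arm of $A\setminus\{1\}$ $f$-optimal and the least-pulled arm of $B\setminus\{K/2+1\}$ $g$-optimal. One Bretagnolle--Huber step then gives a master bound of order $\mathbb{E}[T_B]+(N\delta_f+(T-N)\delta_g)\exp(-(N-\mathbb{E}[T_B])\delta_f^2/K-\mathbb{E}[T_B]\delta_g^2/K)$. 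All three regimes are read off by taking $(\delta_f,\delta_g)$ equal to $(0,\sqrt{K/N})$, $(0,(K/(T-N))^{1/3})$, or $(\sqrt{K/N},0)$.

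You instead get the two extreme terms from textbook results. Making the $g$-coordinates identical gives the standard $\Omega(\sqrt{KN})$ bandit bound, and making the $f$-coordinates identical gives $(T-N)$ times the $\Omega(\sqrt{K/N})$ simple-regret bound. You prove only the balanced term directly, by a $(K-1)$-fold Pinsker averaging around a base instance. Your route is more modular and relies on results with explicit constants. The paper's route is self-contained and shows in one formula how pulls of short-run-costly arms trade off against commitment information.

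In your balanced step, state explicitly two things:
\begin{itemize}
\item The experiment cost $\Delta_0\sum_{j\ge 2}\mathbb{E}_0[n_j]$ is charged to the base instance, whose commitment regret is zero.
\item The worst case is at least $\frac{1}{2}\left(\mathrm{Reg}(\mathcal{I}_0)+\frac{1}{K-1}\sum_{j}\mathrm{Reg}(\mathcal{I}_j)\right)$.
\end{itemize}
Charging each $\mathcal{I}_j$ only for $\mathbb{E}_j[n_j]$ would lose the $K^{1/3}$ factor.

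The term $\frac{1}{K-1}\sum_j\mathbb{P}_0(J=j)\le\frac{1}{3}$ is where $K\ge 4$ enters. The budget check you mention is unnecessary: the balanced bound holds for every $N$, since a small budget only makes the commitment term dominate. Taking the maximum of your three bounds reproduces the displayed expression under the standing assumption $K\lesssim\min\{N,T-N\}$.
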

\begin{figure}[htbp]
\centering
\begin{tikzpicture}[>=stealth, scale=1]

  \draw[->] (0,0) -- (10,0) node[right] {$\switchRound$};

  \node[above] at (0,0) {$0$};
  \node[above] at (9,0) {$\timeHorizon$};

  \draw[dashed] (2.5,0) -- (2.5,1.5);
  \draw[dashed] (6.5,0) -- (6.5,1.5);

  \node at (1.25,0.75) {
    \begin{tabular}{c}
      $\Omega\left(\sqrt{\frac{\armNum\timeHorizon^2}{\switchRound}}\right)$ 
    \end{tabular}
  };
  \node at (4.5,0.75) {
    \begin{tabular}{c}
      $\Omega\left(\armNum^{\sfrac{1}{3}}(\timeHorizon-\switchRound)^{\sfrac{2}{3}}\right)$ 
    \end{tabular}
  };
  \node at (7.75,0.75) {
    \begin{tabular}{c}
      $\Omega\left(\sqrt{\armNum\timeHorizon}\right)$ 
    \end{tabular}
  };
  
  \draw[
    decorate,
    decoration={brace, amplitude=6pt, mirror},
    yshift=-4pt
  ]
  (0,0) -- node[below=6pt] {I} (2.5,0);

  \draw[
    decorate,
    decoration={brace, amplitude=6pt, mirror},
    yshift=-4pt
  ]
  (2.5,0) -- node[below=6pt] {II} (6.5,0);

  \draw[
    decorate,
    decoration={brace, amplitude=6pt, mirror},
    yshift=-4pt
  ]
  (6.5,0) -- node[below=6pt] {III} (9,0);
\end{tikzpicture}
\vspace{-10pt}
\caption{Phase Transition of Minimax Lower Bound}
\label{fig:lower_bnd_independent}
\end{figure}
Figure \ref{fig:lower_bnd_independent} depicts the phase transition of the instance-independent regret lower bound with respect to the scale of $\switchRound$.
Regions I, II, and III are defined as follows:
$(i)$. Region I: $\switchRound<\armNum^{\sfrac{1}{3}}(\timeHorizon - \switchRound)^{\sfrac{2}{3}}$.
$(ii)$. Region II: $\switchRound\geq\armNum^{\sfrac{1}{3}}(\timeHorizon - \switchRound)^{\sfrac{2}{3}}$ and $(\timeHorizon - \switchRound)\geq\armNum^{\sfrac{1}{4}}\switchRound^{\sfrac{3}{4}}$.
$(iii)$. Region III: $(\timeHorizon - \switchRound)<\armNum^{\sfrac{1}{4}}\timeHorizon^{\sfrac{3}{4}}$.

We observe that the instance-independent lower bound also matches the upper bound of {\RAEC} presented in \Cref{thm:upper bound}.
This is especially noteworthy because our Algorithm {\RAEC} 
allocates exploration effort for the commitment phase in a simple, predetermined way rather than adaptively.
This leads to a useful managerial implication for practitioners:
Reserving effort in advance to explore good arms for the long-term commitment is both easy to implement and sufficient.
More sophisticated and adaptive methods of allocating effort with the help of learning the instance structure will not help much.
Meanwhile, we notice that in a wide range of regimes (Region I and II), our tight regret bound is significantly larger than a typical $\Omega\left(\sqrt{\armNum\timeHorizon}\right)$ result.
This implies that the presence of the commitment phase and the reward shift makes the problem fundamentally more challenging than standard bandit problems.
As clarified in \Cref{subsec:prior knowledge}, such difficulty mainly comes from the reward shift.

We provide a proof sketch below for \Cref{thm:instance independent lower bound}.
At a high level, to prove the minimax lower bound, given any policy, we construct two instances such that this policy will suffer a large regret in at least one of these two instances (see, e.g., \citealp{S-19,LS-20}).
We first construct the following instance $\instance$: for some positive constants $\alpha, \delta_{\rewFn}, \delta_{\rewFnCommit}$ (their values are specified in the proof),
\begin{equation}
\begin{aligned}
    \label{eq:hard-inst-one}
    \mu_{\rewFn, i} = 
    \begin{cases}
        \alpha+\delta_{\rewFn}, & \text{ if }  i =1; \\
        \alpha, & \text{ if }  i \in [2, \sfrac{\armNum}{2}]; \\
        0, & \text{ if }  i \in [ \sfrac{\armNum}{2} + 1, \armNum]; 
    \end{cases} \quad 
    \mu_{\rewFnCommit, i} = 
    \begin{cases}
        \delta_{\rewFnCommit}, & \text{ if }  i = \sfrac{\armNum}{2} + 1; \\
        0, & \text{ otherwise}
    \end{cases}
\end{aligned}
\end{equation}
We group the first $\armNum/2$ arms as arm group $A$ and the other half of arms as arm group $B$.
Given a policy $\policy$, let $\expect[\instance]{T_A}$ and $\expect[\instance]{T_B}$ be the expected number of pulls distributed to group $A$ and $B$, respectively.
Clearly, $\expect[\instance]{T_A}+\expect[\instance]{T_B}=\switchRound$.
Let $\expect[\instance]{T_l}$ be policy $\policy$'s expected number of pulls of arm $l$.
Then, we define 
$i\primed \triangleq \argmin_{l\in A\setminus\{1\}}\expect[\instance]{T_l}$ and 
$j\primed =\argmin_{l\in B\setminus\{\sfrac{\armNum}{2} + 1\}}\expect[\instance]{T_l}$.
Clearly, $\expect[\instance]{T_{i\primed}}\leq\frac{\expect[\instance]{T_A}}{\sfrac{\armNum}{2}-1}$ and $\expect[\instance]{T_{j\primed}}\leq\frac{\expect[\instance]{T_B}}{\sfrac{\armNum}{2}-1}$.
With these definitions, we define the instance $\instance\primed$ as follows:
\begin{equation}
\begin{aligned}
    \label{eq:hard-inst-two}
    \mu_{\rewFn, i}\primed = 
    \begin{cases}
        \alpha+\delta_{\rewFn}, & \text{ if }  i = 1; \\
        \alpha+2\delta_{\rewFn}, & \text{ if }  i = i\primed; \\
        \alpha, & \text{ if }  i \in [2, \sfrac{\armNum}{2}] \setminus\{i\primed\}; \\
        0, & \text{ if }  i \in [ \sfrac{\armNum}{2} + 1, \armNum]; 
    \end{cases} \quad 
    \mu_{\rewFnCommit, i}\primed = 
    \begin{cases}
        \delta_{\rewFnCommit}, & \text{ if }  i = \sfrac{\armNum}{2} + 1; \\
        2\delta_{\rewFnCommit}, & \text{ if }  i = j\primed; \\
        0, & \text{ otherwise}
    \end{cases} 
\end{aligned}
\end{equation}
We show that the policy $\policy$ would suffer a large regret either in instance $\instance$ or in instance $\instance\primed$.

\section{Extensions}
\label{sec:extensions}

\newcommand{\incresfun}{\phi}
\newcommand{\gradientUB}{M}
\newcommand{\noisefun}{\delta}
\newcommand{\noiseBound}{D}
\newcommand{\tax}{\textsc{tax}}

In this section, we discuss two extensions of our basic model:
(1) In the first extension, we consider the situation when the decision-maker has access to more nuanced prior knowledge about the reward shift structure (i.e., the functional relationship between reward functions $\rewFn$ and $\rewFnCommit$).
One main insight from the previous basic model is that with the information from learning the instance structure, adaptively determining the effort allocation between exploring the optimal arms in the experiment phase and the commitment phase will not help much.
However, we argue that prior knowledge of the reward shift structure can sometimes help.
The key is to identify the reward shift's ranking-changing effect instead of focusing on its absolute value.
(2) In the second extension, we consider a generalized version of the basic model in which the decision-maker is allowed to commit to a portfolio of arms. The reward in the commitment phase is a concave function of the portfolio's combined outcome.
We argue that the effectiveness of the idea of predetermined effort allocation persists in this general setting.
Besides, a novel idea of the algorithm is that instead of calculating the optimal portfolio to commit to based on the estimations of each arm, our algorithm commits to the empirical distribution induced by the exploration execution path directly.

\subsection{Improved Regret with Prior Knowledge on Reward Shift}
\label{subsec:prior knowledge}
In the basic model, we do not pose any assumptions on the relationship between reward functions $f(\cdot)$ and $g(\cdot)$ except that both are bounded within $[0,1]$.
Our algorithm {\RAEC} predetermines the accuracy level $\stopCriteria$ that we aim to achieve for the commitment phase.
And, as we argued, adaptively determining the target accuracy level $\stopCriteria$ via learning the problem structure won't help improve the asymptotic regret performance.
However, in certain applications, some prior knowledge of the reward shift structure is available.
We use the following tariff application as an example.
\begin{example}[Tax policy change example]
   Let $\outcome_{t,\pickArm_t}$ be the profit of a product $\pickArm_t$, $\textsc{profit}_{t,\pickArm_t}$.
   Before the tax policy change, all productions are tax-free.
   The new tax policy puts a common $(1-\gradientUB)$ portion of tax with some product-specific adjustments, $\textsc{tax-adjustment}_{t,\pickArm_t}$. 
   Then, after the enactment of the new tax policy, the profit of product $\pickArm_t$ has the form $\gradientUB\cdot\textsc{profit}_{t,\pickArm_t}+\textsc{tax-adjustment}_{t,\pickArm_t}$.
\end{example}
As we demonstrate below, prior knowledge of the problem structure helps.
Motivated by the example, we consider the following functional relationship between $f(\cdot)$ and $g(\cdot)$.
\begin{definition}[Perturbed affine relationship]
\label{def:shift structure}
Reward functions $\rewFn$ and $\rewFnCommit$
exhibit a perturbed affine relationship if there exists a constant $\gradientUB \ge 0$ and a noise function $\noisefun(\cdot): \outcomeSpace\rightarrow [\sfrac{-\noiseBound}{2},\sfrac{\noiseBound}{2}]$ for some $\noiseBound \ge 0$, namely,
$\rewFnCommit(\outcome)=\gradientUB\cdot\rewFn(\outcome)+\noisefun(\outcome)$ for $\outcome\in\outcomeSpace$.
\end{definition}
Intuitively, $\gradientUB\cdot\rewFn(\cdot)$ is a ranking-preserving mapping, i.e., the arms' mean reward ranking remains the same as under $\rewFn(\cdot)$ after times $\gradientUB$.
On the other hand, $\noisefun$ could be ranking-changing.
We can tune {\RAEC} based on the knowledge of $\gradientUB$ and $\noiseBound$ as well as $\armNum$, $\switchRound$, and $\timeHorizon$.

Both $\gradientUB$ and $\noisefun$ can significantly shift the value of a treatment, but we argue that the size of $D$ is significantly more important than the size of $M$.
In particular, for constant \(\gradientUB>0\) and $\noiseBound=0$, the regret of {\RAEC} is upper bounded by
\[
\Reg{\switchRound,\timeHorizon}
\le
\left\{
\begin{aligned}
&\widetilde O\left(
\sqrt{\armNum\switchRound}+(\timeHorizon-\switchRound)\sqrt{\frac{\armNum}{\switchRound}}
\right)~,~
&\text{for }
\switchRound\leq\gradientUB (\timeHorizon-\switchRound),
\\
&\widetilde O\left(
\sqrt{\armNum\switchRound}+\sqrt{\armNum(\timeHorizon-\switchRound)}
\right)~,~
&\text{for }
\switchRound>\gradientUB (\timeHorizon-\switchRound).
\end{aligned}
\right.
\]
For example, in the balanced regime for $\switchRound=\frac{\gradientUB}{1+\gradientUB}\cdot\timeHorizon$, the upper bound becomes $\widetilde O\left(\sqrt{\armNum\timeHorizon}\right)$, which becomes the familiar bound of the standard multi-armed bandit problem.
Specifically, if $\gradientUB=1$ and $\noiseBound=0$, there is no reward shift; the regret upper bound remains almost the same except that the threshold for the two cases becomes $\switchRound=\timeHorizon/2$.
On the other hand, if the constant $\noiseBound>0$, the problem is fundamentally no different from the basic model without the perturbation structural information, and the regret of {\RAEC} is upper-bounded by the same bound derived in the basic model.
As we can see, although both positive constants $\gradientUB$ and $\noiseBound$ can lead to constant reward shift for each option, their impacts on the problem structure are fundamentally different.
The managerial takeaway is that specifying the ranking-changing effect is much more important than focusing on the absolute value of the reward shifts (the detailed analysis is deferred to \Cref{subsec:perturb}).

\subsection{Generalized Concave Commitment Reward}
\label{subsec:concave}
Recall our motivation setting where a firm has a short-term payoff of the accumulated profit, while due to policy environment changes, the long-term goal is to find a business solution that balances profitability and other factors like average quality, aggregated environmental impact across all product lines, average supply chain resilience over time, etc. 
\begin{example}[Carbon emission example]
Continue to the carbon emission example introduced in \Cref{exam:sc}.
The random outcome $\outcome_{t,\pickArm_t}$ still consists of two attributes, $(\textsc{revenue}_{t,\pickArm_t}, \textsc{carbon\text{-}emission}_{t,\pickArm_t})$.
Now, after the enactment of certain carbon tax policies, the carbon tax is a convex increasing function of the total emission over a long monitored period, the commitment phase.
For example, the total reward for the commitment phase has the form $\sum_{t=\switchRound+1}^{\timeHorizon}\textsc{revenue}_{t,\pickArm_t}-\tax \left( \sum_{t=\switchRound+1}^{\timeHorizon}\textsc{carbon\text{-}emission}_{t, \pickArm_t} \right) $.
Here, $\tax \left( \cdot \right) $ is a convex increasing function, which makes the total reward for the commitment phase a concave function.
And due to reasons like long-term strategic planning requirements, or the inflexibility of mass-production line construction and operations, the firm needs to commit to a production plan that may involve a portfolio of products, with each product taking a specific, committed portion of total capacity.
\end{example}
Unlike the basic model, due to the nonlinearity of the reward function for the commitment phase, the optimal commitment could be a combination of arms instead of a single arm.
In this extension, the firm is allowed to commit to a portfolio of candidate solutions for the commitment phase. A portfolio of candidate solutions means an action plan for deploying each candidate solution in the commitment phase.
This means the action for each period in the commitment phase could be different, but is determined at the beginning of the commitment phase.
We consider the case where the portfolio's commitment phase payoff is Lipschitz concave, defined in the outcome space.
For example, the long-run performance is linear in profit minus the convex increasing penalty of environmental impact (i.e., the marginal penalty increases as the environmental impact grows).
Formally, we formulate the decision-maker's problem as follows,
\begin{equation}
\label{eqn:concave commit}
    \max_{\pi} \;
    \expect{\sum\nolimits_{t=1}^{\switchRound}\rewFn(\outcome_{t, \pickArm_t})+ (\timeHorizon-\switchRound)\cdot \rewFnCommit\left(\frac{\sum\nolimits_{t=\switchRound+1}^\timeHorizon\outcome_{t, \pickArm_t}}{\timeHorizon-\switchRound}\right)}~,
\end{equation}
where $\pickArm_{\switchRound+1}$ to $\pickArm_{\timeHorizon}$ are committed at the beginning of the commitment phase.
We require $\outcomeSpace$ to be a compact space in this extension.
For simplicity, we assume $\outcomeSpace=[0,1]^\feaDimen$, where $\feaDimen$ is the dimension of the outcome space.
And, for any $\outcome_1,~\outcome_2\in\outcomeSpace$, $\lvert\rewFnCommit(\outcome_1)-\rewFnCommit(\outcome_2)\rvert\leq \lipschitzConst\cdot\lVert\outcome_1-\outcome_2\rVert$ for some Lipschitz constant $\lipschitzConst$.

First, we argue that our basic problem \eqref{eq:cumu rew} is a special case of problem \eqref{eqn:concave commit}.
To see this, let $\rewFnCommit$ be an affine function, then we return to the formulation of \eqref{eq:cumu rew} except that in the basic problem \eqref{eq:cumu rew}, the decision-maker can only commit to one arm for the commitment phase.
However, notice that if we allow the decision-maker to commit to a portfolio of arms in \eqref{eq:cumu rew}, the optimal decision of the benchmark is still committing to the single best arm in the commitment phase.
What might make the optimal decision to commit to a nontrivial portfolio is the nonlinearity of $\rewFnCommit$ in \eqref{eqn:concave commit}.

Now, we have demonstrated the formulation \eqref{eqn:concave commit} is an extension of the basic model formulation, but the analysis later relies on the mild assumption that $\outcomeSpace$ is compact.
According to \cite{AD-2014}, the normalized per-period reward of the commitment phase can be upper bounded by $\OPT_{\rewFnCommit}=\max_{\boldsymbol{p} \in \Delta([\armNum])}\rewFnCommit\left(\sum_{i=1}^Kp_i\cdot\expect[\outcome\sim \outcomeDist_{i}]{\outcome}\right)$ due to concavity.
Therefore, we consider the following definition of regret,
\begin{align*}
    \Reg[\policy]{\switchRound, \timeHorizon}
    = \sum\nolimits_{t\in[\switchRound]} \left(
    \expect{\rewFn(\outcome_{t, \optArm_\rewFn})}
    - 
    \expect{\rewFn(\outcome_{t, \pickArm_t})}
    \right)
    +
    (\timeHorizon-\switchRound)\cdot
    \left(
    \OPT_{\rewFnCommit} 
    -
    \expect{\rewFnCommit\left(\frac{\sum_{t=\switchRound+1}^\timeHorizon\outcome_{t, \pickArm_t}}{\timeHorizon-\switchRound}\right)}
    \right).
\end{align*}

\xhdr{Challenges}
Unlike the basic model, where the optimal commitment strategy is to commit to the single optimal arm for the \textit{commitment phase} and the reserved exploration can be devoted to identifying the optimal arm to commit, with a concave commitment objective function, exploration effort should be distributed more evenly across all arms in order to identify the optimal portfolio.
One naive idea could be an ``estimate-then-calculate'' strategy: first, explore each arm equally for a certain amount of time; then, calculate the optimal portfolio to commit based on estimates of each arm's outcome vector.
However, this idea could be highly inefficient since it does not utilize the concave structure of the \textit{commitment phase}.
Indeed, clearly, the above idea would lead to a regret growing linearly in $\armNum$.
If one improves the idea of estimate-then-calculate by allocating less exploration effort to the arms with larger short-term regret, the estimation accuracy levels of arms' outcome vectors would be uneven, which, on the other hand, may lead to portfolio calculation errors that are nontrivial to characterize.

\xhdr{Intuitions of {\ROSCOC}}
Following the spirit of {\RAEC} algorithm, we propose the \textit{reserved online stochastic convex optimization for commitment} algorithm ({\ROSCOC}), where we still reserve a certain amount of time for pure exploration.
However, instead of applying the arm elimination technique during the reserved periods, we apply the well-established online stochastic convex optimization algorithm.
Unlike the arm elimination method, which identifies the optimal arm for the commitment phase, online stochastic convex optimization only generates an execution path in the experiment phase.
The challenge is how to relate this execution path to the portfolio of arms to commit.

Instead of calculating the optimal arm portfolio for the commitment phase based on historical estimates of each arm's outcome vector, the novel idea of {\ROSCOC} is to use the execution path to approximate the ideal arm portfolio directly.
Following this idea, we generate the action plan for the commitment phase by uniformly sampling from the execution path.
In other words, the algorithm would commit to the empirical distribution induced by the execution path of the algorithm during the reserved exploration periods.
And, we show that this method is optimal in some sense.
\ppedit{For details, please refer to the pseudo-code of Algorithm {\ROSCOC} in \Cref{algo:roscoc} in the appendix. Its Stage I subroutine is the Fenchel-dual bandit algorithm in Algorithm 3 of \cite{AD-2014}; we state the specialization used here in \Cref{alg:oco}.}
The main result is summarized as follows:
\begin{restatable}{theorem}{thmregretconcave}
\label{thm:regret concave}
The regret of Algorithm {\ROSCOC} has the form
$$\Reg{\switchRound, \timeHorizon}
\leq 
\ppedit{\widetilde{O}\left(\timeHorizon\cdot\lipschitzConst\sqrt{\frac{\armNum\feaDimen}{\switchRound}}+\sqrt{\armNum\switchRound}+\lipschitzConst^{\sfrac{2}{3}}\armNum^{\sfrac{1}{3}}\feaDimen^{\sfrac{1}{3}}(\timeHorizon-\switchRound)^{\sfrac{2}{3}}\right)},$$
\ppedit{where {\ROSCOC} reserves $\tau=\min\left\{\switchRound,\lipschitzConst^{\sfrac{2}{3}}\armNum^{\sfrac{1}{3}}\feaDimen^{\sfrac{1}{3}}(\timeHorizon-\switchRound)^{\sfrac{2}{3}}\log(\timeHorizon-\switchRound)^{\sfrac{1}{3}}\right\}$ periods to exploration for commitment.}
\end{restatable}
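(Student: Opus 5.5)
The plan is to split the regret of {\ROSCOC} into three parts and bound each separately. The first part is the experiment-phase regret from the $\tau$ reserved exploration periods. The second is the experiment-phase regret from the remaining $\switchRound-\tau$ periods. The third is the commitment-phase regret.

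\textbf{Remaining experiment periods.} For the $\switchRound-\tau$ periods after the reserved stage, the algorithm runs a standard regret-minimizing procedure for $\rewFn$; we take it to be the Stage II arm elimination of {\RAEC}. By \Cref{lem:regret1} and the standard conversion to a worst-case bound, this part costs $\widetilde O(\sqrt{\armNum\switchRound})$.

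\textbf{Reserved periods.} Each of the $\tau$ reserved periods costs at most $1$ in $\rewFn$-regret, so this part is at most $\tau$. With $\tau\le\lipschitzConst^{\sfrac{2}{3}}\armNum^{\sfrac{1}{3}}\feaDimen^{\sfrac{1}{3}}(\timeHorizon-\switchRound)^{\sfrac{2}{3}}\log(\timeHorizon-\switchRound)^{\sfrac{1}{3}}$, this is absorbed into the third displayed term.

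\textbf{Commitment phase.} This is the core of the argument. During the reserved stage, the Fenchel-dual bandit algorithm of \cite{AD-2014} (our \Cref{alg:oco}) picks arms $\pickArm_1,\dots,\pickArm_\tau$. Let $\hat{\boldsymbol p}$ be the empirical distribution of these picks, and write $\bar\outcome(\boldsymbol p)=\sum_i p_i\expect[\outcome\sim\outcomeDist_i]{\outcome}$. The guarantee of \cite{AD-2014} for concave Lipschitz objectives in the bandit setting gives
\[
\OPT_{\rewFnCommit}-\expect{\rewFnCommit\left(\tfrac1\tau\sum\nolimits_{s\le\tau}\outcome_{s,\pickArm_s}\right)}\le \widetilde O\left(\lipschitzConst\sqrt{\armNum\feaDimen/\tau}\right).
\]
This controls the realized average outcome, but we need to control $\bar\outcome(\hat{\boldsymbol p})$. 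The difference $\frac1\tau\sum_s\outcome_{s,\pickArm_s}-\frac1\tau\sum_s\expect{\outcome_{s,\pickArm_s}\mid\pickArm_s}$ is a vector martingale average bounded in $[0,1]^\feaDimen$. By Azuma applied coordinatewise together with Lipschitzness, it moves $\rewFnCommit$ by at most $\widetilde O(\lipschitzConst\sqrt{\feaDimen/\tau})$. Hence
\[
\OPT_{\rewFnCommit}-\rewFnCommit(\bar\outcome(\hat{\boldsymbol p}))\le\widetilde O\left(\lipschitzConst\sqrt{\armNum\feaDimen/\tau}\right).
\]
In the commitment phase we sample $\timeHorizon-\switchRound$ arms i.i.d.\ from $\hat{\boldsymbol p}$, or equivalently round $\hat{\boldsymbol p}$ deterministically. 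The realized average outcome then concentrates around $\bar\outcome(\hat{\boldsymbol p})$ within $\widetilde O(\sqrt{\feaDimen/(\timeHorizon-\switchRound)})$, which costs an additional $\lipschitzConst\sqrt{\feaDimen(\timeHorizon-\switchRound)}$. This term is lower order.

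Multiplying by $\timeHorizon-\switchRound$, the commitment regret is $\widetilde O\big((\timeHorizon-\switchRound)\lipschitzConst\sqrt{\armNum\feaDimen/\tau}\big)$.

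\textbf{Cases on $\tau$.} If $\tau=\switchRound$, the commitment regret is at most $\timeHorizon\lipschitzConst\sqrt{\armNum\feaDimen/\switchRound}$. Otherwise, substituting $\tau=\lipschitzConst^{\sfrac{2}{3}}\armNum^{\sfrac{1}{3}}\feaDimen^{\sfrac{1}{3}}(\timeHorizon-\switchRound)^{\sfrac{2}{3}}$ up to logs gives $(\timeHorizon-\switchRound)\lipschitzConst\sqrt{\armNum\feaDimen}\,\tau^{-1/2}=\lipschitzConst^{\sfrac{2}{3}}\armNum^{\sfrac{1}{3}}\feaDimen^{\sfrac{1}{3}}(\timeHorizon-\switchRound)^{\sfrac{2}{3}}$, which balances against the reserved cost $\tau$. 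Summing the three parts yields the stated bound.

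\textbf{Main obstacle.} The hardest step is transferring the online guarantee, which is stated in terms of the realized average outcome, to the value of the committed mixture $\bar\outcome(\hat{\boldsymbol p})$. This needs a careful martingale argument, because the choices $\pickArm_s$ depend on past outcomes. I would also have to check that the \cite{AD-2014} bound has the $\sqrt{\armNum\feaDimen}$ dependence with Lipschitz constant $\lipschitzConst$ in the norm used here, adjusting log factors if needed.
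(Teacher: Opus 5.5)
Your proposal is correct and follows the paper's proof essentially step for step. You use the same split into the reserved cost $\tau$, the Stage II elimination cost $\widetilde O(\sqrt{\armNum\switchRound})$, and the commitment regret. The commitment regret is controlled, as in the paper, by the guarantee of \cite{AD-2014}, then a martingale step from the realized Stage I average outcome to the path-induced mixture $\bar{\outcome}(\hat{\boldsymbol{p}})$, then a conditional concentration step for the with-replacement sampling. You also finish with the same case split on $\tau$; using coordinatewise Azuma where the paper uses Pinelis' inequality is immaterial.

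The one point to state precisely is that your intermediate bound on $\OPT_{\rewFnCommit}-\rewFnCommit(\bar{\outcome}(\hat{\boldsymbol{p}}))$ holds in expectation, not pathwise. This is because $\hat{\boldsymbol{p}}$ is random and the \cite{AD-2014} guarantee is an expectation bound. The expectation version is all the regret computation needs.
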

When $\lipschitzConst$ and $\feaDimen$ are constants, the regret bound stated in \Cref{thm:regret concave} essentially reduces to the regret bound derived in \Cref{thm:upper bound}:
$\widetilde{O}\left(\sqrt{\frac{\armNum(\timeHorizon - \switchRound)^{\sfrac{2}{3}}\cdot\max\left\{(\timeHorizon - \switchRound)^{\sfrac{4}{3}},\armNum^{\sfrac{1}{3}}\switchRound\right\}}{\min\left\{\switchRound, \armNum^{\sfrac{1}{3}}(\timeHorizon - \switchRound)^{\sfrac{2}{3}}\right\}}}\right)$.
\ppedit{The minimax lower bound in \Cref{thm:instance independent lower bound} also applies to the portfolio commitment class through the following reduction. Restrict the present model to the fixed coordinate subclass $\mathcal{E}_{\mathrm{coord}}$ and the linear commitment objective $\rewFnCommit(x,y)=y$. The expected value of a portfolio $\boldsymbol{p}$ is then $\sum_i p_i\mu_{\rewFnCommit,i}$. Given any portfolio policy, a single-arm policy can draw one arm from $\boldsymbol{p}$ and commit to it; its expected commitment reward, and hence its expected regret, is exactly the same. Therefore, a portfolio policy violating the lower bound would yield a single-arm policy violating \Cref{thm:instance independent lower bound}. Consequently, for fixed $\lipschitzConst$ and $\feaDimen$, the above upper bound is basically tight up to logarithmic terms.}

In summary, although the general model appears to be much more challenging, we find that the effectiveness of predetermined effort allocation of exploration for the commitment phase still applies in the more general setting.
The execution path of the online stochastic convex optimization algorithm is a good enough proxy for the optimal portfolio in the commitment phase.
With the help of these algorithmic design techniques, we show that the general problem is not fundamentally harder than the basic model.

\section{Numerical Experiments}
\label{sec:numerical}

In this section, we complement our theory with three numerical experiments. 
\wtedit{The first experiment studies whether our proposed algorithm \RAEC\ exhibits the schedule-dependent scaling predicted by our regret bounds.}
We then compare \RAEC\ with a UCB-style baseline under post-commitment reward shift; and we also evaluate the generalized concave-reward extension discussed in \Cref{subsec:concave}. 
\wtedit{All reported regrets in our experiments are Monte Carlo averages over independent replications.}

\subsection{Hard-Instance Scaling for \RAEC}
\label{subsec:hard-instance-scaling}

Our first experiment examines the horizon-scaling behavior of algorithm \RAEC\ on hard instances motivated by the minimax lower-bound construction. 
The purpose of this experiment is not to recover the exact constants hidden in the theoretical bounds, but rather to test whether the empirical growth rate of total regret changes with the commitment schedule in the way predicted by \Cref{thm:upper bound}. 
Since the theoretical rates are stated up to logarithmic factors and multiplicative constants, the relevant comparison in this experiment is the slope of the regret curve on a log-log scale.\footnote{\wtedit{Throughout this section, $\log$ denotes the natural logarithm, with base $e$.}}

\xhdr{Experiment settings}
We fix the number of arms $\armNum=4$ and evaluate five commitment schedules: $\switchRound=\lfloor \timeHorizon^{1/2}\rfloor$, $\switchRound=\lfloor \timeHorizon^{2/3}\rfloor$, $\switchRound=\lfloor \timeHorizon/4\rfloor$, $\switchRound=\lfloor \timeHorizon-\timeHorizon^{3/4}\rfloor$, and $\switchRound=\lfloor \timeHorizon-\timeHorizon^{1/2}\rfloor$. 
The horizon grid is $\timeHorizon\in\{500,1000,2500,5000,10000,15000,20000\}$. 
These five schedules cover the three regimes in our regret characterization. 
The schedule $\switchRound=\lfloor \timeHorizon^{1/2}\rfloor$ corresponds to the short-experiment regime, where the theory predicts regret of order $\sqrt{\armNum\timeHorizon^2/\switchRound}$, which gives a $\timeHorizon^{3/4}$-type scaling when $\armNum$ is fixed and $\switchRound=\timeHorizon^{1/2}$. 
The schedules $\switchRound=\lfloor \timeHorizon^{2/3}\rfloor$ and $\switchRound=\lfloor \timeHorizon/4\rfloor$ lie in the boundary/balanced regimes, where the leading rate is $\armNum^{1/3}(\timeHorizon-\switchRound)^{2/3}$, corresponding to a $\timeHorizon^{2/3}$-type scaling when $\armNum$ is fixed. 
The schedules $\switchRound=\lfloor \timeHorizon-\timeHorizon^{3/4}\rfloor$ and $\switchRound=\lfloor \timeHorizon-\timeHorizon^{1/2}\rfloor$ represent late-commitment regimes. 
For the schedule $\switchRound=\lfloor \timeHorizon-\timeHorizon^{3/4}\rfloor$, the balanced-regime rate gives $(\timeHorizon-\switchRound)^{2/3}=\timeHorizon^{1/2}$. 
For the schedule $\switchRound=\lfloor \timeHorizon-\timeHorizon^{1/2}\rfloor$, the short-commitment rate gives the same $\timeHorizon^{1/2}$-type scaling.

For each pair $(\switchRound,\timeHorizon)$, we use schedule-specific Bernoulli instances that follow the spirit of the hard-instance construction in the proof of \Cref{thm:instance independent lower bound}. 
The experiment-phase mean vector is denoted by $\mu_{\rewFn}$, and the commitment-phase mean vector is denoted by $\mu_{\rewFnCommit}$. 
\begin{itemize}
    \item 
    For the commitment schedule $\switchRound=\lfloor \timeHorizon^{1/2}\rfloor$, the mean vectors are
    \begin{align*}
    \mu_{\rewFn}
    =
    (0.45+2\Delta_{\rewFn},\,0.45+\Delta_{\rewFn},\,0,\,0)~,
    \quad
    \mu_{\rewFnCommit}
    =
    (0,\,0,\,2\Delta_{\rewFnCommit},\,\Delta_{\rewFnCommit})~,
    \end{align*}
    where the gap parameters are $\Delta_{\rewFn}=\Delta_{\rewFnCommit}=0.5\sqrt{\armNum/\switchRound}$. 
    \item 
    For the commitment schedules $\switchRound=\lfloor \timeHorizon^{2/3}\rfloor$ and $\switchRound=\lfloor \timeHorizon/4\rfloor$, the mean vectors are
    \begin{align*}
        \mu_{\rewFn}
        =
        (0.55,\,0.535,\,0.49,\,0.49)~,
        \quad
        \mu_{\rewFnCommit}
        =
        (0.02,\,0.02,\,0.55+\Delta_{\rewFnCommit},\,0.55)~,
    \end{align*}
    where the commitment-phase gap parameter is $\Delta_{\rewFnCommit}=0.3(\armNum/(\timeHorizon-\switchRound))^{1/3}$. 
    \item 
    For the commitment schedules $\switchRound=\lfloor \timeHorizon-\timeHorizon^{3/4}\rfloor$ and $\switchRound=\lfloor \timeHorizon-\timeHorizon^{1/2}\rfloor$, the mean vectors are
    \begin{align*}
    \mu_{\rewFn}
    =
    (0.45+\Delta_{\rewFn},\,0.45,\,0.49,\,0.49)~, 
    \quad
    \mu_{\rewFnCommit}
    =
    (0,\,0,\,2\Delta_{\rewFnCommit},\,\Delta_{\rewFnCommit})~,
    \end{align*}
    where the gap parameters are $\Delta_{\rewFn}=2\sqrt{\armNum/\switchRound}$ and $\Delta_{\rewFnCommit}=\sqrt{\armNum/(\timeHorizon-\switchRound)}$. 
\end{itemize}
Each pull generates phase-specific Bernoulli reward observations with the corresponding means, so that the learner can update empirical estimates for both the $\rewFn$- and $\rewFnCommit$-rewards, as in the baseline model.
These choices are designed to stress different sources of regret in different regimes. 
The short-experiment schedule emphasizes the difficulty of identifying a good commitment arm, with the commitment-phase gap $\Delta_{\rewFnCommit}$ shrinking at the order $\sqrt{\armNum/\switchRound}$. 
The balanced schedules use the characteristic commitment-phase gap $\Delta_{\rewFnCommit}\asymp(\armNum/(\timeHorizon-\switchRound))^{1/3}$, which is the scale associated with the balanced-regime rate. 
The late-commitment schedules emphasize the standard bandit component in the experiment phase, with the experiment-phase gap $\Delta_{\rewFn}$ chosen at a square-root scale.

The implementation of \RAEC\ is as follows. 
The Stage-I target pull count $m_{\rewFnCommit,\ell}$ denotes the target total number of pulls for each remaining arm after epoch $\ell$ during the arm-elimination process for the commitment-phase reward. 
For the commitment schedule $\switchRound=\lfloor \timeHorizon^{1/2}\rfloor$, we implement the exact \RAEC\ algorithm defined in \Cref{main algo}. 
For the other four commitment schedules, we use a finite-sample constant adjustment in Stage I: in each Stage-I epoch, the target pull count is set to $(1/10)m_{\rewFnCommit,\ell}$ instead of $m_{\rewFnCommit,\ell}$. 
This adjustment only changes the constant factor in the reserved commitment exploration and avoids overly conservative finite-horizon exploration induced by the concentration constants. 
The Stage-II arm-elimination process for the experiment-phase reward is always implemented exactly as in \Cref{main algo}.

\xhdr{Experiment results}
For each schedule and horizon, we run \RAEC\ independently for $1500$ replications and compute the mean total regret, denoted by $\bar{\rew}_{\timeHorizon}$. 
\Cref{fig:hard-instance-scaling} reports $\log \bar{\rew}_{\timeHorizon}$ against $\log \timeHorizon$. 
In each panel, the solid curve is the empirical log mean regret. 
The dashed curve, labeled ``Scaled theory,'' is the theoretical rate predicted by \Cref{table:opt regret} after a schedule-specific vertical normalization. 
The dotted curve, labeled ``Slope fit,'' is an unconstrained empirical log-log regression line.

We now describe these two reference curves more explicitly. 
For each schedule $s$, the schedule-specific theoretical rate is denoted by $r_s(\timeHorizon)$, which is defined as
\begin{align*}
    r_s(\timeHorizon)
    =
    \begin{cases}
        \sqrt{\armNum\timeHorizon^2/\switchRound(\timeHorizon)},
        & \text{if } \switchRound(\timeHorizon)=\lfloor \timeHorizon^{1/2}\rfloor, \\[4pt]
        \armNum^{1/3}(\timeHorizon-\switchRound(\timeHorizon))^{2/3},
        & \text{if } \switchRound(\timeHorizon)=\lfloor \timeHorizon^{2/3}\rfloor
        \text{ or } \lfloor \timeHorizon/4\rfloor, \\[4pt]
        \sqrt{\armNum\timeHorizon},
        & \text{if } \switchRound(\timeHorizon)=\lfloor \timeHorizon-\timeHorizon^{3/4}\rfloor
        \text{ or } \lfloor \timeHorizon-\timeHorizon^{1/2}\rfloor .
    \end{cases}
\end{align*}
Because the theoretical result is stated in $\widetilde{\Theta}(\cdot)$ form, the multiplicative constant in front of $r_s(\timeHorizon)$ is not pinned down. 
Therefore, for each schedule $s$, we choose a scaling constant, denoted by $c_s$, so that the scaled theory curve passes through the last empirical point at $\timeHorizon_{\max}=20000$. 
That is, 
$\log \left(c_s r_s(\timeHorizon_{\max})\right)
=
\log \bar{\rew}_{\timeHorizon_{\max}}$.
Equivalently, the scaling constant is $c_s=\bar{\rew}_{\timeHorizon_{\max}}/r_s(\timeHorizon_{\max})$. 
The dashed curve in \Cref{fig:hard-instance-scaling} is then $\log(c_s r_s(\timeHorizon))$. 
This construction removes irrelevant level differences caused by constants and focuses the comparison on the slope of the theoretical rate.

For example, consider the panel with the commitment schedule $\switchRound=\lfloor \timeHorizon^{2/3}\rfloor$. 
Since the number of arms $\armNum$ is fixed, \Cref{table:opt regret} predicts a regret rate of order $\widetilde{\Theta}(\timeHorizon^{2/3})$. 
Thus, the corresponding scaled theory curve is a straight line with slope $2/3$ in the log-log plot. 
A visual calculation using the two plotted points around $\timeHorizon=1000$ and $\timeHorizon=20000$ gives an empirical slope close to
\begin{align*}
    \frac{5-3}{\log(20000/1000)}
    \approx 0.667
    \approx \frac{2}{3}~.
\end{align*}
The scaled theory curve is obtained by translating this slope-$2/3$ line vertically so that it crosses the empirical point at $\timeHorizon=20000$. 
The same construction is applied to the other schedules. 
The theoretical slopes are $3/4$, $2/3$, $1/2$, and $1/2$ for the schedules $\switchRound=\lfloor \timeHorizon^{1/2}\rfloor$, $\switchRound=\lfloor \timeHorizon/4\rfloor$, $\switchRound=\lfloor \timeHorizon-\timeHorizon^{3/4}\rfloor$, and $\switchRound=\lfloor \timeHorizon-\timeHorizon^{1/2}\rfloor$, respectively.

\begin{figure}[ht]
    \centering

\begin{tikzpicture}
  \newlength{\HardPanelWidth}
  \newlength{\HardPanelHeight}
  \newlength{\HardPanelSep}
  \setlength{\HardPanelWidth}{0.36\textwidth}
  \setlength{\HardPanelHeight}{0.24\textwidth}
  \setlength{\HardPanelSep}{1.2cm}
  \pgfplotsset{
    hard instance panel/.style={
      width=\HardPanelWidth,
      height=\HardPanelHeight,
      xlabel={$T$ (log scale)},
      ylabel={$\log \bar{R}_T$},
      xlabel style={font=\tiny, yshift=10pt},
      ylabel style={font=\tiny, yshift=-5pt},
      xtick={6.214608,6.907755,7.824046,8.517193,9.21034,9.615805,9.903488},
      xticklabels={500,1000,2500,5000,10000,15000,20000},
      xticklabel style={rotate=40, anchor=east,font=\fontsize{5pt}{6pt}\selectfont},
      yticklabel style={font=\tiny, xshift=5pt},
      ymajorgrids=true,
      xmajorgrids=true,
      grid style={gray!25},
      tick align=outside,
      title style={font=\tiny},
    },
    empirical plot/.style={
      color={rgb,255:red,31;green,119;blue,180},
      line width=1.1pt,
      draw opacity=0.35,
      mark options={solid, scale=0.9, fill={rgb,255:red,31;green,119;blue,180}, draw={rgb,255:red,31;green,119;blue,180}, fill opacity=1, draw opacity=1},
    },
    scaled theory plot/.style={
      color={rgb,255:red,255;green,127;blue,14},
      line width=0.9pt,
      dashed,
      mark=none,
    },
    slope fit plot/.style={
      color=black,
      line width=0.9pt,
      dotted,
      mark=none,
    },
  }
  \begin{groupplot}[
    hard instance panel,
    group style={group size=3 by 1, horizontal sep=\HardPanelSep},
  ]
    \nextgroupplot[title={$N=\lfloor T^{1/2}\rfloor$}, ymin=3.695602, ymax=6.920774, legend style={draw=none, fill=none, font=\tiny, at={(0.02,0.98)}, anchor=north west}, legend cell align=left]
    \addplot+[
      empirical plot,
      mark=*,
    ] coordinates {
    (6.214608, 3.9635)
    (6.907755, 4.46883)
    (7.824046, 5.132642)
    (8.517193, 5.675068)
    (9.21034, 6.17541)
    (9.615805, 6.411239)
    (9.903488, 6.698348)
    };
    \addlegendentry{Empirical}
    \addplot+[
      scaled theory plot,
    ] coordinates {
    (6.214608, 3.918028)
    (6.907755, 4.439703)
    (7.824046, 5.116976)
    (8.517193, 5.641887)
    (9.21034, 6.156696)
    (9.615805, 6.462736)
    (9.903488, 6.678049)
    };
    \addlegendentry{Scaled theory}
    \addplot+[
      slope fit plot,
    ] coordinates {
    (6.214608, 3.961927)
    (6.907755, 4.471092)
    (7.824046, 5.144171)
    (8.517193, 5.653336)
    (9.21034, 6.162501)
    (9.615805, 6.460343)
    (9.903488, 6.671666)
    };
    \addlegendentry{Slope fit}
    \nextgroupplot[title={$N=\lfloor T^{2/3}\rfloor$}, ymin=2.300731, ymax=5.240899]
    \addplot+[
      empirical plot,
      mark=*,
    ] coordinates {
    (6.214608, 2.513475)
    (6.907755, 2.967624)
    (7.824046, 3.598423)
    (8.517193, 4.099572)
    (9.21034, 4.545113)
    (9.615805, 4.836662)
    (9.903488, 5.038129)
    };
    \addplot+[
      scaled theory plot,
    ] coordinates {
    (6.214608, 2.575188)
    (6.907755, 3.034342)
    (7.824046, 3.640725)
    (8.517193, 4.102963)
    (9.21034, 4.564545)
    (9.615805, 4.834865)
    (9.903488, 5.027019)
    };
    \addplot+[
      slope fit plot,
    ] coordinates {
    (6.214608, 2.503501)
    (6.907755, 2.978834)
    (7.824046, 3.60719)
    (8.517193, 4.082522)
    (9.21034, 4.557855)
    (9.615805, 4.835907)
    (9.903488, 5.033188)
    };
    \nextgroupplot[title={$N=\lfloor T/4\rfloor$}, ymin=2.180893, ymax=5.326477]
    \addplot+[
      empirical plot,
      mark=*,
    ] coordinates {
    (6.214608, 2.398507)
    (6.907755, 2.908211)
    (7.824046, 3.5384)
    (8.517193, 4.127922)
    (9.21034, 4.667576)
    (9.615805, 4.844719)
    (9.903488, 5.09887)
    };
    \addplot+[
      scaled theory plot,
    ] coordinates {
    (6.214608, 2.617527)
    (6.907755, 3.079626)
    (7.824046, 3.690486)
    (8.517193, 4.152584)
    (9.21034, 4.614682)
    (9.615805, 4.884992)
    (9.903488, 5.07678)
    };
    \addplot+[
      slope fit plot,
    ] coordinates {
    (6.214608, 2.39783)
    (6.907755, 2.907365)
    (7.824046, 3.580934)
    (8.517193, 4.090469)
    (9.21034, 4.600005)
    (9.615805, 4.898064)
    (9.903488, 5.10954)
    };
  \end{groupplot}
  \begin{scope}[
    xshift=\dimexpr\HardPanelWidth/2+\HardPanelSep/2-1cm\relax,
    yshift=\dimexpr-\HardPanelHeight\relax,
  ]
  \begin{groupplot}[
    hard instance panel,
    group style={group size=2 by 1, horizontal sep=\HardPanelSep},
  ]
    \nextgroupplot[title={$N=\lfloor T-T^{3/4}\rfloor$}, ymin=4.497022, ymax=6.63966]
    \addplot+[
      empirical plot,
      mark=*,
    ] coordinates {
    (6.214608, 4.972843)
    (6.907755, 5.023158)
    (7.824046, 5.500712)
    (8.517193, 5.974012)
    (9.21034, 6.165413)
    (9.615805, 6.231475)
    (9.903488, 6.491892)
    };
    \addplot+[
      scaled theory plot,
    ] coordinates {
    (6.214608, 4.64479)
    (6.907755, 4.991363)
    (7.824046, 5.449509)
    (8.517193, 5.796082)
    (9.21034, 6.142656)
    (9.615805, 6.345388)
    (9.903488, 6.489229)
    };
    \addplot+[
      slope fit plot,
    ] coordinates {
    (6.214608, 4.87146)
    (6.907755, 5.166784)
    (7.824046, 5.557182)
    (8.517193, 5.852507)
    (9.21034, 6.147831)
    (9.615805, 6.320585)
    (9.903488, 6.443155)
    };
    \nextgroupplot[title={$N=\lfloor T-T^{1/2}\rfloor$}, ymin=4.555078, ymax=6.694628]
    \addplot+[
      empirical plot,
      mark=*,
    ] coordinates {
    (6.214608, 4.966299)
    (6.907755, 5.1219)
    (7.824046, 5.532326)
    (8.517193, 6.022441)
    (9.21034, 6.189699)
    (9.615805, 6.357056)
    (9.903488, 6.525701)
    };
    \addplot+[
      scaled theory plot,
    ] coordinates {
    (6.214608, 4.702633)
    (6.907755, 5.049207)
    (7.824046, 5.507352)
    (8.517193, 5.853926)
    (9.21034, 6.200499)
    (9.615805, 6.403232)
    (9.903488, 6.547073)
    };
    \addplot+[
      slope fit plot,
    ] coordinates {
    (6.214608, 4.886508)
    (6.907755, 5.193655)
    (7.824046, 5.599682)
    (8.517193, 5.906829)
    (9.21034, 6.213977)
    (9.615805, 6.393647)
    (9.903488, 6.521124)
    };
  \end{groupplot}
  \end{scope}
\end{tikzpicture}
    \caption{Log-log regret curves for \RAEC\ under five commitment schedules, each evaluated on its schedule-specific retained hard case. Each panel shows the empirical log mean regret, the scaled theory curve, and the fitted empirical slope.}
    \label{fig:hard-instance-scaling}
\end{figure}

The dotted ``Slope fit'' curve is constructed independently from the theoretical prediction. 
For each schedule $s$, we regress the empirical log mean regret on the log horizon as follows:
$\log \bar{\rew}_{\timeHorizon}
=
a_s+b_s\log \timeHorizon+\xi_{\timeHorizon}$.
This regression uses the seven plotted horizons. 
The fitted line $a_s+b_s\log \timeHorizon$ is plotted as the dotted curve, and the fitted coefficient $b_s$ is the empirical scaling exponent. 
Thus, the comparison between the dotted curve and the dashed curve provides a direct visual check of whether the empirical scaling exponent is close to the theoretical exponent.

\Cref{fig:hard-instance-scaling} supports the schedule-dependent regret rates in \Cref{thm:upper bound}. 
Within each panel, the empirical curve is close to the scaled theory curve over most of the plotted horizons. 
Since the scaled theory curve is allowed only a vertical shift, this closeness indicates that the empirical growth rate is close to the theoretical growth rate. 
Moreover, the fitted empirical slope is nearly parallel to the scaled theory curve in most panels, which further suggests that the empirical exponent is close to the predicted exponent. 
Across panels, the slopes flatten in the direction predicted by the theory: regret grows fastest under the schedule $\switchRound=\lfloor \timeHorizon^{1/2}\rfloor$, consistent with the $\timeHorizon^{3/4}$-type short-experiment scaling; the balanced schedules exhibit flatter $\timeHorizon^{2/3}$-type scaling; and the late-commitment schedules flatten further toward $\timeHorizon^{1/2}$-type scaling.

The finite-horizon deviations at the smallest values of $\timeHorizon$ are expected, because the theory is asymptotic and suppresses logarithmic factors and constants. 
The floor operations in the schedules and the small number of arms $\armNum=4$ also leave limited separation across regimes at moderate horizons. 
Overall, the experiment results support the regret-rate characterizations in \Cref{thm:upper bound}: as the commitment schedule moves from the short-experiment regime to the balanced regime and then to the short-commitment regime, the empirical regret scaling changes in the same direction and with the same leading exponents as the theoretical bound.



\subsection{{\RAEC} versus a {\UCB}-Style Baseline}
\label{subsec:raec-vs-ucb}

Our second experiment compares {\RAEC} with a natural {\UCB}-style baseline in a representative balanced-regime instance. 

\xhdr{Experiment settings}
The length of the experiment phase is $\switchRound=\lfloor \timeHorizon/4\rfloor$, and the time horizon varies over $\timeHorizon\in\{500,1000,2000,4000,10000,15000,20000\}$. 
The number of arms is fixed at $\armNum=4$. 
For each time horizon $\timeHorizon$, the experiment-phase mean vector $\mu_{\rewFn}$ and the commitment-phase mean vector $\mu_{\rewFnCommit}$ are
\begin{align*}
    \mu_{\rewFn}
    =
    (0.8,\,0.77,\,0.2,\,0.2)~,
    \quad
    \mu_{\rewFnCommit}
    =
    (0.2,\,0.2,\,0.55+\Delta_{\rewFnCommit}(\timeHorizon),\,0.55)~,
\end{align*}
where the commitment-phase gap is
$\Delta_{\rewFnCommit}(\timeHorizon)
=
\frac{0.4}{\sqrt{\log \switchRound}}
=
\frac{0.4}{\sqrt{\log(\lfloor \timeHorizon/4\rfloor)}}$.
This instance is designed to separate the experiment-phase and commitment-phase objectives. 
Arms $1$ and $2$ are attractive under the experiment-phase reward $\rewFn$, while arms $3$ and $4$ are much less attractive during experimentation but are the two best arms under the commitment-phase reward $\rewFnCommit$. 
Among these two commitment-relevant arms, arm $3$ is the $\rewFnCommit$-optimal arm, and arm $4$ is the second-best commitment arm with gap $\Delta_{\rewFnCommit}(\timeHorizon)$. 
Thus, a policy that focuses primarily on experiment-phase regret minimization may under-sample the two arms that are most relevant for the commitment decision.

The {\UCB}-style baseline operates as follows. 
During the experiment phase, it applies the standard {\UCB} rule using the experiment-phase reward. 
After each arm has been pulled once, the {\UCB} index $\UCB^{\rewFn}_{i,t}$ of arm $i\in[\armNum]$ in period $t\in[\switchRound]$ is
\begin{align*}
    \UCB^{\rewFn}_{i,t}
    =
    \empiricalmeanf{i,t}
    +
    \sqrt{\frac{2\log(t)}{n_{i,t}}}~,
\end{align*}
where the experiment-phase empirical mean $\empiricalmeanf{i,t}$ is arm $i$'s empirical mean reward for the experiment phase up to period $t$, and the pull count $n_{i,t}$ is the number of times arm $i$ has been pulled up to period $t$. 
At the same time, because every pulled outcome can also be evaluated under the commitment-phase reward function, the baseline keeps track of the commitment-phase empirical mean $\empiricalmeang{i,t}$ for every arm $i$. 
At the commitment date, the baseline commits to the arm with the largest commitment-phase empirical mean, i.e., $\argmax_{i\in[\armNum]}\empiricalmeang{i,\switchRound}$.
Thus, the baseline has access to the same paired $\rewFn$- and $\rewFnCommit$-reward information as {\RAEC}. 
The key difference is that its experiment-phase sampling rule is driven by the $\rewFn$-reward {\UCB} index and does not reserve exploration effort specifically for the commitment objective.

For {\RAEC}, we use the same two-stage structure as in \Cref{main algo}, but with a finite-sample constant-speedup adjustment to avoid overly conservative exploration at the plotted horizons. 
In particular, during Stage I, the sampling length of each $\rewFnCommit$-elimination epoch is set to $(1/25)\roundlen{\rewFnCommit}{\ell}$, and during Stage II, the sampling length of each $\rewFn$-elimination epoch is set to $(1/10)\roundlen{\rewFn}{\ell}$. 
These constant factors affect only the finite-sample implementation and preserve the intended structure of {\RAEC}: Stage I reserves exploration for the commitment-phase reward, while Stage II focuses on experiment-phase regret minimization. 
The final comparison uses $2500$ independent replications for each time horizon $\timeHorizon$.

\xhdr{Experiment results}
\Cref{fig:raec-vs-ucb} reports the mean total regret of {\RAEC} and the {\UCB} baseline. 
The shaded bands represent the $95\%$ confidence intervals. 
The figure shows a clear long-horizon advantage for {\RAEC}. 
At short horizons, {\RAEC} pays an additional exploration cost because it deliberately reserves part of the experiment phase to learn the commitment-phase reward. 
As the horizon grows, however, the commitment phase becomes more important, and the benefit of making a more accurate post-commitment decision dominates this early exploration cost. 
As a result, the mean total regret of {\RAEC} grows more slowly than that of the {\UCB} baseline.

The observed pattern is also consistent with our theoretical analysis. 
In this instance, arms $3$ and $4$ are substantially suboptimal under the experiment-phase reward $\rewFn$. 
A standard {\UCB} policy therefore pulls each of these $\rewFn$-suboptimal arms only on the order of $\log \switchRound$ times. 
Consequently, the commitment-phase empirical comparison between arms $3$ and $4$ is based on only $O(\log \switchRound)$ samples per arm. 
The $\rewFnCommit$-reward gap between these two arms satisfies
$ \Delta_{\rewFnCommit}(\timeHorizon)
=
O(1/\sqrt{\log \switchRound})$.
This gap is of the same order as the statistical noise in the empirical $\rewFnCommit$-mean estimates after $O(\log \switchRound)$ samples. 
Thus, this amount of sampling is not enough to drive the probability of confusing arms $3$ and $4$ to zero. 
Equivalently, an Azuma-type concentration calculation shows that the empirical $\rewFnCommit$-mean error remains at the same order as the signal $\Delta_{\rewFnCommit}(\timeHorizon)$. 
Therefore, the {\UCB} baseline can commit to the $\rewFnCommit$-suboptimal arm $4$ with non-negligible probability.

When this miscommitment occurs, the per-period commitment loss is $\Delta_{\rewFnCommit}(\timeHorizon)$. 
The resulting commitment regret is therefore on the order of
$(\timeHorizon-\switchRound)\Delta_{\rewFnCommit}(\timeHorizon)
=
O((\timeHorizon-\switchRound)/\sqrt{\log \switchRound})$.
For the schedule $\switchRound=\lfloor \timeHorizon/4\rfloor$, this term is of order $O(\timeHorizon/\sqrt{\log \timeHorizon})$, up to constant and lower-order logarithmic differences. 
This growth is substantially larger than the $\widetilde{O}(\timeHorizon^{2/3})$ regret rate that {\RAEC} can attain in the balanced regime. 
This explains the widening performance gap in \Cref{fig:raec-vs-ucb}: the {\UCB} baseline performs well for experiment-phase reward collection, but it does not allocate enough samples to accurately distinguish the two best commitment arms. 
{\RAEC} incurs short-term exploration cost, but its reserved commitment-phase exploration reduces the probability of a costly long-run commitment error.

\begin{figure}[h]
    \centering
    \resizebox{0.75\textwidth}{!}{%
\begin{tikzpicture}
  \begin{axis}[
    width=\linewidth,
    height=0.60\linewidth,
    xlabel={T},
    ylabel={Mean total regret},
    xmin=0,
    xmax=20600,
    ymin=0,
    ymax=367.693194,
    xtick={500,1000,2000,4000,10000,15000,20000},
    xticklabels={500,1000,2000,4000,10000,15000,20000},
    xticklabel style={rotate=30, anchor=east},
    ymajorgrids=true,
    xmajorgrids=true,
    grid style={gray!25},
    tick align=outside,
    legend style={draw=none, fill=none, font=\small, at={(0.03,0.97)}, anchor=north west},
    scaled x ticks=false,
  ]
    \addplot+[
      draw=none,
      fill={rgb,255:red,22;green,50;blue,79},
      fill opacity=0.14,
      forget plot,
      mark=none,
    ] coordinates {
    (500, 29.384025)
    (1000, 41.680764)
    (2000, 63.582115)
    (4000, 95.727317)
    (10000, 140.899307)
    (15000, 144.814585)
    (20000, 179.403924)
    (20000, 202.385753)
    (15000, 162.197238)
    (10000, 155.525035)
    (4000, 102.571787)
    (2000, 70.405691)
    (1000, 45.258218)
    (500, 31.44131)
    } \closedcycle;

    \addplot+[
      draw=none,
      fill={rgb,255:red,45;green,127;blue,191},
      fill opacity=0.14,
      forget plot,
      mark=none,
    ] coordinates {
    (500, 26.480517)
    (1000, 38.965403)
    (2000, 63.953081)
    (4000, 102.143764)
    (10000, 182.6995)
    (15000, 248.476659)
    (20000, 294.738699)
    (20000, 340.456661)
    (15000, 284.65264)
    (10000, 207.657347)
    (4000, 114.819492)
    (2000, 70.861932)
    (1000, 42.667484)
    (500, 28.656362)
    } \closedcycle;

    \addplot+[
      color={rgb,255:red,22;green,50;blue,79},
      line width=1.3pt,
      mark=*,
      mark options={solid, scale=0.9},
    ] coordinates {
    (500, 30.412667)
    (1000, 43.469491)
    (2000, 66.993903)
    (4000, 99.149552)
    (10000, 148.212171)
    (15000, 153.505912)
    (20000, 190.894838)
    };
    \addlegendentry{RAEC}
    \addplot+[
      color={rgb,255:red,45;green,127;blue,191},
      line width=1.3pt,
      mark=square*,
      mark options={solid, scale=0.9},
    ] coordinates {
    (500, 27.56844)
    (1000, 40.816443)
    (2000, 67.407507)
    (4000, 108.481628)
    (10000, 195.178423)
    (15000, 266.564649)
    (20000, 317.59768)
    };
    \addlegendentry{UCB}
    \addlegendimage{empty legend}
    \addlegendentry{Shaded bands: 95\% CI}
  \end{axis}
\end{tikzpicture}
    }
    \caption{Mean total regret of {\RAEC} and the {\UCB} baseline. {\RAEC} exhibits a long-horizon mean-regret advantage.}
    \label{fig:raec-vs-ucb}
\end{figure}

\subsection{Concave Commitment Reward}
\label{subsec:concave-numerics}
Our third experiment evaluates the generalized concave commitment model in \Cref{subsec:concave}.

\xhdr{Experiment settings} 
We compare {\ROSCOC} with a {\UCB}-style portfolio baseline, which we call {\UCB}-Portfolio. 
The length of the experiment phase is $\switchRound=\lfloor \timeHorizon/4\rfloor$, and the time horizon varies over $\timeHorizon\in\{500,1000,2000,4000,10000,15000,20000\}$. 
The number of arms is fixed at $\armNum=5$.

The experiment-phase rewards are Bernoulli random variables. 
The experiment-phase mean vector $\mu_{\rewFn}$ is
\begin{align*}
    \mu_{\rewFn}
    =
    \left(
        0.55+\frac{\Delta_{\rewFn}}{2},
        \,0.55-\frac{\Delta_{\rewFn}}{2},
        \,0.16,
        \,0.08,
        \,0.02
    \right),
\end{align*}
where the experiment-phase gap $\Delta_{\rewFn}$ is $\Delta_{\rewFn}=1/\sqrt{\switchRound}$. 
Thus, arms $1$ and $2$ are the most attractive arms in the experiment phase, while arms $3$, $4$, and $5$ are substantially suboptimal under the short-run reward $\rewFn$.

For the commitment phase, each arm $i\in[\armNum]$ is associated with an intrinsic three-dimensional outcome vector $\outcome_i=(\outcome_{i1},\outcome_{i2},\outcome_{i3})^\top$. 
When arm $i$ is pulled during the experiment phase, the learner observes a realized outcome vector whose coordinates are independent Bernoulli random variables with mean vector $\outcome_i$. 
The intrinsic outcome vectors are
\begin{align*}
    \outcome_1
    &=
    (0.90,\,0.18,\,0.34)^\top~,\quad
    \outcome_2
    =
    (0.08,\,0.90,\,0.32)^\top~, \quad
    \outcome_3
    =
    (0.08,\,0.64,\,0.18)^\top, \\
    \outcome_4
    &=
    (0.60,\,0.10,\,0.18)^\top~, \quad
    \outcome_5
    =
    (0.28,\,0.72,\,0.18)^\top .
\end{align*}
The normalized reward for the commitment phase has the quadratic form
$\rewFnCommit(\outcome)
=
-20\cdot(\outcome-\bs{c})^\top A(\outcome-\bs{c})$.
where the target vector $\bs{c}$ is $\bs{c}=(0.33,\,0.54,\,0.18)^\top$, and the matrix $A$ is
\begin{align*}
    A
    =
    \begin{bmatrix}
        2.8 & 2.3 & 0 \\
        2.3 & 2.6 & 0 \\
        0 & 0 & 0.75
    \end{bmatrix}.
\end{align*}
By construction, the target vector $\bs{c}$ satisfies
$\bs{c}
=
0.18\outcome_3+0.27\outcome_4+0.55\outcome_5$.
Thus, the optimal commitment portfolio assigns weights $0.18$, $0.27$, and $0.55$ to arms $3$, $4$, and $5$, respectively, and assigns zero weight to arms $1$ and $2$. 
This construction creates a sharp conflict between the short-run and long-run objectives: the arms that are most useful for the commitment portfolio are precisely the arms that look unattractive during the experiment phase.

The {\UCB}-Portfolio baseline operates as follows. 
During the experiment phase, it applies the standard {\UCB} rule using the experiment-phase reward $\rewFn$, as described in \Cref{subsec:raec-vs-ucb}. 
At the same time, it records the realized outcome vectors from the arms it pulls and uses these observations to estimate each intrinsic outcome vector. 
The empirical mean outcome vector of arm $i$ at the end of the experiment phase is denoted by $\hat{\outcome}_{i,\switchRound}$. 
At the commitment date, {\UCB}-Portfolio computes the estimated optimal commitment portfolio $\widehat{\bs{x}}_{\switchRound}^{\mathrm{UCB}}$ by solving
\begin{align*}
    \widehat{\bs{x}}_{\switchRound}^{\mathrm{UCB}}
    \in
    \argmax\nolimits_{\bs{x}\in\Delta_{\armNum-1}}
    -
    \left(
        \hat{\outcome}(\bs{x})-\bs{c}
    \right)^\top
    A
    \left(
        \hat{\outcome}(\bs{x})-\bs{c}
    \right),
\end{align*}
where the probability simplex $\Delta_{\armNum-1}$ is $\Delta_{\armNum-1}
=
\{
    \bs{x}\in\mathbb{R}_+^{\armNum}:
    \sum_{i=1}^{\armNum}x_i=1
\}$
and the estimated portfolio outcome $\hat{\outcome}(\bs{x})$ is
$\hat{\outcome}(\bs{x})
=
\sum\nolimits_{i\in[\armNum]}x_i\cdot \hat{\outcome}_{i,\switchRound}$.
The baseline then uses this estimated portfolio in the commitment phase. 
Thus, {\UCB}-Portfolio is not disadvantaged in terms of information access; its limitation is that the experiment-phase sampling rule is driven by the short-run reward $\rewFn$, rather than by the need to estimate the commitment-relevant outcome vectors.

For {\ROSCOC}, we use the reserved exploration structure described in \Cref{subsec:concave}. 
To avoid excessive finite-sample exploration at the plotted horizons, we use $0.3\cdot\tau$ as the Stage-I length, where $\tau$ is the theoretical Stage-I budget in \Cref{thm:regret concave}. 
In Stage II, we use $(1/10)m_{\rewFn,\ell}$ as the sampling length of each arm-elimination epoch. 
These constant adjustments preserve the main structure of {\ROSCOC}: Stage I learns a good commitment portfolio through reserved exploration, while Stage II controls experiment-phase regret. 
The comparison uses $200$ independent replications for each time horizon $\timeHorizon$.

\xhdr{Experiment results}
\Cref{fig:concave-regret} reports the mean total regret of {\ROSCOC} and {\UCB}-Portfolio. 
The shaded bands represent the $95\%$ confidence intervals. 
{\ROSCOC} has lower mean total regret at every plotted horizon, and the gap widens as $\timeHorizon$ grows. 
In particular, the regret of {\UCB}-Portfolio grows nearly linearly over longer horizons, while the growth of {\ROSCOC} is substantially slower. 
This pattern is consistent with the theoretical message of \Cref{subsec:concave}: even in the generalized concave commitment setting, a predetermined reservation of exploration effort remains effective.

The mechanism behind the performance difference is similar to the previous experiment, but now the commitment decision is a portfolio rather than a single arm. 
In this instance, arms $3$, $4$, and $5$ are highly suboptimal under the experiment-phase reward $\rewFn$, but they are exactly the arms that form the optimal commitment portfolio. 
A {\UCB} policy that focuses on the experiment-phase reward pulls each of these $\rewFn$-suboptimal arms only on the order of $\log\switchRound$ times. 
As a result, the empirical outcome vectors $\hat{\outcome}_{3,\switchRound}$, $\hat{\outcome}_{4,\switchRound}$, and $\hat{\outcome}_{5,\switchRound}$ have coordinate-wise estimation errors on the order of
$O(
\frac{1}{\sqrt{\log\switchRound}}
)$.
These errors then enter the quadratic portfolio optimization problem used at the commitment date. 
Since the commitment payoff depends on the distance between the realized portfolio outcome and the target vector $\bs{c}$, inaccurate estimates of the commitment-relevant arms can lead to a poorly chosen portfolio. 
A conservative Lipschitz calculation gives a possible commitment regret of order
$O(
\frac{\timeHorizon-\switchRound}{\sqrt{\log\switchRound}})$.
For the schedule $\switchRound=\lfloor \timeHorizon/4\rfloor$, this scale becomes $O(\timeHorizon/\sqrt{\log\timeHorizon})$, up to constants and lower-order logarithmic differences. 
This growth is substantially worse than the $\widetilde{O}(\timeHorizon^{2/3})$ regret rate that {\ROSCOC} can attain in the balanced regime.

This comparison explains the widening gap in \Cref{fig:concave-regret}. 
{\UCB}-Portfolio spends most of its experiment-phase samples on arms that are good for short-run reward collection, but these samples are not the most useful for estimating the long-run optimal portfolio. 
{\ROSCOC} pays an upfront exploration cost to learn the commitment-relevant outcome geometry, and this investment leads to a more accurate portfolio choice and lower long-horizon regret.

\begin{figure}[h]
    \centering
    \resizebox{0.75\textwidth}{!}{%
\begin{tikzpicture}
  \begin{axis}[
    width=\linewidth,
    height=0.62\linewidth,
    xlabel={T},
    xmin=0,
    xmax=20600,
    ylabel={Mean total regret},
    ymin=0,
    ymax=3288.134174,
    xtick={500,1000,2000,4000,10000,15000,20000},
    xticklabels={500,1000,2000,4000,10000,15000,20000},
    xticklabel style={rotate=30, anchor=east},
    ymajorgrids=true,
    xmajorgrids=true,
    grid style={gray!25},
    tick align=outside,
    legend style={draw=none, fill=none, font=\small, at={(0.03,0.97)}, anchor=north west},
    scaled x ticks=false,
  ]
    \addplot+[
      draw=none,
      fill={rgb,255:red,22;green,50;blue,79},
      fill opacity=0.14,
      forget plot,
      mark=none,
    ] coordinates {
    (500, 99.167299)
    (1000, 164.117938)
    (2000, 265.779494)
    (4000, 491.918821)
    (10000, 887.002782)
    (15000, 1190.225645)
    (20000, 1461.703794)
    (20000, 1543.004095)
    (15000, 1267.915814)
    (10000, 948.899679)
    (4000, 544.766775)
    (2000, 296.705464)
    (1000, 188.74116)
    (500, 115.670443)
    } \closedcycle;

    \addplot+[
      draw=none,
      fill={rgb,255:red,45;green,127;blue,191},
      fill opacity=0.14,
      forget plot,
      mark=none,
    ] coordinates {
    (500, 145.594468)
    (1000, 223.446613)
    (2000, 372.701056)
    (4000, 644.382394)
    (10000, 1158.756819)
    (15000, 1611.686927)
    (20000, 2106.002061)
    (20000, 3044.568679)
    (15000, 2213.756308)
    (10000, 1631.915651)
    (4000, 861.77587)
    (2000, 499.283401)
    (1000, 296.895541)
    (500, 192.137786)
    } \closedcycle;

    \addplot+[
      color={rgb,255:red,22;green,50;blue,79},
      line width=1.2pt,
      mark=*,
      mark options={solid, scale=0.9},
    ] coordinates {
    (500, 107.418871)
    (1000, 176.429549)
    (2000, 281.242479)
    (4000, 518.342798)
    (10000, 917.95123)
    (15000, 1229.07073)
    (20000, 1502.353944)
    };
    \addlegendentry{ROSCOC}
    \addplot+[
      color={rgb,255:red,45;green,127;blue,191},
      line width=1.2pt,
      mark=square*,
      mark options={solid, scale=0.9},
    ] coordinates {
    (500, 168.866127)
    (1000, 260.171077)
    (2000, 435.992229)
    (4000, 753.079132)
    (10000, 1395.336235)
    (15000, 1912.721617)
    (20000, 2575.28537)
    };
    \addlegendentry{UCB-Portfolio}
    \addlegendimage{empty legend}
    \addlegendentry{Shaded bands: 95\% CI}
  \end{axis}
\end{tikzpicture}
    }
    \caption{Mean total regret of {\ROSCOC} and the {\UCB}-Portfolio baseline. {\ROSCOC} exhibits a long-horizon mean-regret advantage in the concave commitment setting.}
    \label{fig:concave-regret}
\end{figure}

\section{Conclusion}
\label{sec:conclusion}

\wtedit{We study adaptive experimentation with post-commitment reward shifts, motivated by operational settings in which actions that perform well before an anticipated environmental change may be poorly suited for the subsequent long-run commitment. For the baseline model, we develop {\RAEC}, a simple algorithm that reserves a predetermined portion of the experiment phase for identifying a good commitment arm while using the remaining rounds to control short-run regret. We derive regret guarantees across all parameter regimes and establish matching information-theoretic lower bounds, yielding a tight characterization of the cost of balancing short-term performance and long-term commitment. A central implication is that predetermined effort allocation is sufficient at the level of minimax regret: more sophisticated adaptive allocation based on the learned instance structure cannot improve the leading regret rate.
We also study two extensions. With structural knowledge of reward shifts, we show that ranking changes matter more than shift magnitudes. For concave commitment rewards, ROSCOC converts its reserved exploration path directly into a commitment portfolio and, for fixed dimension and Lipschitz constant, achieves the same regret order as {\RAEC}. Numerical experiments support the predicted scaling and demonstrate long-horizon gains over UCB-style baselines.

Interesting future research questions include algorithm design when the commitment time is exogenously given but remains unknown (i.e., $\switchRound$ is unknown) to the decision-maker or when reward functions $\rewFn$ and $\rewFnCommit$ admit more fine-grained structural relationships, etc. }

\xhdr{Funding}
Puping Jiang was supported by the National Natural Science Foundation of China [Grants 72301167, 72331006, and 72472098].

\bibliography{ref}

\newpage
\appendix



\section{Missing Proofs in \texorpdfstring{\Cref{sec:upper bound}}{}}
\label{apx:proofs upper bound}

\ppedit{
\begin{proof}[Proof for \Cref{lem:regret1}]
Let $\numround=\min\{\epoch:\roundlen{\rewFn}{\epoch}\geq \switchRound\}$ be the largest possible epoch index involved before the experiment cap is reached.
We recall that $\optf\in[\armNum]$ denotes the true optimal arm of the experiment phase.
We first consider the event that all empirical means used by the arm-elimination process for reward function $\rewFn$ are accurate. In particular, let
\begin{align*}
\mathcal{G}_{\rewFn}=\left\{\left|\empiricalmeanf{i,\epoch}-\mu_{\rewFn,i}\right|\leq \frac{\myDelta{\epoch}}{2},~\text{for every empirical mean computed before the experiment cap}\right\}.
\end{align*}
Although the calendar times at which an arm is pulled are adaptively determined, whenever $\empiricalmeanf{i,\epoch}$ is computed, it is the average of the first $\roundlen{\rewFn}{\epoch}$ rewards observed from arm $i$. Under either the outcome-observation model or the independent-signal model, conditional on the history before each of these pulls, the centered reward has mean zero and is contained in an interval of length one. Therefore, the martingale Hoeffding--Azuma inequality gives
\begin{align*}
\prob{\left|\empiricalmeanf{i,\epoch}-\mu_{\rewFn,i}\right|>\frac{\myDelta{\epoch}}{2},~\empiricalmeanf{i,\epoch}\text{ is computed}}
&\leq 2\myexp{-\frac{\roundlen{\rewFn}{\epoch}\myDelta{\epoch}^2}{2}}
=\frac{2}{\switchRound^{\epochfactor/2}}.
\end{align*}
Therefore, by a union bound,
we have $$\prob{\mathcal{G}_{\rewFn}^c}
\leq \frac{2\armNum\numround}{\switchRound^{\epochfactor/2}}.$$
On event $\mathcal{G}_{\rewFn}$, arm $\optf$ is never eliminated. This is due to the fact that for any active arm $i$ in epoch $\epoch$,
$\empiricalmeanf{i,\epoch}-\empiricalmeanf{\optf,\epoch}
\leq \mu_{\rewFn,i}-\mu_{\rewFn}^*+\myDelta{\epoch}
\leq \myDelta{\epoch}$.
For every $i\in\mathcal{S}_{\rewFn}$, define
$\marginroundf{i}=\min\left\{\epoch:\myDelta{\epoch}\leq\Deltaf{i}/4\right\}$.
If epoch $\marginroundf{i}$ is completed, then on event $\mathcal{G}_{\rewFn}$,
we have $\empiricalmeanf{\optf,\marginroundf{i}}-\empiricalmeanf{i,\marginroundf{i}}
\geq \Deltaf{i}-\myDelta{\marginroundf{i}}
>\myDelta{\marginroundf{i}}$,
and hence arm $i$ is eliminated after this epoch. If the experiment cap is reached before this epoch is completed, the number of pulls charged to the arm-elimination process for arm $i$ is still no larger than $\roundlen{\rewFn}{\marginroundf{i}}$, because the number of pulls in each epoch is a cumulative target. Thus, this argument also covers the sample paths in which arm $i$ remains active until the experiment cap. Since $\myDelta{\marginroundf{i}}>\Deltaf{i}/8$, we have
$\roundlen{\rewFn}{\marginroundf{i}}
\leq \frac{64\epochfactor\log(\switchRound)}{\Deltaf{i}^2}$.
On the complement event $\mathcal{G}_{\rewFn}^c$, the regret is trivially bounded by $\switchRound$. 

Combining the above bounds, we have
\begin{align*}
\Rega{\switchRound}
&\leq \sum\nolimits_{i\in\mathcal{S}_{\rewFn}}\Deltaf{i}\roundlen{\rewFn}{\marginroundf{i}}
+\switchRound\prob{\mathcal{G}_{\rewFn}^c}\\
&\leq O\left(\sum\nolimits_{i\in\mathcal{S}_{\rewFn}}\frac{\log(\switchRound)}{\Deltaf{i}}
+\frac{\armNum\numround}{\switchRound^{\epochfactor/2-1}}\right)
\leq O\left(\sum\nolimits_{i\in\mathcal{S}_{\rewFn}}\frac{\log(\switchRound)}{\Deltaf{i}}\right),
\end{align*}
where the last inequality holds for $\epochfactor=4$, $\numround=O(\log(\switchRound))$, and $\armNum/\switchRound\leq O(1)$. If $\mathcal{S}_{\rewFn}$ is empty, the claim is trivial; otherwise, the displayed sum is at least of order $\log(\switchRound)$ and absorbs the failure-probability term.
\end{proof}
}


\ppedit{
\begin{proof}[Proof for \Cref{lem:regret2}]
We consider the event that all empirical means used by the two arm-elimination processes are accurate. In particular, let
\begin{align*}
\mathcal{G}=\left\{\begin{array}{l}
\left|\empiricalmeanf{i,\epoch}-\mu_{\rewFn,i}\right|\leq\myDelta{\epoch}/2,~\text{for every empirical mean computed for reward function $\rewFn$},\\
\left|\empiricalmeang{i,\epoch}-\mu_{\rewFnCommit,i}\right|\leq\myDelta{\epoch}/2,~\text{for every empirical mean computed for reward function $\rewFnCommit$}
\end{array}\right\}.
\end{align*}
For every empirical mean appearing in $\mathcal{G}$, the same martingale Hoeffding--Azuma argument as in the proof of \Cref{lem:regret1} bounds its deviation jointly with the event that the empirical mean is computed. Therefore, a union bound gives
\begin{align}
\prob{\mathcal{G}^c}
\leq \frac{2\armNum\numround}{\switchRound^{\epochfactor/2}}
+\frac{2\armNum\epochNum}{(\timeHorizon-\switchRound)^{\epochfactorg/2}}.
\label{eqn:good-event-fg}
\end{align}
On event $\mathcal{G}$, arms $\optf$ and $\optg$ are never eliminated from their corresponding active arm sets. For every $i\in\mathcal{S}_{\rewFn}$, let
\begin{align*}
\marginroundf{i}=\min\left\{\epoch:\myDelta{\epoch}\leq\Deltaf{i}/4\right\},\qquad
\marginroundg{i}=\min\left\{\epoch:\myDelta{\epoch}\leq\Deltag{i}/4\right\}.
\end{align*}
By the same elimination argument as before, arm $i$ is eliminated from $\armf{\epoch}$ no later than epoch $\marginroundf{i}$ if this epoch is completed. Similarly, if $\marginroundg{i}\leq\epochNum$, arm $i$ is eliminated from $\armg{\epoch}$ no later than epoch $\marginroundg{i}$. If $\marginroundg{i}>\epochNum$, arm $i$ may remain in $\armg{\epochNum+1}$, which is the intended behavior when its commitment-phase gap is small.
Recall that every pull in Stage I can be used to calculate empirical rewards for both reward functions. Therefore, on event $\mathcal{G}$, the total number of pulls of arm $i$ that can be charged to the reserved arm-elimination process is upper bounded by
\begin{align*}
\max\left\{\roundlen{\rewFn}{\marginroundf{i}},~
\roundlen{\rewFnCommit}{\min\{\marginroundg{i},\epochNum\}}\right\}.
\end{align*}
This upper bound also covers the sample paths in which arm $i$ remains active until the end of Stage I or until the experiment cap. In particular,
\begin{equation*}
\roundlen{\rewFn}{\marginroundf{i}}
\leq O\left(\frac{\log(\switchRound)}{\Deltaf{i}^2}\right)~,~
\roundlen{\rewFnCommit}{\min\{\marginroundg{i},\epochNum\}}
\leq O\left(\frac{\log(\timeHorizon-\switchRound)}{\max\{\Deltag{i},\stopCriteria\}^2}\right),
\end{equation*}
where the second inequality uses $\myDelta{\epochNum}\in[\stopCriteria/2,\stopCriteria]$.
On event $\mathcal{G}^c$, the regret in the experiment phase is at most $\switchRound$. It follows from \eqref{eqn:good-event-fg} that
\begin{align*}
\Regb{\switchRound}
&\leq O\left(\sum\nolimits_{i\in\mathcal{S}_{\rewFn}}\Deltaf{i}\max\left\{
\frac{\log(\timeHorizon-\switchRound)}{\max\{\Deltag{i},\stopCriteria\}^2},
\frac{\log(\switchRound)}{\Deltaf{i}^2}
\right\}\right)\\
&\quad+O\left(\frac{\armNum\log(\switchRound)}{\switchRound}
+\frac{\armNum\switchRound\log(1/\stopCriteria)}{(\timeHorizon-\switchRound)^2}\right),
\end{align*}
where we use $\epochfactor=\epochfactorg=4$, $\numround=O(\log(\switchRound))$, and $\epochNum=O(\log(1/\stopCriteria))$.
\end{proof}
}


\begin{proof}[Proof for \Cref{lem:regret3}]
In the commitment phase, we consider two possible scenarios:
\begin{itemize}
    \item[$(i)$.]
     all arms remained in the active arm set $\armg{\epochNum+1}$ satisfy $\max_{i\in\armg{\epochNum+1}}\Deltag{i} < 2\stopCriteria$;
     \item[$(ii)$.] at least one arm in $\armg{\epochNum+1}$ has $\Deltag{i}\geq 2\stopCriteria$.
\end{itemize}
Intuitively,  scenario  $(i)$ should occur with high probability, and once it occurs, the regret incurred in the commitment phase is upper bounded by $(\timeHorizon - \switchRound)\cdot\max_{\Deltag{i}<2\stopCriteria}\{\Deltag{i}\}$.
On the other hand, scenario $(ii)$ should occur with low probability, and once it occurs, since we commit to an arbitrary arm in $\armg{\epochNum+1}$, the regret incurred by committing to arm $i\in\armg{\epochNum+1}$ is bounded by $(\timeHorizon - \switchRound)\cdot\Deltag{i}$. 
Thus, we need to upper bound the probability of scenario $(ii)$.
\ppedit{We decompose the regret in scenario $(ii)$ according to whether the optimal arm $\optg$ remains in $\armg{\epochNum+1}$. We first consider the following event:}
\begin{itemize}
    \item
    \ppedit{$\eventc{i}$:
    Both arms $i$ and $\optg$ remain uneliminated from $\armg{\epochNum}$ after epoch $\epochNum$, i.e., $\left\{i\in\armg{\epochNum+1},\optg\in\armg{\epochNum+1}\right\}$.}
\end{itemize}
\ppedit{We use Azuma's inequality to upper bound the probability of this event. In particular, we have}
\begin{align*}
    \ppedit{\prob{\eventc{i}}}
    & \le
    \myexp{-\frac{\roundlen{\rewFnCommit}{\epochNum}\cdot\left((\Deltag{i}-\myDelta{\epochNum}) \vee 0 \right)^2}{2}}\\
    & \le
    \myexp{-\frac{\roundlen{\rewFnCommit}{\epochNum}\cdot\left(\Deltag{i}/2\right)^2}{2}}
    \le
    \myexp{-\frac{\epochfactorg\cdot\log(\timeHorizon - \switchRound)}{\stopCriteria^2}\cdot\frac{\Deltag{i}^2}{8}}~.
\end{align*}
\ppedit{On the other hand, by a union bound over all epochs and arms, we have}
\begin{align*}
    \ppedit{\prob{\optg\not\in\armg{\epochNum+1}}}
    & \ppedit{\le
    \sum\nolimits_{\epoch = 1}^{\epochNum}\prob{\text{arm }\optg\text{ is eliminated in epoch }\epoch}}\\
    & \ppedit{\le
    \sum\nolimits_{\epoch = 1}^{\epochNum}\armNum\cdot\myexp{-\frac{\roundlen{\rewFnCommit}{\epoch}\cdot\myDelta{\epoch}^2}{2}}}\\
    & \ppedit{=
    \frac{\armNum\cdot \epochNum}{(\timeHorizon - \switchRound)^{\epochfactorg/2}}
    = \frac{\armNum\cdot\ceil{\log\left(1/\stopCriteria\right)}}{(\timeHorizon - \switchRound)^{\epochfactorg/2}}~.}
\end{align*}
\ppedit{where we have used the fact that $\roundlen{\rewFnCommit}{\epoch}=\frac{\epochfactorg\cdot\log(\timeHorizon - \switchRound)}{(\myDelta{\epoch})^2}$.}
\ppedit{Now we can upper bound the regret $\Regc{\switchRound, \timeHorizon}$ as follows:}
\begin{align*}
    & \ppedit{\Regc{\switchRound, \timeHorizon}} \\
    \ppedit{\leq} ~ &
    \ppedit{(\timeHorizon -\switchRound)\max_{i:\Deltag{i}<2\stopCriteria}\{\Deltag{i}\}}\\
    & \ppedit{+(\timeHorizon - \switchRound)\sum\nolimits_{j:\Deltag{j}\geq 2\stopCriteria}\Deltag{j}
    \prob{\text{arm }j\text{ is selected for commitment},\optg\in\armg{\epochNum+1}}}\\
    & \ppedit{+(\timeHorizon - \switchRound)\sum\nolimits_{j:\Deltag{j}\geq 2\stopCriteria}\Deltag{j}
    \prob{\text{arm }j\text{ is selected for commitment},\optg\not\in\armg{\epochNum+1}}}\\
    \ppedit{\overset{(a)}{\leq}} ~ &
    \ppedit{(\timeHorizon -\switchRound)\max_{i:\Deltag{i}<2\stopCriteria}\{\Deltag{i}\}
    +(\timeHorizon - \switchRound)\cdot\armNum\cdot\max_{j:\Deltag{j}\geq2\stopCriteria}
    \left\{\Deltag{j}\prob{\eventc{j}}\right\}
    +(\timeHorizon - \switchRound)\prob{\optg\not\in\armg{\epochNum+1}}}\\
    \ppedit{\leq} ~ &
    \ppedit{(\timeHorizon -\switchRound)\max_{i:\Deltag{i}<2\stopCriteria}\{\Deltag{i}\}
    +(\timeHorizon - \switchRound)\cdot\armNum\cdot\max_{j:\Deltag{j}\geq2\stopCriteria}
    \left\{\Deltag{j}\myexp{-\frac{\epochfactorg\log(\timeHorizon - \switchRound)}{\stopCriteria^2}\frac{\Deltag{j}^2}{8}}\right\}
    +(\timeHorizon - \switchRound)\cdot
    \frac{\armNum\ceil{\log\left(1/\stopCriteria\right)}}{(\timeHorizon - \switchRound)^{\epochfactorg/2}}}\\
    \ppedit{\overset{(b)}{\leq}} ~ &
    \ppedit{(\timeHorizon - \switchRound)\max_{i:\Deltag{i}<2\stopCriteria}\{\Deltag{i}\}
    +(\timeHorizon - \switchRound)\cdot\armNum\cdot
    \myexp{-\frac{4\log(\timeHorizon - \switchRound)}{\stopCriteria^2}\frac{(2\stopCriteria)^2}{8}}
    +\frac{\armNum\ceil{\log\left(1/\stopCriteria\right)}}{\timeHorizon - \switchRound}}\\
    \ppedit{\leq} ~ &
    \ppedit{O\left((\timeHorizon - \switchRound)\max_{i:\Deltag{i}<2\stopCriteria}\{\Deltag{i}\}
    +\frac{\armNum\log\left(1/\stopCriteria\right)}{\timeHorizon - \switchRound}\right)~.}
\end{align*}
\ppedit{Here inequality $(a)$ follows because selecting arm $j$ while $\optg$ remains active implies $\eventc{j}$, while the events that arm $j$ is selected for commitment are mutually exclusive and $\Deltag{j}\leq1$. Inequality $(b)$ holds for $\epochfactorg=4$ and $2\stopCriteria\leq\Deltag{j}\leq1$.}
\end{proof}

With the above three lemmas, we are ready to prove \Cref{prop:instance-dependent ub}.
\begin{proof}[Proof of \Cref{prop:instance-dependent ub}]
We simply take the summation of the regret upper bounds established in \Cref{lem:regret1,lem:regret2,lem:regret3},
\begin{align*}
    \Reg[]{\switchRound, \timeHorizon\mid\instance}
    &=\Rega{\switchRound}+\Regb{\switchRound}+\Regc{\switchRound, \timeHorizon}\\
    &\leq \ppedit{O\left(\begin{aligned}
    &\sum\nolimits_{i\in\mathcal{S}_{\rewFn}}\Deltaf{i}\cdot\max\left\{\frac{\log(\timeHorizon - \switchRound)}{\max\left\{\Deltag{i},\stopCriteria\right\}^2},\frac{\log(\switchRound)}{\Deltaf{i}^2}\right\}\\
    &+(\timeHorizon - \switchRound)\cdot\max_{i:\Deltag{i}<2\stopCriteria}\{\Deltag{i}\}+\frac{\armNum\log\left(1/\stopCriteria\right)}{(\timeHorizon - \switchRound)}\\
    &+\frac{\armNum\log(\switchRound)}{\switchRound}+\frac{\armNum\switchRound\log(1/\stopCriteria)}{(\timeHorizon-\switchRound)^2}
    \end{aligned}\right)}~,
\end{align*}
which finishes the proof.
\end{proof}

\thmub*

\begin{proof}[Proof for \Cref{thm:upper bound}]
Under the choice of $\stopCriteria$ in \Cref{thm:upper bound}, we have three region discussions as $\switchRound$ grows.

\xhdr{Region I}
When $\switchRound<\armNum^{\sfrac{1}{3}}(\timeHorizon - \switchRound)^{\sfrac{2}{3}}\log(\timeHorizon - \switchRound)^{\sfrac{1}{3}}$, we have \ppedit{$\stopCriteria=4\sqrt{\frac{\armNum\log(\timeHorizon - \switchRound)}{\switchRound}}$}, then 
\begin{align*}
\eqref{eqn:regret}&\leq O\left(\sum\nolimits_{i\in\mathcal{S}_{\rewFn}}\Deltaf{i}\cdot\max\left\{\frac{\switchRound}{\armNum},\frac{\log(\switchRound)}{\Deltaf{i}^2}\right\}+(\timeHorizon - \switchRound)\cdot \sqrt{\frac{\armNum\log(\timeHorizon - \switchRound)}{\switchRound}}\right)
\end{align*}
We divide arms $[\armNum]$ into two groups: 
\begin{itemize}
    \item Group 1 includes all arms in $[\armNum]$ satisfying  $\Deltaf{i}<\sqrt{\frac{\armNum\log(\switchRound)}{\switchRound}}$;
    \item Group 2 includes all arms in $[\armNum]$ satisfying $\Deltaf{i}\geq\sqrt{\frac{\armNum\log(\switchRound)}{\switchRound}}$.
\end{itemize}
Notice that $\sum\nolimits_{i\in \groupone}\expect{n_i(\switchRound)}\leq \switchRound$, then the above,
\begin{align}
\notag
&\leq O\left(\sum\nolimits_{i\in \groupone}\Deltaf{i}\cdot \expect{n_i(\switchRound)}+\sum\nolimits_{i\in \grouptwo}\Deltaf{i}\cdot\frac{\switchRound}{\armNum}+(\timeHorizon - \switchRound)\cdot \sqrt{\frac{\armNum\log(\timeHorizon - \switchRound)}{\switchRound}}\right)\\
&=O\left(\timeHorizon\cdot\sqrt{\frac{\armNum\log(\timeHorizon)}{\switchRound}}+\switchRound\right)=O\left(\timeHorizon\cdot\sqrt{\frac{\armNum\log(\timeHorizon)}{\switchRound}}\right),
\label{eqn:regret_instance_independent1}
\end{align}
where the last equality holds due to the scenario condition that $\switchRound<\armNum^{\sfrac{1}{3}}(\timeHorizon - \switchRound)^{\sfrac{2}{3}}\log(\timeHorizon - \switchRound)^{\sfrac{1}{3}}$.

\xhdr{Region II}
When $\switchRound\geq\armNum^{\sfrac{1}{3}}(\timeHorizon - \switchRound)^{\sfrac{2}{3}}\log(\timeHorizon - \switchRound)^{\sfrac{1}{3}}$ and $(\timeHorizon - \switchRound)\geq \armNum^{\sfrac{1}{4}}\switchRound^{\sfrac{3}{4}}\log(\switchRound)^{\sfrac{3}{4}}/\log(\timeHorizon - \switchRound)^{\sfrac{1}{2}}$, we have $\stopCriteria=\left(\frac{\armNum\cdot\log(\timeHorizon - \switchRound)}{\timeHorizon - \switchRound}\right)^{\sfrac{1}{3}}$, then
\begin{align*}
    \eqref{eqn:regret}&
    \leq O\left(\sum\nolimits_{i\in\mathcal{S}_{\rewFn}}\Deltaf{i}\max\left\{\frac{(\timeHorizon - \switchRound)^{\sfrac{2}{3}}\log(\timeHorizon - \switchRound)^{\sfrac{1}{3}}}{\armNum^{\sfrac{2}{3}}},\frac{\log(\switchRound)}{\Deltaf{i}^2}\right\}+(\timeHorizon - \switchRound)\cdot\left(\frac{\armNum\cdot\log(\timeHorizon - \switchRound)}{\timeHorizon - \switchRound}\right)^{\sfrac{1}{3}}\right)
\end{align*}
Similarly, we divide arms $[\armNum]$ into two groups: 
\begin{itemize}
    \item 
    Group 1 includes all arms in $[\armNum]$ satisfying  $\Deltaf{i}<\left(\frac{\armNum^{\sfrac{2}{3}}\cdot\log(\switchRound)}{(\timeHorizon - \switchRound)^{\sfrac{2}{3}}\cdot\log(\timeHorizon - \switchRound)^{\sfrac{1}{3}}}\right)^{\sfrac{1}{2}}$.
    We further divide all arms in \groupone\ as following two subgroups:
    \begin{itemize}
        \item Group 1a includes all arms in \groupone\ satisfying  
        $\Deltaf{i}<\frac{\armNum^{\sfrac{1}{3}}(\timeHorizon - \switchRound)^{\sfrac{2}{3}}\log(\timeHorizon - \switchRound)^{\sfrac{1}{3}}}{\switchRound}$
        \item 
        Group 1b includes all arms in \groupone\ satisfying 
        $\frac{\armNum^{\sfrac{1}{3}}(\timeHorizon - \switchRound)^{\sfrac{2}{3}}\log(\timeHorizon - \switchRound)^{\sfrac{1}{3}}}{\switchRound}\leq\Deltaf{i}<\left(\frac{\armNum^{\sfrac{2}{3}}\cdot\log(\switchRound)}{(\timeHorizon - \switchRound)^{\sfrac{2}{3}}\cdot\log(\timeHorizon - \switchRound)^{\sfrac{1}{3}}}\right)^{\sfrac{1}{2}}$
    \end{itemize}
    \item 
    Group 2 includes all arms in $[\armNum]$ satisfying  $\Deltaf{i}\geq\left(\frac{\armNum^{\sfrac{2}{3}}\cdot\log(\switchRound)}{(\timeHorizon - \switchRound)^{\sfrac{2}{3}}\cdot\log(\timeHorizon - \switchRound)^{\sfrac{1}{3}}}\right)^{\sfrac{1}{2}}$
\end{itemize}
Notice that $\sum\nolimits_{i\in \groupone}\mathbb{E}[n_i(\switchRound)]\leq \switchRound$, then the above,
\begin{align}
    \notag
    &\leq O\left(\sum\nolimits_{i\in \grouponea}\Deltaf{i}\expect{n_i(\switchRound)}+\sum\nolimits_{i\in \grouponeb}\frac{\log(\switchRound)}{\Deltaf{i}}+\sum\nolimits_{i\in \grouptwo}\Deltaf{i}\frac{(\timeHorizon - \switchRound)^{\sfrac{2}{3}}\log(\timeHorizon - \switchRound)^{\sfrac{1}{3}}}{\armNum^{\sfrac{2}{3}}}\right.\\
    \notag
    &\left.\quad+(\timeHorizon - \switchRound)\left(\frac{\armNum\log(\timeHorizon - \switchRound)}{\timeHorizon - \switchRound}\right)^{\sfrac{1}{3}}\right)\\
    \notag
    &\overset{(a)}{\leq} O\left(\armNum\frac{(\timeHorizon - \switchRound)^{4/3}\log(\timeHorizon - \switchRound)^{\sfrac{2}{3}}/\armNum^{\sfrac{1}{3}}}{\armNum^{\sfrac{1}{3}}(\timeHorizon - \switchRound)^{\sfrac{2}{3}}\log(\timeHorizon - \switchRound)^{\sfrac{1}{3}}}+(\timeHorizon - \switchRound)\left(\frac{\armNum\log(\timeHorizon - \switchRound)}{\timeHorizon - \switchRound}\right)^{\sfrac{1}{3}}\right)\\
    &\leq O\left(\armNum^{\sfrac{1}{3}} (\timeHorizon - \switchRound)^{\sfrac{2}{3}}\log(\timeHorizon - \switchRound)^{\sfrac{1}{3}}\right),
    \label{eqn:regret_instance_independent2}
\end{align}
where inequality $(a)$ utilizes the fact that $(\timeHorizon - \switchRound)\geq \armNum^{\sfrac{1}{4}}\switchRound^{\sfrac{3}{4}}\log(\switchRound)^{\sfrac{3}{4}}/\log(\timeHorizon - \switchRound)^{\sfrac{1}{2}}$ which is equivalent to $\switchRound\log(\switchRound)\leq(\timeHorizon - \switchRound)^{4/3}\log(\timeHorizon - \switchRound)^{\sfrac{2}{3}}/\armNum^{\sfrac{1}{3}}$.

\xhdr{Region III}
When $\sqrt{\armNum\switchRound\log(\switchRound)}\leq (\timeHorizon - \switchRound)< \armNum^{\sfrac{1}{4}}\switchRound^{\sfrac{3}{4}}\log(\switchRound)^{\sfrac{3}{4}}/\log(\timeHorizon - \switchRound)^{\sfrac{1}{2}}$ (which implies $\switchRound=\timeHorizon-o(\timeHorizon)$), 
the condition can be equivalently written as $\sqrt{\armNum\timeHorizon\log(\timeHorizon)}\leq (\timeHorizon - \switchRound)< \armNum^{\sfrac{1}{4}}\timeHorizon^{\sfrac{3}{4}}\log(\timeHorizon)^{\sfrac{3}{4}}/\log(\timeHorizon - \switchRound)^{\sfrac{1}{2}}$.
We have $\stopCriteria=\sqrt{\frac{\armNum\timeHorizon\log(\timeHorizon)}{(\timeHorizon - \switchRound)^2}}$, then
\begin{align*}
\eqref{eqn:regret}&\leq O\left(\sum\nolimits_{i\in\mathcal{S}_{\rewFn}}\Deltaf{i}\cdot\max\left\{\frac{(\timeHorizon - \switchRound)^2\log(\timeHorizon - \switchRound)}{\armNum\timeHorizon\log(\timeHorizon)},\frac{\log(\timeHorizon)}{\Deltaf{i}^2}\right\}+(\timeHorizon - \switchRound)\cdot\sqrt{\frac{\armNum\timeHorizon\log(\timeHorizon)}{(\timeHorizon - \switchRound)^2}}\right)
\end{align*}
Again, we divide arms $[\armNum]$ into two groups:
\begin{itemize}
    \item 
    Group 1 includes all arms in $[\armNum]$ satisfying $\Deltaf{i}<\left(\frac{\armNum\timeHorizon(\log\timeHorizon)^2}{(\timeHorizon - \switchRound)^2\cdot\log(\timeHorizon - \switchRound)}\right)^{\sfrac{1}{2}}$.
    We further divide all arms in \groupone\ as following two subgroups:
    \begin{itemize}
        \item 
        Group 1a includes all arms in Group 1 satisfying in $\Deltaf{i}<\sqrt{\frac{\armNum\log(\timeHorizon)}{\timeHorizon}}$;
        \item 
        Group 1b includes all arms in Group 1 satisfying $\sqrt{\frac{\armNum\log(\timeHorizon)}{\timeHorizon}}\leq\Deltaf{i}<\left(\frac{\armNum\timeHorizon(\log\timeHorizon)^2}{(\timeHorizon - \switchRound)^2\cdot\log(\timeHorizon - \switchRound)}\right)^{\sfrac{1}{2}}$
    \end{itemize}
    \item 
    Group 2 includes all arms in $[\armNum]$ satisfying $\Deltaf{i}\geq\left(\frac{\armNum\timeHorizon(\log\timeHorizon)^2}{(\timeHorizon - \switchRound)^2\cdot\log(\timeHorizon - \switchRound)}\right)^{\sfrac{1}{2}}$
\end{itemize} 
Notice that $\sum\nolimits_{i\in \grouponea}\mathbb{E}[n_i(\switchRound)]\leq \switchRound$, then the above,
\begin{align}
\label{eqn:regret_instance_independent3}
    \notag
    &\leq O\left(\sum\nolimits_{i\in \grouponea}\Deltaf{i}\cdot\mathbb{E}[n_i(\switchRound)]+\sum\nolimits_{i\in \grouponeb}\Deltaf{i}\cdot\frac{\log(\switchRound)}{\Deltaf{i}^2}+ \right.\\
    & \qquad \left.\sum\nolimits_{i\in \grouptwo}\Deltaf{i}\cdot\frac{(\timeHorizon - \switchRound)^2\cdot\log(\timeHorizon - \switchRound)}{\armNum\timeHorizon\log(\timeHorizon)}+\sqrt{\armNum\timeHorizon\log(\timeHorizon)}\right) \nonumber\\
    &\leq O\left(\sqrt{\frac{\armNum\log(\timeHorizon)}{\timeHorizon}} \switchRound+\armNum\sqrt{\frac{\timeHorizon\log(\timeHorizon)}{\armNum}}+\frac{(\timeHorizon - \switchRound)^2\log(\timeHorizon - \switchRound)}{\timeHorizon\log(\timeHorizon)}+\sqrt{\armNum\timeHorizon\log(\timeHorizon)}\right)
    \leq O\left(\sqrt{\armNum\timeHorizon\log(\timeHorizon)}\right).
\end{align}
When $\timeHorizon - \switchRound<\sqrt{\armNum\timeHorizon\log(\timeHorizon)}$, we have $\stopCriteria\geq1$, that is, the algorithm does not explore for the optimal arm in the commitment phase at all.
It is easy to check that
\begin{equation}
\label{eqn:regret_instance_independent4}
\eqref{eqn:regret}\leq O\left(\sqrt{\armNum\timeHorizon\log(\timeHorizon)}\right).
\end{equation}

Combining \eqref{eqn:regret_instance_independent1}, \eqref{eqn:regret_instance_independent2}, \eqref{eqn:regret_instance_independent3}, and \eqref{eqn:regret_instance_independent4}, the instance-independent upper bound has form 
\begin{equation}
\label{eqn:regret_up_bnd_independent}
  O\left(\sqrt{\frac{\armNum(\timeHorizon - \switchRound)^{\sfrac{2}{3}}\log(\timeHorizon - \switchRound)\cdot\max\left\{(\timeHorizon - \switchRound)^{4/3},\armNum^{\sfrac{1}{3}}\switchRound\log(\switchRound)/\log(\timeHorizon - \switchRound)^{\sfrac{2}{3}}\right\}}{\min\left\{\switchRound, \armNum^{\sfrac{1}{3}}(\timeHorizon - \switchRound)^{\sfrac{2}{3}}\log(\timeHorizon - \switchRound)^{\sfrac{1}{3}}\right\}}}\right) 
\end{equation}
Drop the logarithmic terms in \eqref{eqn:regret_up_bnd_independent} and write it in the notation $\tildeO$ would give us the desired bound.
We completed the proof.
\end{proof}

\section{Missing Proofs in \texorpdfstring{\Cref{sec:lower bound}}{}}
\label{apx:proofs lower bound}

\thminsd*
\begin{proof}[Proof for \Cref{thm:instance dependent lower bound}]

\ppedit{For the outcome-observation model, consider an instance $\instance^{\outcome}=(\outcomeDist_j)_{j=1}^{\armNum}\in\mathcal{E}_{\outcome}$, where pulling arm $j$ reveals an outcome realization drawn from $\outcomeDist_j$. Let $T_i(\switchRound)$ denote the number of times arm $i$ is pulled during the experiment phase. We first derive the information requirement associated with the experiment objective.}

\ppedit{Fix an arm $i\in\mathcal{S}_{\rewFn}$ and an arbitrary $\varepsilon>0$. By the definition of $d^{\outcome}_{\rewFn,i}$, there exists $\outcomeDist_i'\in\mathcal{V}$ such that}
\begin{align*}
    &\ppedit{D(\outcomeDist_i,\outcomeDist_i')\leq d^{\outcome}_{\rewFn,i}+\varepsilon
    \qquad\text{and}\qquad
    \mu'_{\rewFn,i}\equiv\expect[\outcome\sim\outcomeDist_i']{\rewFn(\outcome)}>\mu_{\rewFn}^*.}
\end{align*}
\ppedit{Define the alternative instance $\instance^{\outcome\prime}=(\outcomeDist_j')_{j=1}^{\armNum}$ by setting $\outcomeDist_j'=\outcomeDist_j$ for $j\neq i$. Notice that changing $\outcomeDist_i$ may change both induced $\rewFn$ and $\rewFnCommit$ reward distributions of arm $i$. This does not affect the argument because the divergence is calculated using the full outcome distribution observed by the policy. By the divergence decomposition theorem,}
\begin{align}
    \ppedit{D(\mathbb{P}_{\instance^{\outcome}},\mathbb{P}_{\instance^{\outcome\prime}})
    =\mathbb{E}_{\instance^{\outcome}}[T_i(\switchRound)]D(\outcomeDist_i,\outcomeDist_i')
    \leq\mathbb{E}_{\instance^{\outcome}}[T_i(\switchRound)](d^{\outcome}_{\rewFn,i}+\varepsilon),}
    \label{eq:change-measure-outcome-f}
\end{align}
\ppedit{where $\mathbb{P}_{\instance^{\outcome}}$ and $\mathbb{P}_{\instance^{\outcome\prime}}$ are the likelihoods of the observed outcomes and actions through the end of the experiment phase.}
\ppedit{Take the event $A_i=\{T_i(\switchRound)\geq\switchRound/2\}$. Under $\instance^{\outcome}$, every pull of arm $i$ incurs experiment-phase regret $\Deltaf{i}$. Under $\instance^{\outcome\prime}$, arm $i$ is optimal with respect to $\rewFn$, and every pull of another arm incurs regret at least $\mu'_{\rewFn,i}-\mu_{\rewFn}^*$. Therefore,}
\begin{align*}
    &\ppedit{\Reg[\textexp]{\switchRound\mid\instance^{\outcome}}
    +\Reg[\textexp]{\switchRound\mid\instance^{\outcome\prime}}}\\
    &\quad\ppedit{\geq
    \frac{\switchRound}{2}
    \min\left\{\Deltaf{i},\mu'_{\rewFn,i}-\mu_{\rewFn}^*\right\}
    \left(\mathbb{P}_{\instance^{\outcome}}(A_i)+\mathbb{P}_{\instance^{\outcome\prime}}(A_i^c)\right)}\\
    &\quad\ppedit{\geq
    \frac{\switchRound}{4}
    \min\left\{\Deltaf{i},\mu'_{\rewFn,i}-\mu_{\rewFn}^*\right\}
    \exp\left(-\mathbb{E}_{\instance^{\outcome}}[T_i(\switchRound)](d^{\outcome}_{\rewFn,i}+\varepsilon)\right),}
\end{align*}
\ppedit{where the last inequality follows from the Bretagnolle--Huber inequality and \eqref{eq:change-measure-outcome-f}. Taking logarithms and using consistency over $\mathcal{E}_{\outcome}$ gives}
\begin{align*}
    &\ppedit{\liminf_{\timeHorizon-\switchRound\to\infty}
    \frac{\mathbb{E}_{\instance^{\outcome}}[T_i(\switchRound)]}{\log(\switchRound)}
    \geq\frac{1}{d^{\outcome}_{\rewFn,i}+\varepsilon}.}
\end{align*}
\ppedit{Since $\varepsilon>0$ is arbitrary, for every $i\in\mathcal{S}_{\rewFn}$,}
\begin{align}
    &\ppedit{\liminf_{\timeHorizon-\switchRound\to\infty}
    \frac{\mathbb{E}_{\instance^{\outcome}}[T_i(\switchRound)]}{\log(\switchRound)}
    \geq\frac{1}{d^{\outcome}_{\rewFn,i}}.}
    \label{eq:sample-lb-outcome-f}
\end{align}

\ppedit{We next derive the information requirement associated with the commitment objective. Fix an arm $i\in\mathcal{S}_{\rewFnCommit}$ and an arbitrary $\varepsilon>0$. By the definition of $d^{\outcome}_{\rewFnCommit,i}$, choose a fresh alternative $\outcomeDist_i'\in\mathcal{V}$ such that}
\begin{align*}
    &\ppedit{D(\outcomeDist_i,\outcomeDist_i')\leq d^{\outcome}_{\rewFnCommit,i}+\varepsilon
    \qquad\text{and}\qquad
    \mu'_{\rewFnCommit,i}\equiv\expect[\outcome\sim\outcomeDist_i']{\rewFnCommit(\outcome)}>\mu_{\rewFnCommit}^*.}
\end{align*}
\ppedit{Again define $\instance^{\outcome\prime}$ by changing only the outcome distribution of arm $i$. The divergence decomposition theorem gives}
\begin{align}
    &\ppedit{D(\mathbb{P}_{\instance^{\outcome}},\mathbb{P}_{\instance^{\outcome\prime}})
    \leq\mathbb{E}_{\instance^{\outcome}}[T_i(\switchRound)](d^{\outcome}_{\rewFnCommit,i}+\varepsilon).}
    \label{eq:change-measure-outcome-g}
\end{align}
\ppedit{Let $B_i=\{\commitPolicy=i\}$. Under $\instance^{\outcome}$, event $B_i$ incurs commitment regret $(\timeHorizon-\switchRound)\Deltag{i}$. Under $\instance^{\outcome\prime}$, event $B_i^c$ incurs commitment regret at least $(\timeHorizon-\switchRound)(\mu'_{\rewFnCommit,i}-\mu_{\rewFnCommit}^*)$. Thus,}
\begin{align*}
    &\ppedit{\Reg[\textcom]{\timeHorizon-\switchRound\mid\instance^{\outcome}}
    +\Reg[\textcom]{\timeHorizon-\switchRound\mid\instance^{\outcome\prime}}}\\
    &\quad\ppedit{\geq
    \frac{\timeHorizon-\switchRound}{2}
    \min\left\{\Deltag{i},\mu'_{\rewFnCommit,i}-\mu_{\rewFnCommit}^*\right\}
    \exp\left(-\mathbb{E}_{\instance^{\outcome}}[T_i(\switchRound)](d^{\outcome}_{\rewFnCommit,i}+\varepsilon)\right).}
\end{align*}
\ppedit{Taking logarithms, using consistency over $\mathcal{E}_{\outcome}$, and then letting $\varepsilon\downarrow0$ gives, for every $i\in\mathcal{S}_{\rewFnCommit}$,}
\begin{align}
    &\ppedit{\liminf_{\timeHorizon-\switchRound\to\infty}
    \frac{\mathbb{E}_{\instance^{\outcome}}[T_i(\switchRound)]}{\log(\timeHorizon-\switchRound)}
    \geq\frac{1}{d^{\outcome}_{\rewFnCommit,i}}.}
    \label{eq:sample-lb-outcome-g}
\end{align}

\ppedit{Combining \eqref{eq:sample-lb-outcome-f} and \eqref{eq:sample-lb-outcome-g}, and using the convention $d^{\outcome}_{\rewFnCommit,i}=\infty$ for $i\notin\mathcal{S}_{\rewFnCommit}$, we obtain, for every fixed $\eta>0$ and all sufficiently large $\timeHorizon-\switchRound$,}
\begin{align*}
    &\ppedit{\mathbb{E}_{\instance^{\outcome}}[T_i(\switchRound)]
    \geq(1-\eta)\max\left\{
    \frac{\log(\switchRound)}{d^{\outcome}_{\rewFn,i}},
    \frac{\log(\timeHorizon-\switchRound)}{d^{\outcome}_{\rewFnCommit,i}}
    \right\},
    \qquad i\in\mathcal{S}_{\rewFn}.}
\end{align*}
\ppedit{Finally, the total regret is at least the experiment-phase regret. Therefore,}
\begin{align*}
    &\ppedit{\Reg[]{\switchRound,\timeHorizon\mid\instance^{\outcome}}
    \geq\Reg[\textexp]{\switchRound\mid\instance^{\outcome}}
    =\sum\nolimits_{i\in\mathcal{S}_{\rewFn}}
    \mathbb{E}_{\instance^{\outcome}}[T_i(\switchRound)]\Deltaf{i}}\\
    &\quad\ppedit{\geq(1-o(1))
    \sum\nolimits_{i\in\mathcal{S}_{\rewFn}}
    \max\left\{
    \frac{\log(\switchRound)}{d^{\outcome}_{\rewFn,i}},
    \frac{\log(\timeHorizon-\switchRound)}{d^{\outcome}_{\rewFnCommit,i}}
    \right\}\Deltaf{i}.}
\end{align*}
\ppedit{This proves the first statement of \Cref{thm:instance dependent lower bound}.}

\ppedit{We now consider the independent-signal model defined in \Cref{subsec:additional discussion}. In this model, pulling arm $i$ produces two independent observations drawn from $V_{\rewFn,i}\times V_{\rewFnCommit,i}$. Consequently, changing only $V_{\rewFn,i}$ gives joint divergence $D(V_{\rewFn,i},V'_{\rewFn,i})$, and changing only $V_{\rewFnCommit,i}$ gives joint divergence $D(V_{\rewFnCommit,i},V'_{\rewFnCommit,i})$. Consider an instance $\instance=(V_{\rewFn,j},V_{\rewFnCommit,j})_{j=1}^{\armNum}\in\mathcal{E}_{\mathrm{ind}}$, and let $T_i(\switchRound)$ denote the number of times arm $i$ is pulled during the experiment phase.}
We establish one information requirement for every $i\in\mathcal{S}_{\rewFn}$ and a second information requirement for every $i\in\mathcal{S}_{\rewFnCommit}$, and then combine the two requirements only on their common domain.

\paragraph{Information required by the experiment objective.}
\ppedit{
Let $\mu_{\rewFn,i}=\mu(V_{\rewFn,i})$ and $\mu_{\rewFn}^*=\max_{j\in[\armNum]}\mu_{\rewFn,j}$.
Fix an arm $i\in\mathcal{S}_{\rewFn}$ and an arbitrary $\varepsilon>0$.
By the definition of $d_{\rewFn,i}$, there exists $V'_{\rewFn,i}\in\mathcal{V}_{\rewFn,i}$ such that
\begin{align*}
    D(V_{\rewFn,i},V'_{\rewFn,i})\leq d_{\rewFn,i}+\varepsilon
    \qquad\text{and}\qquad
    \mu(V'_{\rewFn,i})>\mu_{\rewFn}^*.
\end{align*}
Define the alternative instance
$\instance'=(V'_{\rewFn,j},V_{\rewFnCommit,j})_{j=1}^{\armNum}$,
where $V'_{\rewFn,j}=V_{\rewFn,j}$ for $j\neq i$.
Thus, the two instances differ only in the experiment-phase distribution of arm $i$; in particular, all commitment-phase distributions are unchanged.
}
\ppedit{
The divergence decomposition theorem gives
\begin{align}
    D(\mathbb{P}_{\instance},\mathbb{P}_{\instance'})
    &=\mathbb{E}_{\instance}[T_i(\switchRound)]D(V_{\rewFn,i},V'_{\rewFn,i})
    \leq \mathbb{E}_{\instance}[T_i(\switchRound)](d_{\rewFn,i}+\varepsilon),
    \label{eq:change-measure-f}
\end{align}
where $\mathbb{P}_{\instance}$ and $\mathbb{P}_{\instance'}$ are the laws of the observations and actions through the end of the experiment phase.
}

\ppedit{
Take the event $A_i=\{T_i(\switchRound)\geq\switchRound/2\}$.
Under $\instance$, every pull of arm $i$ incurs experiment-phase regret $\Deltaf{i}$.
Under $\instance'$, arm $i$ is optimal, and every pull of another arm incurs regret at least $\mu(V'_{\rewFn,i})-\mu_{\rewFn}^*$.
Consequently,
\begin{align*}
    &\Reg[\textexp]{\switchRound\mid\instance}
    +\Reg[\textexp]{\switchRound\mid\instance'}\\
    &\quad\geq
    \frac{\switchRound}{2}
    \min\left\{\Deltaf{i},\mu(V'_{\rewFn,i})-\mu_{\rewFn}^*\right\}
    \left(\mathbb{P}_{\instance}(A_i)+\mathbb{P}_{\instance'}(A_i^c)\right)\\
    &\quad\geq
    \frac{\switchRound}{4}
    \min\left\{\Deltaf{i},\mu(V'_{\rewFn,i})-\mu_{\rewFn}^*\right\}
    \exp\left(-\mathbb{E}_{\instance}[T_i(\switchRound)](d_{\rewFn,i}+\varepsilon)\right),
\end{align*}
where the last inequality follows from the Bretagnolle--Huber inequality and \eqref{eq:change-measure-f}.
}
After taking logarithms and using consistency of the policy, the sum of the two experiment-phase regrets contributes only $o(\log(\switchRound))$ to the logarithmic comparison.
Hence,
\begin{align*}
    \liminf_{\timeHorizon-\switchRound\to\infty}
    \frac{\mathbb{E}_{\instance}[T_i(\switchRound)]}{\log(\switchRound)}
    \geq \frac{1}{d_{\rewFn,i}+\varepsilon}.
\end{align*}
Because $\varepsilon>0$ is arbitrary, we obtain, for every $i\in\mathcal{S}_{\rewFn}$,
\begin{align}
    \liminf_{\timeHorizon-\switchRound\to\infty}
    \frac{\mathbb{E}_{\instance}[T_i(\switchRound)]}{\log(\switchRound)}
    \geq \frac{1}{d_{\rewFn,i}}.
    \label{eq:sample-lb-f}
\end{align}
Here we use the standing assumption $\switchRound\geq\Omega(\cc{poly}(\timeHorizon))$, so $\switchRound\to\infty$ whenever $\timeHorizon-\switchRound\to\infty$.

\paragraph{Information required by the commitment objective.}
\ppedit{
Now let $\mu_{\rewFnCommit,i}=\mu(V_{\rewFnCommit,i})$ and $\mu_{\rewFnCommit}^*=\max_{j\in[\armNum]}\mu_{\rewFnCommit,j}$.
Fix an arm $i\in\mathcal{S}_{\rewFnCommit}$ and an arbitrary $\varepsilon>0$.
Choose $V'_{\rewFnCommit,i}\in\mathcal{V}_{\rewFnCommit,i}$ such that $D(V_{\rewFnCommit,i},V'_{\rewFnCommit,i})\leq d_{\rewFnCommit,i}+\varepsilon$ and $\mu(V'_{\rewFnCommit,i})>\mu_{\rewFnCommit}^*$.
Define $\instance'=(V_{\rewFn,j},V'_{\rewFnCommit,j})_{j=1}^{\armNum}$, where $V'_{\rewFnCommit,j}=V_{\rewFnCommit,j}$ for $j\neq i$.
This time the experiment-phase distributions are unchanged, and divergence decomposition yields
\begin{align}
    D(\mathbb{P}_{\instance},\mathbb{P}_{\instance'})
    \leq \mathbb{E}_{\instance}[T_i(\switchRound)](d_{\rewFnCommit,i}+\varepsilon).
    \label{eq:change-measure-g}
\end{align}
}
\ppedit{
Let $B_i=\{\commitPolicy=i\}$, where $\commitPolicy$ is the arm selected for the commitment phase.
Under $\instance$, event $B_i$ incurs commitment regret $(\timeHorizon-\switchRound)\Deltag{i}$.
Under $\instance'$, event $B_i^c$ incurs commitment regret at least
$(\timeHorizon-\switchRound)(\mu(V'_{\rewFnCommit,i})-\mu_{\rewFnCommit}^*)$.
Therefore,
\begin{align*}
    &\Reg[\textcom]{\timeHorizon-\switchRound\mid\instance}
    +\Reg[\textcom]{\timeHorizon-\switchRound\mid\instance'}\\
    &\quad\geq
    \frac{\timeHorizon-\switchRound}{2}
    \min\left\{\Deltag{i},\mu(V'_{\rewFnCommit,i})-\mu_{\rewFnCommit}^*\right\}
    \exp\left(-\mathbb{E}_{\instance}[T_i(\switchRound)](d_{\rewFnCommit,i}+\varepsilon)\right),
\end{align*}
where we again use the Bretagnolle--Huber inequality, now together with \eqref{eq:change-measure-g}.
}
Taking logarithms, applying consistency to the two commitment-phase regrets, and then letting $\varepsilon\downarrow0$ gives, for every $i\in\mathcal{S}_{\rewFnCommit}$,
\begin{align}
    \liminf_{\timeHorizon-\switchRound\to\infty}
    \frac{\mathbb{E}_{\instance}[T_i(\switchRound)]}{\log(\timeHorizon-\switchRound)}
    \geq \frac{1}{d_{\rewFnCommit,i}}.
    \label{eq:sample-lb-g}
\end{align}

\paragraph{Combining the two requirements.}
Equation \eqref{eq:sample-lb-f} applies to $i\in\mathcal{S}_{\rewFn}$, whereas \eqref{eq:sample-lb-g} applies to $i\in\mathcal{S}_{\rewFnCommit}$.
For $i\in\mathcal{S}_{\rewFn}\cap\mathcal{S}_{\rewFnCommit}$, both requirements hold simultaneously, so their maximum is a valid lower bound on $\mathbb{E}_{\instance}[T_i(\switchRound)]$.
For $i\in\mathcal{S}_{\rewFn}\setminus\mathcal{S}_{\rewFnCommit}$, only the experiment information requirement applies; under the convention $d_{\rewFnCommit,i}=\infty$, the commitment-information term is zero and the same maximum notation gives exactly that requirement.
Consequently, for every fixed $\eta>0$ and all sufficiently large $\timeHorizon-\switchRound$,
\begin{align*}
    \mathbb{E}_{\instance}[T_i(\switchRound)]
    \geq (1-\eta)\max\left\{
    \frac{\log(\switchRound)}{d_{\rewFn,i}},
    \frac{\log(\timeHorizon-\switchRound)}{d_{\rewFnCommit,i}}
    \right\},
    \qquad i\in\mathcal{S}_{\rewFn}.
\end{align*}
Finally, total regret is at least the experiment-phase regret, so
\begin{align*}
    \Reg[]{\switchRound,\timeHorizon\mid\instance}
    &\geq \Reg[\textexp]{\switchRound\mid\instance}
    =\sum\nolimits_{i\in\mathcal{S}_{\rewFn}}
    \mathbb{E}_{\instance}[T_i(\switchRound)]\Deltaf{i}\\
    &\geq (1-o(1))
    \sum\nolimits_{i\in\mathcal{S}_{\rewFn}}
    \max\left\{
    \frac{\log(\switchRound)}{d_{\rewFn,i}},
    \frac{\log(\timeHorizon-\switchRound)}{d_{\rewFnCommit,i}}
    \right\}\Deltaf{i}.
\end{align*}
Arms in $\mathcal{S}_{\rewFnCommit}\setminus\mathcal{S}_{\rewFn}$ do not appear because $\Deltaf{i}=0$ for those arms; sampling them during the experiment phase incurs no experiment regret.
Thus, the infinity convention changes only the final notation.
\end{proof}

\thminsind*
\begin{proof}[Proof for \Cref{thm:instance independent lower bound}]
\ppedit{We prove the lower bound on the fixed class $\mathcal{E}_{\mathrm{coord}}$ in \Cref{thm:instance independent lower bound}. Thus, every outcome takes form $o=(x,y)$, the known reward functions are fixed as $\rewFn(o)=x$ and $\rewFnCommit(o)=y$, and the two coordinates are independent Bernoulli random variables. Observing $o$ is therefore equivalent to observing the pair of independent signals $(\rewFn(o),\rewFnCommit(o))$, and the KL divergence between two outcome distributions equals the KL divergence between the corresponding signal pairs. The mean values below are written after subtracting a common interior baseline. Adding the same baseline to every arm leaves all gaps and regrets unchanged and ensures that every Bernoulli mean remains in the fixed interval defining $\mathcal{E}_{\mathrm{coord}}$. Since the supremum in the theorem is taken over this fixed class, it suffices to establish the lower bound for the following hard instances.}
\ppedit{Within this common subclass, we consider an instance where} in the \textit{experiment phase}, the mean reward vector has form $\mu_{\rewFn}=\left(\alpha+\delta_{\rewFn},\alpha,\ldots,\alpha,0,\ldots,0\right)$ where the last $\armNum/2$ arms have zero mean, and in the \textit{commitment phase}, the mean reward vector has form $\mu_{\rewFnCommit}=\left(0,\ldots,0,\delta_{\rewFnCommit},0,\ldots,0\right)$ where $\delta_{\rewFnCommit}$ is on the $k$th arm's position and $k\equiv \armNum/2+1$, $\alpha$ is some positive constant.
We group the first $\armNum/2$ arms as arm group $A$ and the other half of the arms as arm group $B$.
Given policy $\pi$, let $\mathbb{E}_{\instance}\left[T_A\right]$ and $\mathbb{E}_{\instance}\left[T_B\right]$ be the expected number of pulls distributed to group $A$ and $B$, respectively.
Clearly, $\mathbb{E}_{\instance}\left[T_A\right]+\mathbb{E}_{\instance}\left[T_B\right]=\switchRound$.
Let $\mathbb{E}_{\instance}\left[T_l\right]$ be policy $\pi$'s expected number of pulls of arm $l$.
Then, we define $i^\dagger=\argmin_{l\in A\setminus\{1\}}\mathbb{E}_{\instance}\left[T_l\right]$ and $j^\dagger=\argmin_{l\in B\setminus\{k\}}\mathbb{E}_{\instance}\left[T_l\right]$.
Clearly, $\mathbb{E}_{\instance}\left[T_{i^\dagger}\right]\leq\frac{\mathbb{E}_{\instance}\left[T_A\right]}{\frac{\armNum}{2}-1}$ and $\mathbb{E}_{\instance}\left[T_{j^\dagger}\right]\leq\frac{\mathbb{E}_{\instance}\left[T_B\right]}{\frac{\armNum}{2}-1}$.
Now, we construct an alternative instance with $\mu_{\rewFn}^{\dagger}=\left(\alpha+\delta_{\rewFn},\alpha,\ldots,\alpha+2\delta_{\rewFn},\ldots,\alpha,0,\ldots,0\right)$ where arm $i^\dagger$'s position has value $\alpha+2\delta_{\rewFn}$, and $\mu_{\rewFnCommit}^{\dagger}=\left(0,\ldots,0,\delta_{\rewFnCommit},0,\ldots,2\delta_{\rewFnCommit},\ldots,0\right)$ where arm $j^\dagger$'s position has value $2\delta_{\rewFnCommit}$.
Parameters $\alpha, \delta_{\rewFn}$ and $\delta_{\rewFnCommit}$ will be specified shortly.
{\allowdisplaybreaks
\begin{align}
    \notag
    & \Reg[\pi]{\switchRound, \timeHorizon\mid \instance}+ \Reg[\pi]{\switchRound, \timeHorizon\mid \instance^\dagger}\\
    \notag
    \geq~ & \mathbb{E}_{\instance}\left[T_B\right]\cdot\alpha+\mathbb{E}_{\instance^\dagger}\left[T_B\right]\cdot\alpha+\myprob[\instance]{T_1\leq\frac{\switchRound}{2}}\cdot\frac{\switchRound\cdot\delta_{\rewFn}}{2}+\myprob[\instance^\dagger]{T_1>\frac{\switchRound}{2}}\cdot\frac{\switchRound\cdot\delta_{\rewFn}}{2}\\
    \notag
    & +\myprob[\instance]{\commitPolicy\neq k}\cdot(\timeHorizon - \switchRound)\cdot\delta_{\rewFnCommit}+\myprob[\instance^\dagger]{\commitPolicy=k}\cdot(\timeHorizon - \switchRound)\cdot\delta_{\rewFnCommit}\\
\notag
    \geq ~&
    \mathbb{E}_{\instance}\left[T_B\right]\cdot\alpha+\frac{\switchRound\cdot\delta_{\rewFn}}{2}\cdot\frac12\exp\left(-D\left(\mathbb{P}_{\instance},\mathbb{P}_{\instance^\dagger}\right)\right)+(\timeHorizon - \switchRound)\cdot\delta_{\rewFnCommit}\cdot\frac12\exp\left(-D\left(\mathbb{P}_{\instance},\mathbb{P}_{\instance^\dagger}\right)\right)\\
\notag
    = ~ &\mathbb{E}_{\instance}\left[T_B\right]\cdot\alpha+\left(\frac{\switchRound\cdot\delta_{\rewFn}}{2}+(\timeHorizon - \switchRound)\cdot\delta_{\rewFnCommit}\right)\cdot\frac12\exp\left(-\mathbb{E}_{\instance}\left[T_{i^\dagger}\right]\cdot D\left(\alpha,\alpha+2\delta_{\rewFn}\right)-\mathbb{E}_{\instance}\left[T_{j^\dagger}\right]\cdot D\left(0,2\delta_{\rewFnCommit}\right)\right)\\
    \sim ~ & \Omega\left(\mathbb{E}_{\instance}\left[T_B\right]+\left(\switchRound\cdot\delta_{\rewFn}+(\timeHorizon - \switchRound)\cdot\delta_{\rewFnCommit}\right)\cdot\exp\left(-\frac{\switchRound-\mathbb{E}_{\instance}\left[T_B\right]}{\armNum}\cdot(\delta_{\rewFn})^2-\frac{\mathbb{E}_{\instance}\left[T_B\right]}{\armNum}\cdot(\delta_{\rewFnCommit})^2\right)\right),
    \label{eqn:lower_bnd_minimax}
\end{align}}
where the second inequality again uses the \textit{Bretagnolle-Huber inequality}, and the last approximation utilizes the fact that for many common distributions like Gaussian, $D(a,b)\approx(a-b)^2$.
For some distributions like Bernoulli, we just need to raise $\mu_{\rewFn}$ and $\mu_{\rewFnCommit}$ up by a small constant such that they are bounded away from $0$ and $1$, and we will again get $D(a,b)\approx(a-b)^2$.
\ppedit{For \eqref{eqn:lower_bnd_minimax}, we now keep $\mathbb{E}_{\instance}\left[T_B\right]\in[0,\switchRound]$ arbitrary and specify $\delta_{\rewFn}$ and $\delta_{\rewFnCommit}$ to derive lower bounds that hold uniformly for every policy.}

\ppedit{First, let $\delta_{\rewFn}=0$ and $\delta_{\rewFnCommit}=\sqrt{\frac{\armNum}{\switchRound}}$. Since $\mathbb{E}_{\instance}\left[T_B\right]\leq\switchRound$, \eqref{eqn:lower_bnd_minimax} gives}
\begin{align*}
    \ppedit{\eqref{eqn:lower_bnd_minimax}}
    &\ppedit{\geq\Omega\left((\timeHorizon-\switchRound)\sqrt{\frac{\armNum}{\switchRound}}\cdot
    \exp\left(-\frac{\mathbb{E}_{\instance}\left[T_B\right]}{\switchRound}\right)\right)
    \geq\Omega\left(\sqrt{\frac{\armNum\cdot(\timeHorizon-\switchRound)^2}{\switchRound}}\right)~.}
\end{align*}
\ppedit{Next, let $\delta_{\rewFn}=0$ and $\delta_{\rewFnCommit}=\left(\frac{\armNum}{\timeHorizon-\switchRound}\right)^{1/3}$. Then \eqref{eqn:lower_bnd_minimax} gives}
\begin{align*}
    \ppedit{\eqref{eqn:lower_bnd_minimax}}
    &\ppedit{\geq\Omega\left(\mathbb{E}_{\instance}\left[T_B\right]
    +\armNum^{1/3}(\timeHorizon-\switchRound)^{2/3}\cdot\exp\left(-\frac{\mathbb{E}_{\instance}\left[T_B\right]}{\armNum^{1/3}(\timeHorizon-\switchRound)^{2/3}}\right)\right)~.}
\end{align*}
\ppedit{If $\mathbb{E}_{\instance}\left[T_B\right]\geq\armNum^{1/3}(\timeHorizon-\switchRound)^{2/3}$, the first term gives a lower bound of $\Omega\left(\armNum^{1/3}(\timeHorizon-\switchRound)^{2/3}\right)$. Otherwise, the exponential term is at least a positive constant and gives the same lower bound. Therefore, uniformly over $\mathbb{E}_{\instance}\left[T_B\right]$, we have $\eqref{eqn:lower_bnd_minimax}\geq\Omega\left(\armNum^{1/3}(\timeHorizon-\switchRound)^{2/3}\right)$.
}

\ppedit{Finally, let $\delta_{\rewFn}=\sqrt{\frac{\armNum}{\switchRound}}$ and $\delta_{\rewFnCommit}=0$. Since $\switchRound-\mathbb{E}_{\instance}\left[T_B\right]\leq\switchRound$, we have}
\begin{align*}
    \ppedit{\eqref{eqn:lower_bnd_minimax}}
    &\ppedit{\geq\Omega\left(\sqrt{\armNum\switchRound}\cdot
    \exp\left(-\frac{\switchRound-\mathbb{E}_{\instance}\left[T_B\right]}{\switchRound}\right)\right)\geq\Omega\left(\sqrt{\armNum\switchRound}\right)~.}
\end{align*}

\ppedit{
Therefore, we find that when $\switchRound<\armNum^{1/3}(\timeHorizon-\switchRound)^{2/3}$, the instance-independent lower bound is $\Omega\left(\sqrt{\frac{\armNum\cdot(\timeHorizon-\switchRound)^2}{\switchRound}}\right)$.
When $\switchRound\geq\armNum^{1/3}(\timeHorizon-\switchRound)^{2/3}$, taking the worse of the last two hard instances gives the instance-independent lower bound} 
\ppedit{$\Omega\left(\max\left\{\armNum^{1/2}\switchRound^{1/2},
    \armNum^{1/3}(\timeHorizon-\switchRound)^{2/3}\right\}\right)$.}
\ppedit{When $(\timeHorizon-\switchRound)\geq\armNum^{1/4}\switchRound^{3/4}$, the second term in the maximum dominates; otherwise, the first term dominates.}

\ppedit{In summary, the instance-independent lower bound has form $\Omega\left(\sqrt{\frac{\armNum(\timeHorizon - \switchRound)^{2/3}\cdot\max\left\{(\timeHorizon - \switchRound)^{4/3},\armNum^{1/3}\switchRound\right\}}{\min\left\{\switchRound, \armNum^{1/3}(\timeHorizon - \switchRound)^{2/3}\right\}}}\right)$.}
\end{proof}

\section{Missing Proofs in \texorpdfstring{\Cref{sec:extensions}}{}}
\label{apx:proofs extensions}

\subsection{Regret under a Perturbed Affine Relationship}
\label{subsec:perturb}

For exposition purposes, we first define the following target accuracy scales:
$
\stopCriteria_{\switchRound}
=
\sqrt{\frac{\armNum}{\switchRound}},
$
$
\stopCriteria_B
=
\left(\frac{\armNum}{\timeHorizon-\switchRound}\right)^{1/3},
$
$
\stopCriteria_{\timeHorizon}
=
\frac{\sqrt{\armNum\timeHorizon}}{\timeHorizon-\switchRound},
$
$
\stopCriteria_{\gradientUB}
=
\sqrt{\frac{\armNum}{\gradientUB(\timeHorizon-\switchRound)}},
$
$
\stopCriteria_{\noiseBound}
=
\left(\frac{\armNum\noiseBound}{\gradientUB(\timeHorizon-\switchRound)}\right)^{1/3},
$
$
\stopCriteria_S
=
\max\{\stopCriteria_{\gradientUB},\stopCriteria_{\noiseBound}\}.
$
Here, since we care about the worst-case regret, we suppress the logarithmic factors.
\(\stopCriteria_{\switchRound}\) is the target accuracy for the short-experiment regime, \(\stopCriteria_B\) is the target accuracy for the balanced regime, \(\stopCriteria_{\timeHorizon}\) is the target accuracy for the short-commitment regime,
\(\stopCriteria_{\gradientUB}\) is the target accuracy induced by the ranking-preserving
component \(\gradientUB\rewFn\), and \(\stopCriteria_{\noiseBound}\) is the target accuracy induced by the ranking-changing perturbation.
And, we regulate that for $\gradientUB=0$, $\stopCriteria_{\gradientUB}=\stopCriteria_{\noiseBound}=\infty$.
Notice that in the target accuracy in the basic {\RAEC} after suppressing the logarithmic factors is $\max\{\stopCriteria_{\switchRound},\stopCriteria_B,\stopCriteria_{\timeHorizon}\}$.

We now provide a full-spectrum analysis of {\RAEC} under the perturbed
affine relationship introduced in \Cref{def:shift structure}.

\begin{theorem}[Regret under a perturbed affine relationship]
\label{thm:full-spectrum-perturbed}
Let the target accuracy of {\RAEC} in the basic model in \Cref{sec:prelim} and the model with structure prior knowledge in \Cref{subsec:prior knowledge} be $\stopCriteria_{\mathrm{base}}$ and $\stopCriteria_{\mathrm{pk}}$, respectively.
Then, suppressing the logarithmic factors, we set 
$\stopCriteria_{\mathrm{base}}
=
\max\{\stopCriteria_{\switchRound}, \stopCriteria_B, \stopCriteria_{\timeHorizon}\},
\stopCriteria_{\mathrm{pk}}
=
\max\left\{
\stopCriteria_{\switchRound},\,
\min\{\stopCriteria_B, \stopCriteria_S\}
\right\}$, and the regret upper bound of {\RAEC} is given by
\begin{align*}
    \Reg{\switchRound,\timeHorizon}
    \le
    \tildeO\left(
    \min\left\{
    R_{\mathrm{base}},
    R_{\mathrm{cap}},
    R_{\mathrm{pk}}
    \right\}
    \right),
\end{align*}
where
$R_{\mathrm{base}}
=
\sqrt{
\frac{
\armNum(\timeHorizon-\switchRound)^{2/3}
\max\left\{(\timeHorizon-\switchRound)^{4/3},\armNum^{1/3}\switchRound\right\}
}{
\min\left\{\switchRound,\armNum^{1/3}(\timeHorizon-\switchRound)^{2/3}\right\}
}
},
R_{\mathrm{cap}} =
\sqrt{\armNum\switchRound}
+
(\timeHorizon-\switchRound)\min\{\gradientUB+\noiseBound,1\},
R_{\mathrm{pk}}
=
\sqrt{\armNum\switchRound}
+(\timeHorizon-\switchRound)\cdot\stopCriteria_{\mathrm{pk}}$.
\end{theorem}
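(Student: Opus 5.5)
Since the bound is a minimum of three terms, I will bound the regret of {\RAEC} by each of $R_{\mathrm{base}}$, $R_{\mathrm{cap}}$ and $R_{\mathrm{pk}}$ separately, for a suitably chosen target accuracy. All three bounds start from \Cref{prop:instance-dependent ub}, i.e., the decomposition $\Rega{\switchRound}+\Regb{\switchRound}+\Regc{\switchRound,\timeHorizon}$. The $R_{\mathrm{base}}$ bound needs nothing new: with $\stopCriteria=\stopCriteria_{\mathrm{base}}$ this is exactly \Cref{thm:upper bound}, and the perturbed affine structure is never used. For $R_{\mathrm{cap}}$ I take $\stopCriteria\geq1$, so Stage I is skipped and the experiment phase is plain arm elimination for $\rewFn$. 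By \Cref{lem:regret1} and the standard gap-splitting argument (the Group~1/Group~2 split used in the proof of \Cref{thm:upper bound}), its regret is $\tildeO(\sqrt{\armNum\switchRound})$.

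For the commitment cost in $R_{\mathrm{cap}}$, I use \Cref{def:shift structure}. It gives $\mu_{\rewFnCommit,i}=\gradientUB\mu_{\rewFn,i}+\bar\noisefun_i$ with $\bar\noisefun_i\in[-\noiseBound/2,\noiseBound/2]$. Hence any committed arm $j$ satisfies
\[
\Deltag{j}\le \gradientUB(\mu_{\rewFn,\optg}-\mu_{\rewFn,j})+\noiseBound\le\gradientUB+\noiseBound,
\]
and trivially $\Deltag{j}\le1$. This gives the term $(\timeHorizon-\switchRound)\min\{\gradientUB+\noiseBound,1\}$.

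The core is $R_{\mathrm{pk}}$. Here the plan is to exploit the same identity more finely: $\Deltag{j}\le\gradientUB\Deltaf{j}+\noiseBound$. I will redo the analysis of \Cref{lem:regret2,lem:regret3} with $\stopCriteria=\stopCriteria_{\mathrm{pk}}$, and split into two sub-cases according to which term attains $\stopCriteria_S$.

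In the first sub-case, $\stopCriteria_{\gradientUB}$ dominates, which is essentially the case $\noiseBound$ small. Stage I only needs to discard arms with $\Deltag{i}\gtrsim\stopCriteria$, and these are arms with $\Deltaf{i}\gtrsim(\stopCriteria-\noiseBound)/\gradientUB$. For such an arm, the extra pulls forced by Stage I beyond what Stage II needs are about $\log/\stopCriteria^2$. The regret they cost is at most $\Deltaf{i}\log/\stopCriteria^2\lesssim(\gradientUB^{-1}\Deltag{i}+\cdots)/\stopCriteria^2$. Balancing the total $\armNum/(\gradientUB\stopCriteria)$ against $(\timeHorizon-\switchRound)\stopCriteria$ gives exactly $\stopCriteria_{\gradientUB}$. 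In the second sub-case, $\stopCriteria_{\noiseBound}$ dominates. Arms with small $\Deltaf{i}$ may still have $\Deltag{i}$ as large as $\noiseBound$. Their Stage-I cost is $\Deltaf{i}/\stopCriteria^2$, and the commitment-suboptimal ones satisfy $\Deltaf{i}\ge(\Deltag{i}-\noiseBound)/\gradientUB$. The per-arm worst case is bounded by $\noiseBound/(\gradientUB\stopCriteria^2)$, and balancing against $(\timeHorizon-\switchRound)\stopCriteria$ gives $\stopCriteria_{\noiseBound}$.

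The remaining pieces of $R_{\mathrm{pk}}$ come from existing results. \Cref{lem:regret3} gives a commitment cost of $(\timeHorizon-\switchRound)\cdot2\stopCriteria$, so the total is $\sqrt{\armNum\switchRound}+(\timeHorizon-\switchRound)\stopCriteria_{\mathrm{pk}}$. The floor $\stopCriteria_{\switchRound}$ is needed for feasibility, as in the remark after \Cref{prop:instance-dependent ub}. Taking the minimum with $\stopCriteria_B$ is handled by noting that the Stage-I cost is always at most $\armNum\log/\stopCriteria^2\cdot$(max gap). That is the unstructured balanced bound, so whichever of $\stopCriteria_B,\stopCriteria_S$ is smaller is admissible.

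The main obstacle is making the Stage-I cost bound $\sum_i\Deltaf{i}\log/\max\{\Deltag{i},\stopCriteria\}^2$ small via the affine link. The relation $\Deltag{i}\le\gradientUB\Deltaf{i}+\noiseBound$ runs the wrong way for lower-bounding $\Deltag{i}$, and $\Deltaf{i}$ could be large while $\Deltag{i}$ is tiny. I would therefore bound $\Deltaf{i}$ via the reverse inequality $\gradientUB\Deltaf{i}\le\Deltag{i}+\gradientUB\Deltaf{\optg}+\noiseBound$. I would also control the offset $\gradientUB\Deltaf{\optg}\le\noiseBound$, which follows because $\optg$ beats $\optf$ under $\rewFnCommit$. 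Together these give $\Deltaf{i}\le(\Deltag{i}+2\noiseBound)/\gradientUB$, which is what both sub-cases need.
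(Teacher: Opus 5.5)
Your proposal is correct and follows the paper's proof. $R_{\mathrm{base}}$ and $R_{\mathrm{cap}}$ are obtained exactly as you describe, and for $R_{\mathrm{pk}}$ the paper likewise uses $\Delta_{f,i}\le(\Delta_{g,i}+D)/M$ to bound the Stage-I cost $\sum_i\Delta_{f,i}/\max\{\Delta_{g,i},\varepsilon\}^2$ by $K(\varepsilon+D)/(M\varepsilon^2)$ via $(x+D)/\max\{x,\varepsilon\}^2\le(\varepsilon+D)/\varepsilon^2$, then balances against $(T-N)\varepsilon$; it derives that inequality in one line from $\Delta_{g,i}\ge\mu_{g,I^*_f}-\mu_{g,i}\ge M\Delta_{f,i}-D$, with no detour through $I^*_g$, and this single bound makes your two sub-cases unnecessary (note that your sub-case sketch's charge of $\log/\varepsilon^2$ pulls to arms with large $\Delta_{g,i}$ would not give $1/(M\varepsilon)$ per arm, so keep the $\max\{\Delta_{g,i},\varepsilon\}^2$ denominator throughout).
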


\begin{proof}
For each arm \(i\in[\armNum]\), let
$\mu_{\rewFn,i}
=
\expect[\outcome\sim\outcomeDist_i]{\rewFn(\outcome)}$,
$\mu_{\rewFnCommit,i}
=
\expect[\outcome\sim\outcomeDist_i]{\rewFnCommit(\outcome)},
$
and define
$\Deltaf{i}
=
\max_{j\in[\armNum]}\mu_{\rewFn,j}-\mu_{\rewFn,i}$,
$\Deltag{i}
=
\max_{j\in[\armNum]}\mu_{\rewFnCommit,j}-\mu_{\rewFnCommit,i}$.
We begin with \Cref{prop:instance-dependent ub}. Suppressing logarithmic factors, for every feasible target accuracy \(\stopCriteria\in(0,1]\), \Cref{prop:instance-dependent ub} gives
\begin{align*}
\begin{aligned}
    \Reg[]{\switchRound,\timeHorizon\mid\instance}
    \le
    \tildeO\left(
    \sum\nolimits_{i\in\mathcal{S}_{\rewFn}}
    \Deltaf{i}
    \max\left\{
    \frac{1}{\max\{\Deltag{i},\stopCriteria\}^{2}},
    \frac{1}{\Deltaf{i}^{2}}
    \right\}
    +
    (\timeHorizon-\switchRound)
    \max_{i:\Deltag{i}<2\stopCriteria}\Deltag{i}
    \right)~.
\end{aligned}
\end{align*}
The feasibility condition in \Cref{prop:instance-dependent ub} is
$\stopCriteria
\gtrsim
\sqrt{\frac{\armNum}{\switchRound}}$. Then, we convert this instance-dependent bound into the instance-independent bound.

The proof of \Cref{prop:instance-dependent ub} decomposes the experiment-phase regret into the regret from the \(\rewFn\)-elimination process and the
regret from the \(\rewFnCommit\)-elimination process.
For the regret in the \(\rewFn\)-elimination process, the regret is captured by 
$\tildeO\Big(\sum\nolimits_{i\in\mathcal{S}_{\rewFn}}\frac{1}{\Deltaf{i}}\Big)$ part in the above instance-dependent upper bound, which is upper bounded by $\sqrt{\armNum\switchRound}$ via the standard argument.
Then, we focus on the \(\rewFnCommit\)-learning regret.
Let
$
\eta_i
=
\expect[\outcome\sim\outcomeDist_i]{\noisefun(\outcome)},
$
we have
$
\mu_{\rewFnCommit,i}
=
\gradientUB\mu_{\rewFn,i}+\eta_i.
$
Since \(\noisefun(\outcome)\in[-\noiseBound/2,\noiseBound/2]\),
we have
$
\eta_i\in[-\noiseBound/2,\noiseBound/2].
$

Recall that \(\optf\) and \(\optg\) are the optimal arms for \(\rewFn\) and \(\rewFnCommit\),
respectively. Since \(\optg\) maximizes the \(\rewFnCommit\)-mean,
\begin{align*}
\begin{aligned}
\Deltag{i}
&=
\mu_{\rewFnCommit,\optg}-\mu_{\rewFnCommit,i}
\ge
\mu_{\rewFnCommit,\optf}-\mu_{\rewFnCommit,i}
=
\gradientUB(\mu_{\rewFn,\optf}-\mu_{\rewFn,i})
+
\eta_{\optf}-\eta_i
\ge
\gradientUB\Deltaf{i}-\noiseBound.
\end{aligned}
\end{align*}
On the other hand,
\begin{align*}
\begin{aligned}
\Deltag{i}
&=
\gradientUB(\mu_{\rewFn,\optg}-\mu_{\rewFn,i})
+
\eta_{\optg}-\eta_i
\le
\gradientUB(\mu_{\rewFn,\optf}-\mu_{\rewFn,i})
+
\noiseBound
=
\gradientUB\Deltaf{i}+\noiseBound.
\end{aligned}
\end{align*}
Thus,
$
\gradientUB\Deltaf{i}-\noiseBound
\le
\Deltag{i}
\le
\gradientUB\Deltaf{i}+\noiseBound.
$
In particular,
$
\Deltaf{i}
\le
\frac{\Deltag{i}+\noiseBound}{\gradientUB}.
$
Let
$
S_{\rewFnCommit}(\stopCriteria)
=
\sum\nolimits_{i\in\mathcal{S}_{\rewFn}}
\frac{\Deltaf{i}}
{\max\{\Deltag{i},\stopCriteria\}^{2}}.
$
It follows that
\begin{align*}
S_{\rewFnCommit}(\stopCriteria)
\le
\frac{1}{\gradientUB}
\sum\nolimits_{i\in[\armNum]}
\frac{\Deltag{i}+\noiseBound}
{\max\{\Deltag{i},\stopCriteria\}^{2}}.
\end{align*}
For every \(x\ge0\), we have
$\frac{x+\noiseBound}{\max\{x,\stopCriteria\}^{2}}
\le
\frac{\stopCriteria+\noiseBound}{\stopCriteria^2}$.
If \(x\le\stopCriteria\), then
$
\frac{x+\noiseBound}{\max\{x,\stopCriteria\}^{2}}
=
\frac{x+\noiseBound}{\stopCriteria^2}
\le
\frac{\stopCriteria+\noiseBound}{\stopCriteria^2}.
$
On the other hand, if \(x>\stopCriteria\), then
$
\frac{x+\noiseBound}{\max\{x,\stopCriteria\}^{2}}
=
\frac{1}{x}+\frac{\noiseBound}{x^2}
\le
\frac{1}{\stopCriteria}+\frac{\noiseBound}{\stopCriteria^2}
=
\frac{\stopCriteria+\noiseBound}{\stopCriteria^2}.
$
Therefore, we have
$S_{\rewFnCommit}(\stopCriteria)
\le
\frac{\armNum(\stopCriteria+\noiseBound)}{\gradientUB\stopCriteria^2}$.
Combining the experiment-budget and structural bounds,
the reserved \(\rewFnCommit\)-learning contribution is bounded by
$
\tildeO\left(
\min\left\{
\switchRound,\,
\frac{\armNum}{\stopCriteria^2},\,
\frac{\armNum(\stopCriteria+\noiseBound)}{\gradientUB\stopCriteria^2}
\right\}
\right).
$

Finally, we investigate the regret in the commitment phase.
The commitment-phase term in \Cref{prop:instance-dependent ub} satisfies
$(\timeHorizon-\switchRound)
\max_{i:\Deltag{i}<2\stopCriteria}\Deltag{i}
\le
2(\timeHorizon-\switchRound)\stopCriteria$.
Hence, for every feasible \(\stopCriteria\in(0,1]\), we have
\begin{align*}
\begin{aligned}
    \Reg{\switchRound,\timeHorizon}
    \le
    \tildeO\left(
    \sqrt{\armNum\switchRound}
    +
    (\timeHorizon-\switchRound)\stopCriteria
    +
    \min\left\{
    \switchRound,\,
    \frac{\armNum}{\stopCriteria^2},\,
    \frac{\armNum(\stopCriteria+\noiseBound)}{\gradientUB\stopCriteria^2}
    \right\}
    \right).
\end{aligned}
\end{align*}

Now, we can optimize the target accuracy to derive the instance-independent regret bound.
We know that the smallest feasible accuracy is
$
\stopCriteria_{\switchRound}
=
\sqrt{\frac{\armNum}{\switchRound}}.
$
Consider first the branch within the minimum of the above upper bound,
$
(\timeHorizon-\switchRound)\stopCriteria+\frac{\armNum}{\stopCriteria^2}.
$
Balancing its two terms gives
$
\stopCriteria_B
=
\left(\frac{\armNum}{\timeHorizon-\switchRound}\right)^{1/3}.
$
At this accuracy,
$
(\timeHorizon-\switchRound)\stopCriteria_B
=
\frac{\armNum}{\stopCriteria_B^2}
=
\armNum^{1/3}(\timeHorizon-\switchRound)^{2/3}.
$

The other branch is
$
(\timeHorizon-\switchRound)\stopCriteria
+
\frac{\armNum}{\gradientUB\stopCriteria}
+
\frac{\armNum\noiseBound}{\gradientUB\stopCriteria^2},
$
which characterizes the regret under the prior knowledge of the reward shift structure.
Balancing the commitment term with the first exploration term $\frac{\armNum}{\gradientUB\stopCriteria}$ gives
$
\stopCriteria_{\gradientUB}
=
\sqrt{\frac{\armNum}{\gradientUB(\timeHorizon-\switchRound)}}.
$
Balancing the commitment term with the second exploration term $\frac{\armNum\noiseBound}{\gradientUB\stopCriteria^2}$ gives
$
\stopCriteria_{\noiseBound}
=
\left(\frac{\armNum\noiseBound}{\gradientUB(\timeHorizon-\switchRound)}\right)^{1/3}.
$
Accordingly, define
$
\stopCriteria_S
=
\max\{\stopCriteria_{\gradientUB},\stopCriteria_{\noiseBound}\}.
$
This is the optimizer up to universal constants.
Since \(\stopCriteria_S\ge\stopCriteria_{\gradientUB}\), we have
$
\frac{\armNum}{\gradientUB\stopCriteria_S}
\le
(\timeHorizon-\switchRound)\stopCriteria_S,
$
and since \(\stopCriteria_S\ge\stopCriteria_{\noiseBound}\), we have
$
\frac{\armNum\noiseBound}{\gradientUB\stopCriteria_S^2}
\le
(\timeHorizon-\switchRound)\stopCriteria_S.
$
Therefore,
$
(\timeHorizon-\switchRound)\stopCriteria_S
+
\frac{\armNum}{\gradientUB\stopCriteria_S}
+
\frac{\armNum\noiseBound}{\gradientUB\stopCriteria_S^2}
\le
3(\timeHorizon-\switchRound)\stopCriteria_S.
$
Plug in the defition of $\stopCriteria_S$, up to some constants, we have, 
\begin{align*}
(\timeHorizon-\switchRound)\stopCriteria_S
=
\max\left\{
\sqrt{\frac{\armNum(\timeHorizon-\switchRound)}{\gradientUB}},
\left(\frac{\armNum\noiseBound}{\gradientUB}\right)^{1/3}(\timeHorizon-\switchRound)^{2/3}
\right\}.
\end{align*}
Therefore, comparing the regret under $\stopCriteria_B$ and $\stopCriteria_S$ is equivalent, up to some constants, to comparing $\stopCriteria_B$ and $\stopCriteria_S$.
That is, the better accuracy is therefore
$
\min\{\stopCriteria_B,\stopCriteria_S\}.
$
After enforcing the smallest feasible accuracy floor, the optimal target accuracy is
$\stopCriteria_{\mathrm{pk}}
=
\max\left\{
\stopCriteria_{\switchRound},\,
\min\{\stopCriteria_B,\stopCriteria_S\}
\right\}$.

Suppose first that \(\stopCriteria_{\mathrm{pk}}<1\). If
\(\stopCriteria_B\le\stopCriteria_S\), we have the bound from the first branch, then
\begin{align*}
\frac{\armNum}{\stopCriteria_{\mathrm{pk}}^2}
\le
\frac{\armNum}{\stopCriteria_B^2}
=
(\timeHorizon-\switchRound)\stopCriteria_B
\le
(\timeHorizon-\switchRound)\stopCriteria_{\mathrm{pk}}.
\end{align*}
If \(\stopCriteria_S<\stopCriteria_B\), we have the bound from the second branch, then 
\begin{align*}
\frac{\armNum}{\gradientUB\stopCriteria_{\mathrm{pk}}}
+
\frac{\armNum\noiseBound}{\gradientUB\stopCriteria_{\mathrm{pk}}^2}
\le
2(\timeHorizon-\switchRound)\stopCriteria_S
\le
2(\timeHorizon-\switchRound)\stopCriteria_{\mathrm{pk}}.
\end{align*}
Thus, in either case, we have 
$\Reg{\switchRound,\timeHorizon}
\le
\tildeO\left(
\sqrt{\armNum\switchRound}
+
(\timeHorizon-\switchRound)\stopCriteria_{\mathrm{pk}}
\right)$.
Using the definitions of the accuracy scales,
\begin{align*}
\begin{aligned}
    (\timeHorizon-\switchRound)\stopCriteria_{\mathrm{pk}}
    =
    \max\left\{
    (\timeHorizon-\switchRound)\sqrt{\frac{\armNum}{\switchRound}},
    \min\left[
    \armNum^{1/3}(\timeHorizon-\switchRound)^{2/3},
    \max\left\{
    \sqrt{\frac{\armNum(\timeHorizon-\switchRound)}{\gradientUB}},
    \left(\frac{\armNum\noiseBound}{\gradientUB}\right)^{1/3}(\timeHorizon-\switchRound)^{2/3}
    \right\}
    \right]
    \right\}.
\end{aligned}
\end{align*}
This proves the bound \(R_{\mathrm{pk}}\) whenever
\(\stopCriteria_{\mathrm{pk}}<1\).

If \(\stopCriteria_{\mathrm{pk}}\ge1\), {\RAEC} can instead take no reserved
\(\rewFnCommit\)-elimination epochs. The experiment phase then consists only of \(\rewFn\)-learning and incurs
$
\tildeO(\sqrt{\armNum\switchRound})
$
regret. Moreover, the structural gap inequality gives
$\Deltag{i}
\le
\gradientUB\Deltaf{i}+\noiseBound
\le
\gradientUB+\noiseBound$.
Since \(\rewFnCommit\in[0,1]\), every \(\rewFnCommit\)-gap is also bounded by one. Arbitrary
commitment therefore incurs at most
$
(\timeHorizon-\switchRound)\min\{\gradientUB+\noiseBound,1\}.
$
This proves \(R_{\mathrm{cap}}\).

Notice also that, when \(\stopCriteria_{\mathrm{pk}}\ge1\),
\begin{align*}
R_{\mathrm{cap}}
\le
\sqrt{\armNum\switchRound}+(\timeHorizon-\switchRound)
\le
\sqrt{\armNum\switchRound}+(\timeHorizon-\switchRound)\stopCriteria_{\mathrm{pk}}
=
R_{\mathrm{pk}}.
\end{align*}
Consequently, including \(R_{\mathrm{pk}}\) in the outer minimum remains
valid even when its associated target accuracy exceeds one.
Finally, {\RAEC} may ignore the structural information and use
$
\stopCriteria_{\mathrm{base}}
=
\max\{\stopCriteria_{\switchRound},\stopCriteria_B,\stopCriteria_{\timeHorizon}\}$,
$
\stopCriteria_{\timeHorizon}
=
\frac{\sqrt{\armNum\timeHorizon}}{\timeHorizon-\switchRound}.
$
\Cref{thm:upper bound} then gives \(R_{\mathrm{base}}\). 
Therefore, {\RAEC} should be tuned to take the minimum regret among $R_{\mathrm{base}}$, $R_{\mathrm{cap}}$, and $R_{\mathrm{pk}}$.
We complete the proof.
\end{proof}

In particular, if \(\gradientUB>0\) and \(\noiseBound=0\), then the transformation is purely
ranking-preserving, \(\stopCriteria_{\noiseBound}=0\), and
$
\stopCriteria_S=\stopCriteria_{\gradientUB}
=
\sqrt{\frac{\armNum}{\gradientUB(\timeHorizon-\switchRound)}}.
$
In this case,
\begin{align*}
\begin{aligned}
&\Reg{\switchRound,\timeHorizon}\\
&\le
\tildeO\Bigg(
\min\Bigg\{
R_{\mathrm{base}},
\sqrt{\armNum\switchRound}+(\timeHorizon-\switchRound)\min\{\gradientUB,1\},
\\
& 
\qquad\sqrt{\armNum\switchRound}
+
\max\left\{
(\timeHorizon-\switchRound)\sqrt{\frac{\armNum}{\switchRound}},
\min\left[
\armNum^{1/3}(\timeHorizon-\switchRound)^{2/3},
\sqrt{\frac{\armNum(\timeHorizon-\switchRound)}{\gradientUB}}
\right]
\right\}
\Bigg\}
\Bigg)\\
&\leq
\tildeO\Bigg(
\min\Bigg\{
R_{\mathrm{base}},
\sqrt{\armNum\switchRound}+(\timeHorizon-\switchRound)\min\{\gradientUB,1\},
\sqrt{\armNum\switchRound}
+
\max\left\{
(\timeHorizon-\switchRound)\sqrt{\frac{\armNum}{\switchRound}},
\sqrt{\frac{\armNum(\timeHorizon-\switchRound)}{\gradientUB}}
\right\}
\Bigg\}
\Bigg).
\end{aligned}
\end{align*}
Therefore, for constant \(\gradientUB>0\) and \(\noiseBound=0\), we get
\begin{align*}
\Reg{\switchRound,\timeHorizon}
\le
\left\{
\begin{aligned}
&\tildeO\left(
\sqrt{\armNum\switchRound}+(\timeHorizon-\switchRound)\sqrt{\frac{\armNum}{\switchRound}}
\right)~,~
&\text{for }
\switchRound\leq\gradientUB (\timeHorizon-\switchRound),
\\
&\tildeO\left(
\sqrt{\armNum\switchRound}+\sqrt{\armNum(\timeHorizon-\switchRound)}
\right)~,~
&\text{for }
\switchRound>\gradientUB (\timeHorizon-\switchRound).
\end{aligned}
\right.
\end{align*}
For example, in the balanced regime with $\switchRound=\frac{\gradientUB}{1+\gradientUB}\cdot\timeHorizon$, the upper bound becomes $\tildeO\left(\sqrt{\armNum\timeHorizon}\right)$, which becomes the familiar bound in the standard multi-armed bandit problem.

When constant \(\noiseBound>0\) and \(\gradientUB=0\), since \(\noisefun(\outcome)\in[-\noiseBound/2,\noiseBound/2]\), for every pair of arms \(i,j\),
$
|\mu_{\rewFnCommit,i}-\mu_{\rewFnCommit,j}|
\le
\noiseBound.
$
This case is basically the same as the unstructured case; therefore, the regret bound is the same as $R_{\mathrm{base}}$.

\subsection{Missing Algorithms and Proofs in \Cref{subsec:concave}}
\label{apx:missing algo and proof concave}

\begin{algorithm}[H]
\caption{Reserved Online Stochastic Convex Optimization for Commitment ({\ROSCOC})}
\label{algo:roscoc}
\begin{algorithmic}[1]
        \State \textbf{Input:} A set of arms $\{1,2,\ldots,\armNum\}$, $\switchRound$, $\timeHorizon$, 
        $\tau$;
        \State \textbf{Initialization:} Set $\history_0=\emptyset$, $\elimiCriteria_1=1/2$, $\armf{1}=[\armNum]$.
        \State 
        {\em Whenever $\switchRound$ rounds are exhausted in Stage I or II, the algorithm enters the Commitment Stage.}
\For{$t = 1,\ldots,\tau$}
        \hfill  
        \Comment{Stage I: Reserved online stochastic convex optimization for $\rewFnCommit$}
\State Implement \textit{Online Stochastic Convex Optimization} algorithm for function $\rewFnCommit$.
\State Record the execution path $\history_t=\history_{t-1}\cup\{\pickArm_t\}$.
\EndFor
        \For{$\epoch = 1,2,\ldots$}
        \hfill  
        \Comment{{\color{blue} Stage II: Arm eliminations for reward function $\rewFn$}} 
        \State 
        \parbox[t]{\dimexpr\linewidth-\algorithmicindent}{%
        Sample each arm in $\armf{\epoch}$ until the total number of times it has been chosen is $\roundlen{\rewFn}{\epoch}$ times or reaches the experiment cap $\switchRound$ and stops.}
        \State 
        \parbox[t]{\dimexpr\linewidth-\algorithmicindent}{%
        At the end of epoch $\epoch$, for each arm $i\in[\armNum]$, compute the empirical average reward $\empiricalmeanf{i,\epoch}$ for reward function $\rewFn$.}
        \State 
        Update 
        $\armf{\epoch+1}\gets\left\{i\in\armf{\epoch}:\max_{j\in\armf{\epoch}}\empiricalmeanf{j, \epoch} -\empiricalmeanf{i, \epoch}\leq \elimiCriteria_\epoch\right\}$.
        \State Set $\elimiCriteria_{\epoch+1}\gets\elimiCriteria_\epoch/2$.
        \EndFor
\State Uniformly sample $\timeHorizon-\switchRound$ elements from $\history_{\tau}$ with replacement.
Denote the sample set as sequence $S=[\pickArm_1,\ldots,\pickArm_{\timeHorizon-\switchRound}]$.
        \hfill 
        \Comment{{\color{blue} Commitment Stage: Execution-history-induced portfolio}}
\For{$t = \switchRound+1,\ldots,\timeHorizon$}
\State Pull the $(t-\switchRound)-$th arm in $S$.
\EndFor
\end{algorithmic}
\end{algorithm}
\ppedit{In Stage I of the above algorithm, we use the Fenchel-dual bandit algorithm in Algorithm 3 of \cite{AD-2014}, specialized to the Euclidean norm and a finite set of arms.}
\begin{algorithm}[H]
\caption{Online Stochastic Convex Optimization}
\label{alg:oco}
\begin{algorithmic}[1]
\State \textbf{Initialization:} Choose $x_1\in\mathcal{X}:=\{x:\lVert x\rVert_*\leq \lipschitzConst\}$, e.g., $x_1=0$.
\State \algcomment{Below $\operatorname{sign}(x_t)$ is the vector of coordinate-wise signs. Following \cite{AD-2014}, $n_{t,i}$ is one plus the number of pulls of arm $i$ before round $t$.}
\State For every arm $i\in[\armNum]$, set $n_{1,i}=1$, $\bar{\outcome}_{1,i}=0$, and
$\tilde{\outcome}_{1,i}=\bar{\outcome}_{1,i}-\operatorname{sign}(x_1)\cdot\sqrt{\ln(\tau)/n_{1,i}}$.
\For{$t=1,\ldots,\tau$}
\State $\pickArm_t=\argmax_{i\in[\armNum]}\rewFnCommit^*(x_t)-x_t^T\tilde{\outcome}_{t,i}$. Here, $\rewFnCommit^*(x)\equiv\max_y x^Ty+\rewFnCommit(y)$, is the Fenchel dual of $\rewFnCommit$.
\State Pull arm $\pickArm_t$ and observe outcome $\outcome_{t,\pickArm_t}$.
\State Update $n_{t+1,i}=n_{t,i}+\mathbf{1}\{i=\pickArm_t\}$ and
$\bar{\outcome}_{t+1,i}=\frac{n_{t,i}\bar{\outcome}_{t,i}+\mathbf{1}\{i=\pickArm_t\}\outcome_{t,\pickArm_t}}{n_{t+1,i}}$ for all $i\in[\armNum]$.
\State \ppedit{Update $x_{t+1}$ by applying \textit{Online Gradient Descent} (OGD) to $h_t(x)=\rewFnCommit^*(x)-x^T\tilde{\outcome}_{t,\pickArm_t}$ for one step starting from $x_t$.}
\State Update $\tilde{\outcome}_{t+1, i}=\bar{\outcome}_{t+1,i}-\operatorname{sign}(x_{t+1})\cdot\sqrt{\frac{\ln(\tau)}{n_{t+1,i}}}$ for all $i\in[\armNum]$.
\EndFor
\end{algorithmic}
\end{algorithm}

\thmregretconcave*

\ppedit{
\begin{proof}[Proof for \Cref{thm:regret concave}]
We first state precisely the result that we use from \cite{AD-2014}. The Stage I subroutine in \Cref{alg:oco} is the Euclidean specialization of their Algorithm 3, including its optimistic confidence vector in the OCO loss. Applying their Theorem 5.1 with $m=\armNum$, $p=q=2$, $\gamma=O(\log(\tau))$, and $\lVert\boldsymbol{1}_{\feaDimen}\rVert_2=\sqrt{\feaDimen}$ gives
\begin{equation}
\tau\left(\OPT_{\rewFnCommit}-\expect{\rewFnCommit\left(\frac{\sum_{t=1}^{\tau}\outcome_{t,\pickArm_t}}{\tau}\right)}\right)
\leq O\left(\lipschitzConst\sqrt{\armNum\feaDimen\tau\log(\tau)}+R_c(\tau)\right).
\label{eqn:concave_commit0}
\end{equation}
The OGD specialization immediately following Theorem 5.1 of \cite{AD-2014} gives $R_c(\tau)=\widetilde{O}(GD\sqrt{\tau})$. In our Euclidean setting, $G\leq\sqrt{\feaDimen}$ and the diameter of the dual domain is at most a constant multiple of $\lipschitzConst$, so
\begin{equation}
\tau\left(\OPT_{\rewFnCommit}-\expect{\rewFnCommit\left(\frac{\sum_{t=1}^{\tau}\outcome_{t,\pickArm_t}}{\tau}\right)}\right)
\leq\widetilde{O}\left(\lipschitzConst\sqrt{\armNum\feaDimen\tau}+\lipschitzConst\sqrt{\feaDimen\tau}\right).
\label{eqn:concave_commit_AD}
\end{equation}
We next relate the Stage I execution path to its induced portfolio. Let $\mathcal{F}_{t-1}$ be the history and algorithmic randomization available before arm $\pickArm_t$ is selected. Then
\begin{align*}
S_{\tau}=\sum_{t=1}^{\tau}\left(\outcome_{t,\pickArm_t}-\mathbb{E}[\outcome_{t,\pickArm_t}\mid\mathcal{F}_{t-1}]\right)
\end{align*}
is a vector martingale with bounded differences. Pinelis' inequality and the Lipschitz property therefore give
\begin{align}
\left|\expect{\rewFnCommit\left(\frac{\sum_{t=1}^{\tau}\outcome_{t,\pickArm_t}}{\tau}\right)}-\expect{\rewFnCommit\left(\frac{\sum_{t=1}^{\tau}\mathbb{E}[\outcome_{t,\pickArm_t}\mid\mathcal{F}_{t-1}]}{\tau}\right)}\right|
\leq\widetilde{O}\left(\lipschitzConst\sqrt{\frac{\feaDimen}{\tau}}\right).
\label{eqn:concave_commit1}
\end{align}
For the commitment phase, the arms are sampled uniformly with replacement from the Stage I execution path. Conditional on $\mathcal{F}_{\tau}$, these selected arm indices are i.i.d., the subsequent outcomes are conditionally independent, and every outcome has conditional mean equal to the average of the arm means along the Stage I path. Applying Pinelis' inequality conditionally and then averaging over $\mathcal{F}_{\tau}$ yields
\begin{align}
\expect{\rewFnCommit\left(\frac{\sum_{t=1}^{\tau}\mathbb{E}[\outcome_{t,\pickArm_t}\mid\mathcal{F}_{t-1}]}{\tau}\right)}-\expect{\rewFnCommit\left(\frac{\sum_{t=\switchRound+1}^{\timeHorizon}\outcome_{t,\pickArm_t}}{\timeHorizon-\switchRound}\right)}
\leq\widetilde{O}\left(\lipschitzConst\sqrt{\frac{\feaDimen}{\timeHorizon-\switchRound}}\right).
\label{eqn:concave_commit2}
\end{align}
Stage I incurs at most $\tau$ experiment-phase regret, while Stage II incurs $O(\sqrt{\armNum(\switchRound-\tau)\log(\switchRound-\tau)})$. Combining these two bounds with \eqref{eqn:concave_commit_AD}--\eqref{eqn:concave_commit2}, we obtain
\begin{align}
\Reg{\switchRound,\timeHorizon}
\leq\widetilde{O}\Bigg(&\tau+\sqrt{\armNum(\switchRound-\tau)}
+(\timeHorizon-\switchRound)\left(\lipschitzConst\sqrt{\frac{\armNum\feaDimen}{\tau}}+\lipschitzConst\sqrt{\frac{\feaDimen}{\tau}}+\lipschitzConst\sqrt{\frac{\feaDimen}{\timeHorizon-\switchRound}}\right)\Bigg).
\label{eqn:regret concave}
\end{align}
Setting
\begin{align*}
\tau=\min\left\{\switchRound,\lipschitzConst^{\sfrac{2}{3}}\armNum^{\sfrac{1}{3}}\feaDimen^{\sfrac{1}{3}}(\timeHorizon-\switchRound)^{\sfrac{2}{3}}\log(\timeHorizon-\switchRound)^{\sfrac{1}{3}}\right\}
\end{align*}
and considering whether the minimum is attained by $\switchRound$ gives
\begin{align*}
\Reg{\switchRound,\timeHorizon}
\leq\widetilde{O}\left(\timeHorizon\cdot\lipschitzConst\sqrt{\frac{\armNum\feaDimen}{\switchRound}}+\sqrt{\armNum\switchRound}+\lipschitzConst^{\sfrac{2}{3}}\armNum^{\sfrac{1}{3}}\feaDimen^{\sfrac{1}{3}}(\timeHorizon-\switchRound)^{\sfrac{2}{3}}\right),
\end{align*}
which proves the stated upper bound.
Finally, the lower bound applies to portfolio policies by restricting to $\mathcal{E}_{\mathrm{coord}}$ and the linear commitment objective $\rewFnCommit(x,y)=y$. For any committed portfolio $\boldsymbol{p}$, sampling one arm from $\boldsymbol{p}$ and committing to that arm gives the same expected reward $\sum_i p_i\mu_{\rewFnCommit,i}$. Thus, any portfolio policy with smaller worst-case expected regret would induce a single-arm policy contradicting \Cref{thm:instance independent lower bound}.
\end{proof}
}

\end{document}